%% file: main.tex
\documentclass[twoside,leqno,twocolumn]{article}

\usepackage[letterpaper]{geometry}
\PassOptionsToPackage{ruled}{algorithm}
\usepackage{siamproceedings}

\usepackage[T1]{fontenc}
\usepackage{amsfonts}
\usepackage{graphicx}
\usepackage{epstopdf}
\usepackage{enumitem}
\usepackage{booktabs}
\usepackage{needspace}
\usepackage{wrapfig}
\usepackage{makecell}
\usepackage{xcolor}
\usepackage{thm-restate}
\usepackage[noend]{algorithmic}
\usepackage{bm}
\usepackage{amsopn}
\usepackage{setspace}
\usepackage{subcaption}
\usepackage{tikz}
\usepackage{tikz-3dplot}
\usepackage{listings}
\usepackage{xspace}
\usepackage{adjustbox}
\usepackage{dblfloatfix}

\definecolor{Emerald}{HTML}{00A99D}
\definecolor{hhublue}{RGB}{1, 106, 179}
\colorlet{lighthhublue}{hhublue!75}

\hypersetup{hidelinks}
\crefname{figure}{Fig.}{Figs.}
\crefname{algorithm}{Alg.}{Alg.}
\crefname{section}{Sec.}{Sec.}
\crefname{appendix}{App.}{App.}

\ifpdf
  \DeclareGraphicsExtensions{.eps,.pdf,.png,.jpg}
\else
  \DeclareGraphicsExtensions{.eps}
\fi

\newsiamthm{problem}{Problem}

\providecommand{\titlerunning}[1]{}

\input{macros}

\newcommand{\geetal}{Ge et al.}
\newcommand{\thompsonmerrifield}{Thompson and Merrifield}

\begin{document}

\onecolumn
\title{\Large Connected Subspace Clustering: Hardness, a Scalable Heuristic, and an Application to Sea Level Geodesy}
\titlerunning{Connected Subspace Clustering}
\author{}
\date{}

\author{Johanna Hillebrand\thanks{Heinrich Heine University Düsseldorf, Universitätsstraße 1, 40225 Düsseldorf, Germany (\email{johanna.hillebrand@hhu.de}).}
\and Jan Höckendorff\thanks{University of Cologne, Albertus-Magnus-Platz, 50923 Köln, Germany (\email{hoeckendorff@cs.uni-koeln.de}).}
\and Jürgen Kusche\thanks{University of Bonn, Regina-Pacis-Weg 3, 53115 Bonn, Germany.}
\and Kelin Luo\thanks{University at Buffalo, 12 Capen Hall, Buffalo, New York 14260-1660, USA (\email{kelinluo@buffalo.edu}).}
\and Heiko Röglin\footnotemark[3]
\and Melanie Schmidt\footnotemark[1]
\and Christian Sohler\footnotemark[2]
\and Bernd Uebbing\footnotemark[3]}

\maketitle
\begin{center}
\begin{minipage}{0.7\textwidth}
  \input{abstract}

\end{minipage}
\end{center}

\twocolumn

\input{introduction}
\input{Connected_subspace_clustering}

\input{practical-algorithm-for-connected-subspace-clustering}
\input{conclusion}
\newpage

\bibliographystyle{siamplain}
\bibliography{biblio}
\newpage
%%%%%%%%%%%%%%%%%%%%%%%%%%%%%%%%%%%%%%%%%%%%%%%%%%%%%%%%%%%%%%
\appendix
\section{Theory}
\input{appendix/appendix-related-work}
\input{appendix/appendix-remaining-proof}

\section{Application}
\input{appendix/appendix-data-set-choice}

\input{appendix/appendix_eofs_andPCs}
\input{appendix/appendix_geodetic-analysis}

\section{Implementation}
\input{appendix/appendix_accompanying_pseudo_code}
\input{appendix/appendix-merging}

\section{Experiments}
\input{appendix/appendix_experiments}

\input{appendix/results_table}

\end{document}

%% file: macros.tex
\def \exp{{\mathrm exp}}

\DeclareMathOperator*{\argmin}{arg\,min}

%% file: abstract.tex
\begin{abstract}
Constrained optimization extends classical optimization by integrating side information, making it widely applicable across scientific and engineering domains. 
Consider a setting where we are measuring a set of variables at different physical locations. When grouping these measurements, it is natural to try to satisfy two different objectives. 
On the one hand, we want our groups (clusters) to contain similar measurements. 
On the other hand, we want to have physically coherent clusters. 
Thus, we have a constrained clustering problem where the constraint models the coherence. Motivated by an application in geodesy, where contiguous regions of the sea surface must be identified for subsequent principal component analysis of sea level anomaly data, we introduce the \emph{Connected Subspace Clustering} problem. Given high-dimensional points and a connectivity graph on them, the objective is to partition the points into $k$ connected clusters while minimizing their total squared distance to the clusters' best-fit $m'$-dimensional affine subspaces.
We prove that, even for $m' = 0$ and when the connectivity graph is a grid graph with holes, this problem is NP-hard to approximate within a factor of $\Omega(n^{1/2-\varepsilon})$ for every $\varepsilon>0$, where $n$ is the number of measurements in the input.
We then introduce an efficient Lloyd-style heuristic that alternates subspace fitting with an iterative component-merging procedure to enforce connectivity.
Our method returns exactly $k$ connected regions by construction, whereas unconstrained subspace clustering methods leave up to $1{,}966$ disconnected fragments and substantially higher reconstruction cost. In an experimental study of 160 configurations on global sea level time series, our merging-based connectivity repair is the strongest of four strategies in $73.75\%$ of configurations, and consistently outperforms competitors such as (connected) Ward's method across all tested cluster counts.
The resulting regions isolate signals that align with established climate indices such as those describing the El Niño--Southern Oscillation and Indian Ocean Dipole.
Although developed for sea level geodesy, the approach applies in principle to other spatially embedded multivariate time series, such as climate fields, remote sensing imagery, neuro-imaging, and sensor networks. 
\end{abstract}

%% file: introduction.tex
\section{Introduction}

Clustering is a widely used machine learning tool, applied in fields as diverse as medical imaging and astronomy. In practice, additional domain-specific structure often imposes constraints on how clusters can be formed. 
Consequently, various forms of constrained clustering, such as capacitated or fairness-aware clustering, have been discussed extensively in the recent literature. 

\begin{figure*}[t!]
  \centering
  \begin{tikzpicture}
     \node [label=below:{(a) Greedy $k$-means++}] at (0,0) {\includegraphics[width=0.49\textwidth]{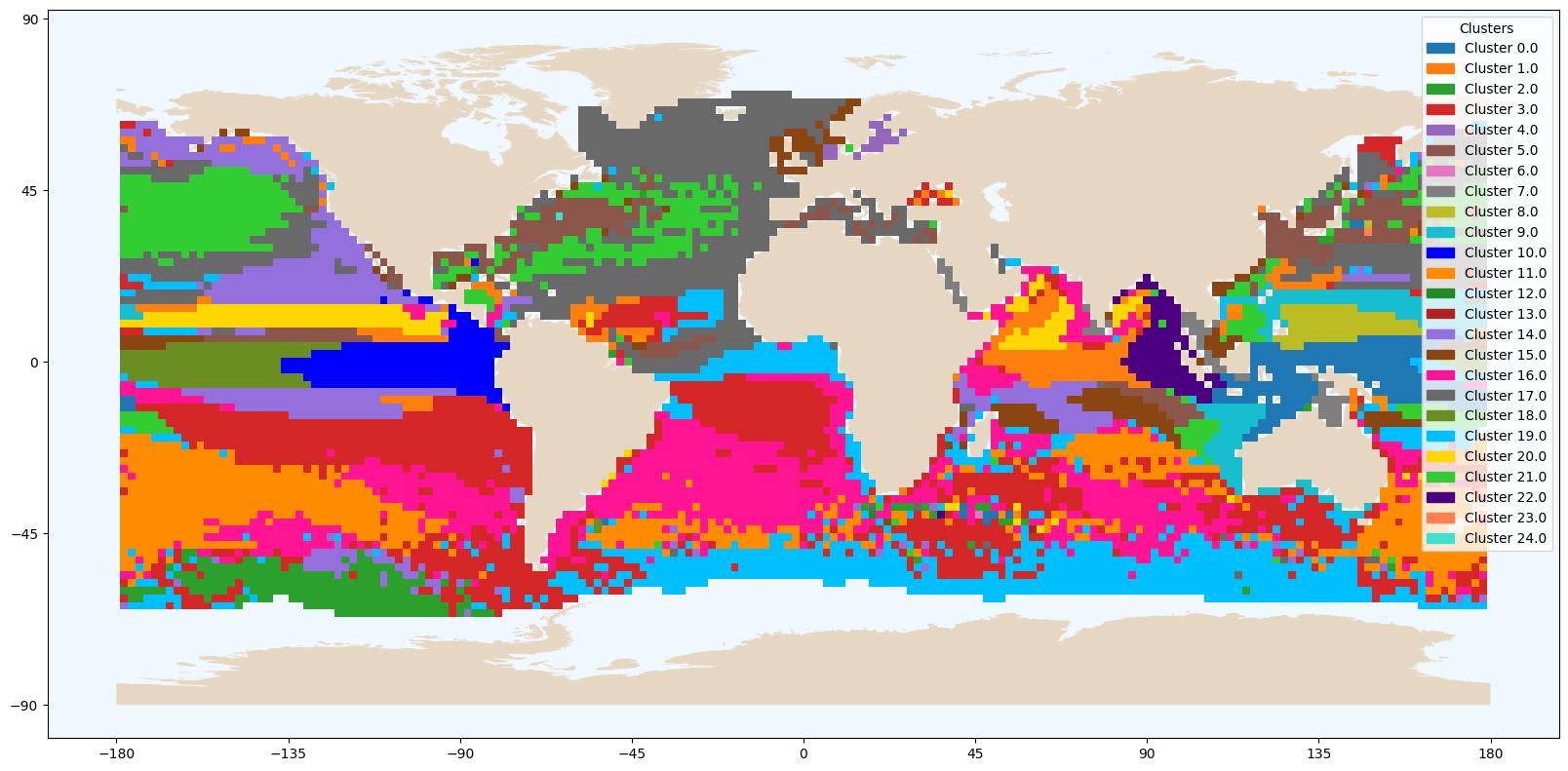}};
     \node [label=below:{(b) Ward's method}]at (8.55,0) {\includegraphics[width=0.49\textwidth]{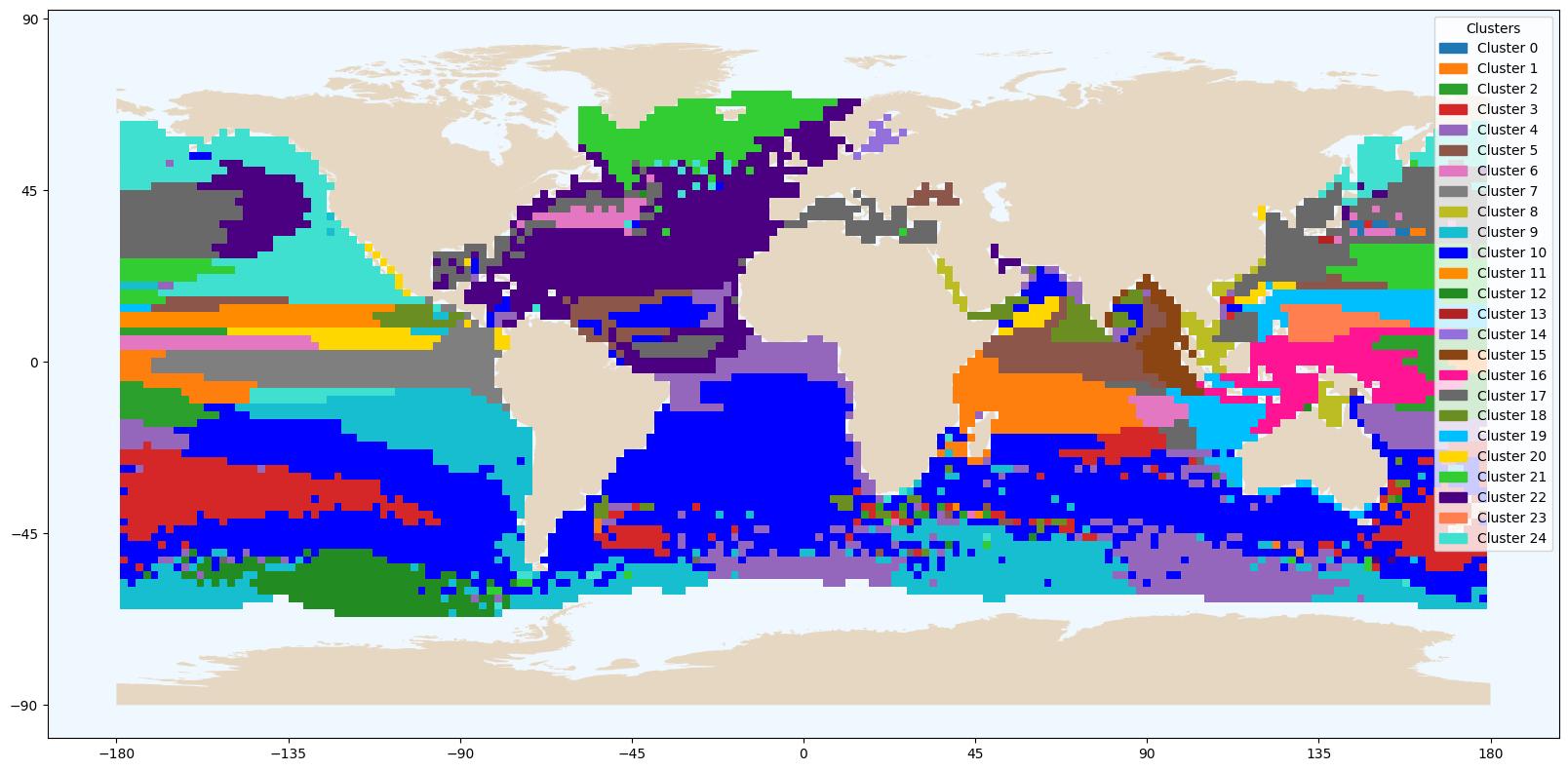}};
     \node [label=below:{(c) SSC-OMP}] at (0,-5)  {\includegraphics[width=0.49\textwidth]{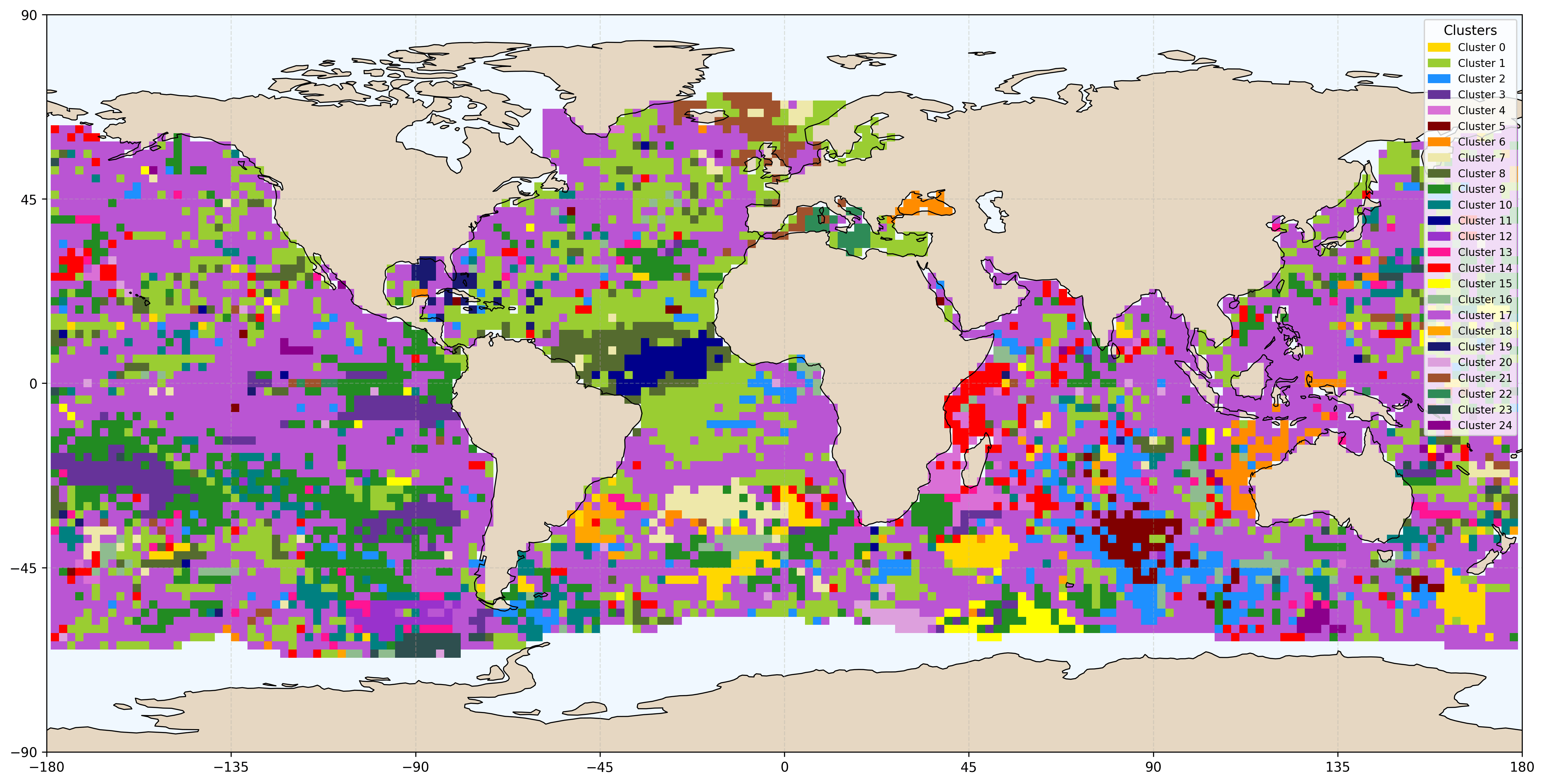}};
     \node [label=below:{(d) EGCSC}]at (8.55,-5){\includegraphics[width=0.49\textwidth]{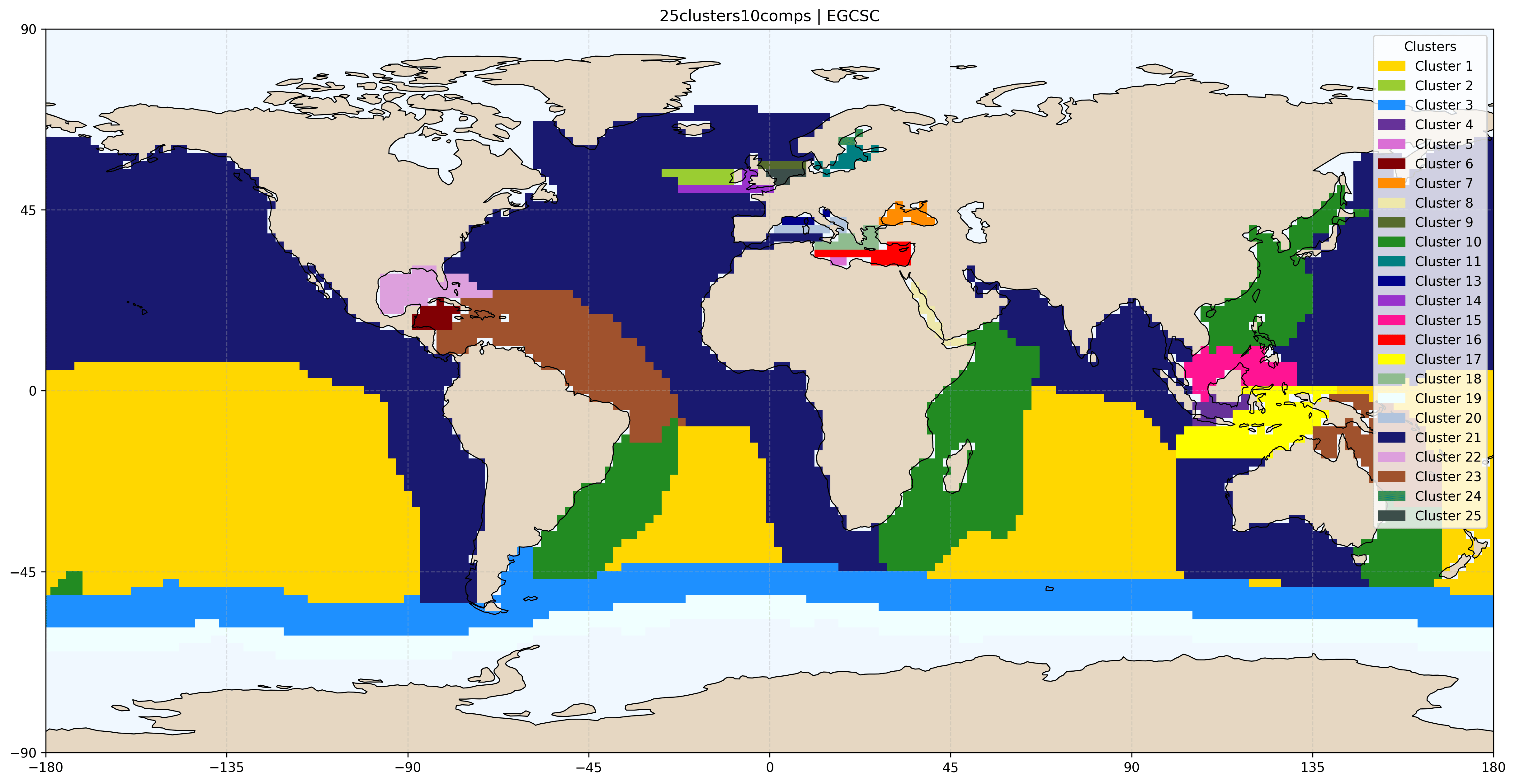}};
     \node [label=below:{(e) Connected Ward's method}]at (0,-10) {\includegraphics[width=0.49\textwidth]{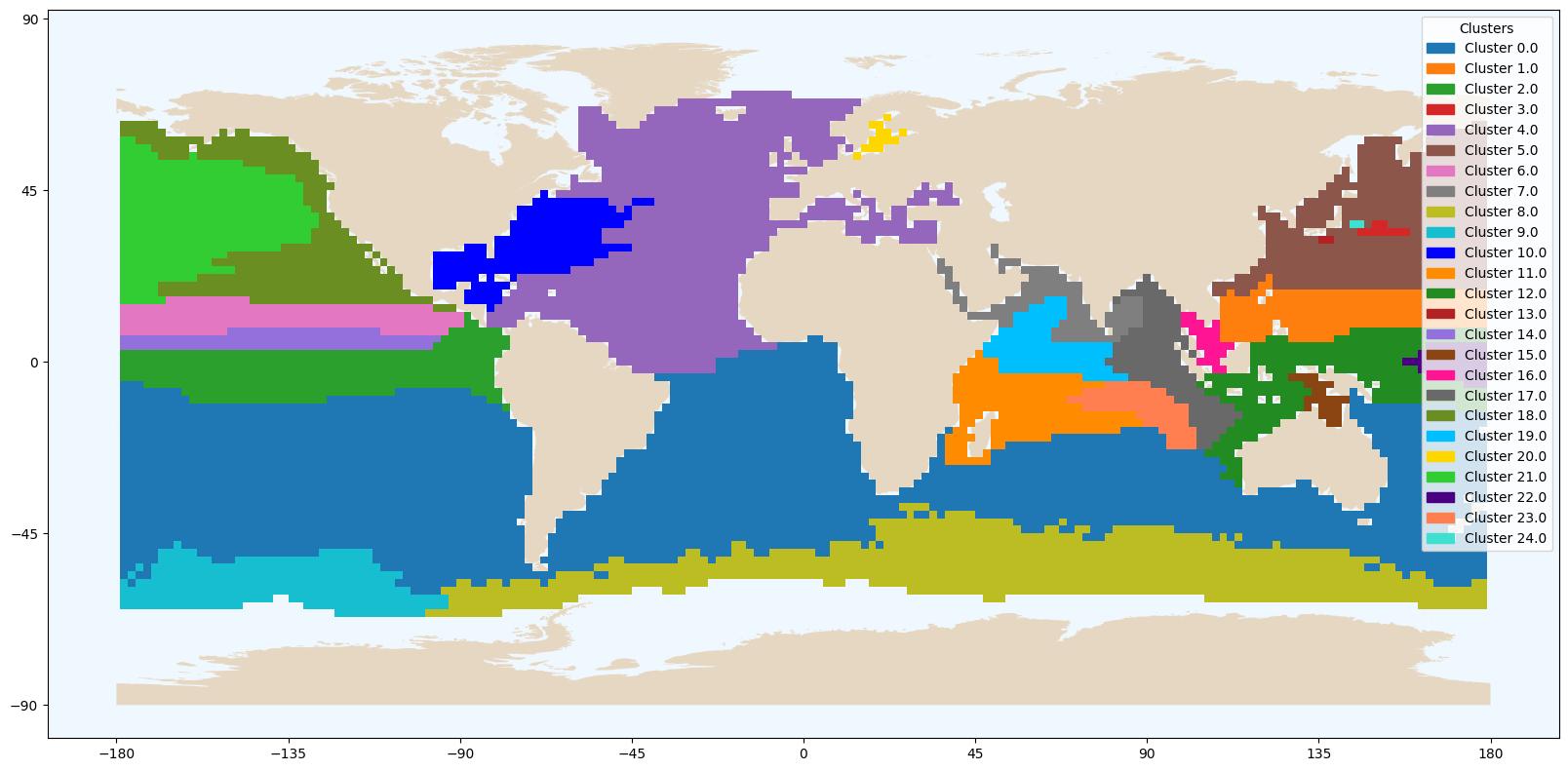}};     
     \node [label=below:{(f) Connected Subspace Clustering}]at (8.55,-10) {\includegraphics[width=0.49\textwidth]{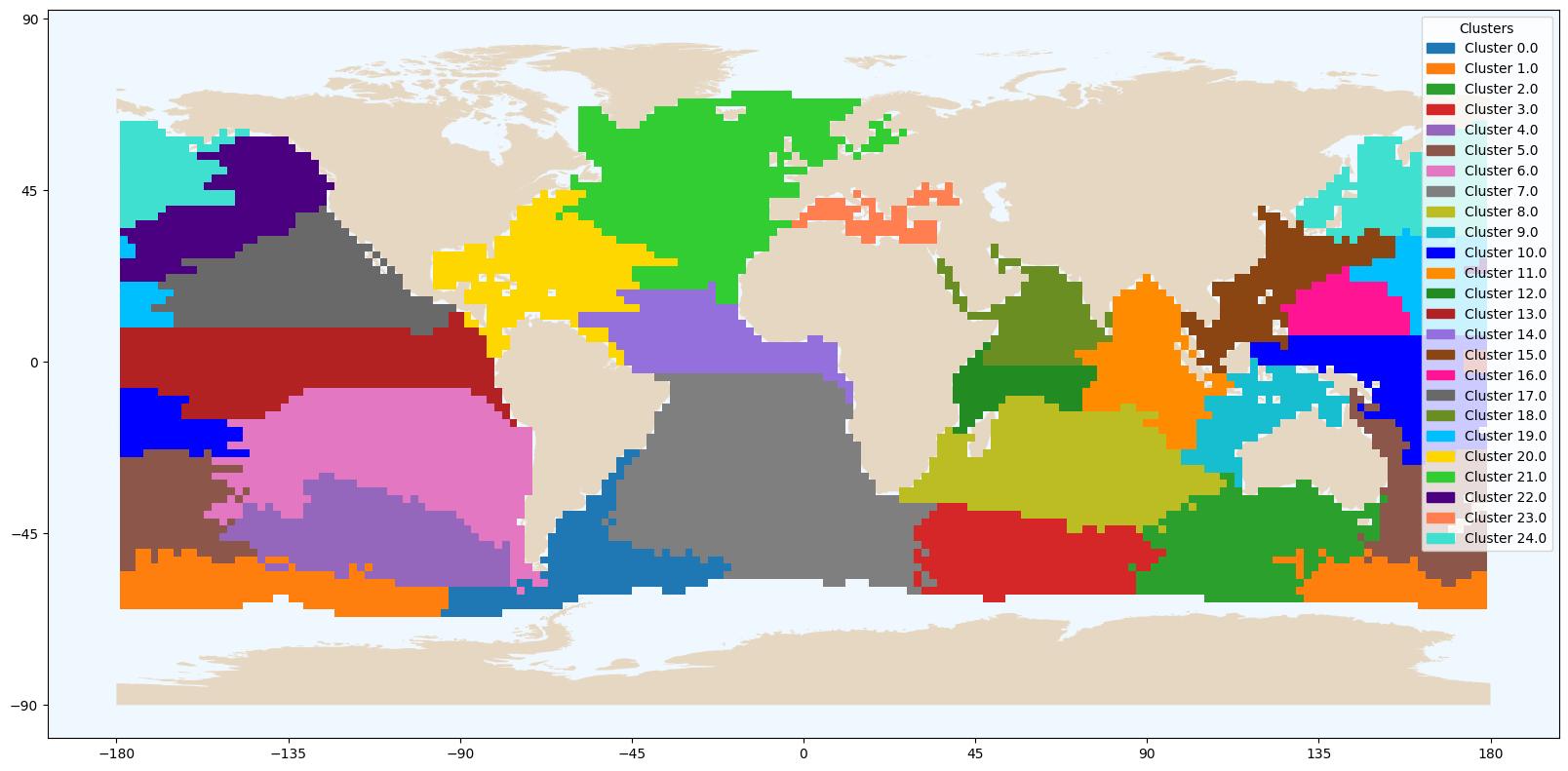}} ;\end{tikzpicture}
  \caption{Clustering of the sea level anomaly dataset for $k=25$. Unconstrained clustering and subspace clustering methods produce spatially fragmented clusters (a-d), whereas connectivity-constrained methods (e-f) produce spatially coherent clusters. Connected Subspace Clustering (f) produces spatially coherent and interpretable clusters, while also achieving the lowest subspace-clustering cost among the displayed methods.} 
  \label{fig:exampleclusterings}
\end{figure*}

Connectivity-constrained clustering naturally arises when handling spatial datasets, which are common in many fields including remote sensing, neuro-imaging and sensor networks, and it can be described as the requirement of identifying contiguous regions with coherent dynamics. More formally, it is defined as a family of constrained clustering problems where a \emph{connectivity graph $G=(V,E)$} is given in addition to the input (the standard input contains the points, the target number of clusters, and the metric) \cite{GEGHBB08}. The connectivity graph is unweighted and undirected. The constraint is that for every cluster, the induced subgraph in $G$ has to be a connected graph. The graph has no guaranteed correspondence with the metric, so points $u$, $v$ that share an edge $\{u,v\}$ can still be very far away from each other with respect to the metric.
This sort of constraint has a long history. In a survey from 1985, Murtagh~\cite{murtagh85} reviews applications from social sciences, earth sciences, pattern recognition and image processing and describes \emph{contiguity-constrained} agglomerative clustering methods.
The idea was rediscovered in 2008 by Ge et al.~\cite{GEGHBB08} under the name \emph{connected clustering}, in the context of community detection which spurred subsequent work on connectivity-constrained clustering \cite{DELRRSW24,ELRRS25,gupta_clustering_2011}. 

Another important line of work is subspace clustering which addresses the fact that high-dimensional data often contains meaningful patterns that exist only within specific subsets of dimensions. In particular, with modern sensing technologies, the number of measured dimensions grows quickly. Traditional clustering methods may fail to detect these patterns due to the \textit{curse of dimensionality} \cite{kriegel2009clustering,sim2013survey}. Restricting attention to fewer, more relevant directions improves accuracy and interpretability in complex datasets. Thus, subspace clustering has become especially valuable in fields like bioinformatics, image analysis, motion segmentation and market research. Formally, it is defined as follows: given points $P\subset \mathbb{R}^m$, a number $k$ and a target dimension $m'$, find $k$ affine subspaces of dimension $m'$ that minimize the sum of the squared distances of points to their closest subspace.
Representative approaches include CLIQUE, using grid-based dense units \cite{agrawal1998automatic}, PROCLUS, using projected $k$-medoids with per-cluster relevant dimensions \cite{aggarwal1999fast}, SUBCLU a DBSCAN-style algorithm, which uses density-based clusters in subspaces defined via density-reachability \cite{kailing2004density} and numerous other methods \cite{elhamifar2013sparse}.

When datasets are both structured and high-dimensional, the ideas of subspace clustering and connectivity naturally meet and thus, we introduce \emph{Connected Subspace Clustering}, a problem that combines the graph-topological constraint of connectivity with the statistical objective of minimizing the subspace reconstruction error in one optimization problem. 
\subsection{Our Contributions}

\emph{(i)} We introduce \emph{Connected $m'$-Subspace $k$-Clustering}, which combines affine-subspace reconstruction error with connectivity in a given graph. \\
\emph{(ii)} We prove that, even for $m'=0$ and grid graphs with holes, the
problem is NP-hard to approximate within a factor of
$\Omega(n^{1/2-\varepsilon})$ for every $\varepsilon>0$. \\
\emph{(iii)} We design a generic (heuristic) merging subroutine that converts fragmented cluster assignments into connected regions and can be
incorporated into Lloyd-style $k$-subspaces algorithms. On
bounded-degree connectivity graphs, the routine runs in
$O(n(k+\log n))$ time. \\
\emph{(iv)} We provide extensive experimental evaluation for an important application in sea level geodesy, comparing our algorithm to methods from the areas of classical clustering, subspace clustering and hyperspectral imaging.

\subsection{Application}\label{sec:application}
The motivation for our clustering problem stems from sea level geodesy, in particular from the analysis of climate-driven sea level variability.
Contemporary sea level rise is predominantly driven by climate change via glacier/ice sheet melt and volumetric expansion (ocean warming) \cite{ipcc2021}. Since 1993, satellite altimetry has provided (near) global records of \emph{sea level anomaly} (SLA) with centimeter-level accuracy on an almost gap-free ocean grid, excluding only land areas and parts of the high latitudes. This data enables the study of sea level variability on global as well as regional scales. 

A standard tool for analyzing such data is \emph{principal component analysis} (PCA), which decomposes the $3$-dimensional SLA field into spatial patterns known as \emph{empirical orthogonal functions} (EOFs) and associated temporal components, both ordered by explained variance. 
On the global domain this ordering by variance favors large-scale, dominant signals such as the long-term sea level trend and the annual cycle (e.g., seasonal land-water exchange). Regionally important but lower-variance phenomena, such as the \emph{El Niño–Southern Oscillation} (ENSO), therefore tend to appear only in higher-order modes (Fig.~\ref{fig:eofs}). Thus, analyses performed on the global field can obscure spatially localized signals.
For this reason, regional analyses are standard in sea level research \cite{li2024analysis}. 

Such analyses require partitioning the ocean surface into subregions that exhibit internally similar SLA variability and then can be studied separately, for example by applying PCA on each region. Partitions are often defined manually or based on prior geographic knowledge. In this work, we replace such a partitioning with data-driven clustering of altimetry time series.
The resulting clusters group ocean areas with similar SLA behavior and provide a basis for downstream tasks such as reconstructing pre-satellite-era sea level from tide gauges \cite{church2004} and separating steric and mass contributions \cite{rietbroek2016revisiting}. 
Because the downstream analysis applies PCA separately within each region, a natural partitioning objective is the total rank-$m'$ reconstruction error over all regions. This is exactly the sum of squared distances to the regions' best-fit $m'$-dimensional affine subspaces. Requiring each cluster to be geographically contiguous therefore leads directly to our Connected Subspace Clustering problem.

Existing methods do not simultaneously optimize subspace reconstruction error and enforce connectivity in a prescribed spatial graph. 
\Cref{fig:exampleclusterings} shows the output of six methods, including our own. 
Greedy $k$-means++ and Ward's method (scikit-learn \cite{scikitlearn}) yield spatially fragmented regions, but spatial contiguity is crucial for oceanographic analysis (\Cref{fig:exampleclusterings}(a,b)). 
Typical subspace clustering algorithms also do not enforce connectivity and thus can produce highly fragmented clusters (\Cref{fig:exampleclusterings}(c)).
Related work in hyperspectral imaging (HSI) also handles high-dimensional data under spatial coherence constraints e.g.~\cite{cai2020graph,wang2022graph,guan2024contrastive}.
While it shares these challenges with our setting, there are key differences in goals and problem formulation. Most importantly, it encourages spatial coherence but does not strictly enforce it while our problem formulation requires strictly connected clusters. Applying HSI on ocean data does not produce connected clusters (\Cref{fig:exampleclusterings}(d)).

A well-established connected clustering method is connected Ward's method, which is a rather old agglomerative clustering method (\cite{murtagh85,li11}). It provides coherent clusters, but  yields uneven patches that do not align well with oceanographic phenomena (\Cref{fig:exampleclusterings}(e)). 
The regions resulting from our method, in comparison, are both contiguous and interpretable (\Cref{fig:exampleclusterings}(f)).

\subsection{Related Work}

We are not aware of any algorithm specifically designed for our problem formulation of connected subspace clustering. For subspaces of dimension zero, i.e., for $k$-means, and generally for point-based clustering problems, some results on connected clustering are known. 
\emph{Connected $k$-center:} The radius-based $k$-center problem is somewhat easier to tackle than sum-based objectives like $k$-means. Surprisingly, it is not $O(1)$-approximable: The connected $k$-center problem is hard to approximate in $o(\log^\ast k)$~\cite{janheiko-connected-25}. 
On trees, the connected $k$-center problem is solvable optimally \cite{DELRRSW24,GEGHBB08}; for general graphs, there exists a $O(\log^2 k)$-approximation~\cite{DELRRSW24}. 
\emph{Connected $k$-median/$k$-means:} For general graphs, connected $k$-median is NP-hard to approximate within $\Omega(n^{1-\epsilon})$, but is solvable on trees \cite{ELRRS25}. The hardness extends to Euclidean $k$-means and thus to subspace clustering. 
 On star-like graphs, connected $k$-means is NP-hard to approximate within $\Omega(\log n)$ and admits an $O(\log n)$-approximation \cite{gupta_clustering_2011}. To our knowledge, the case where the connectivity graph is a grid has not been studied before this work. 
\emph{ILP formulations:} The point-based connected clustering problem has also been modeled as an integer linear program in the area of cartography in the context of political redistricting (e.g. \cite{ValidiBL22}). While ILP solvers nowadays are quite fast, this approach does not scale to our data.

As mentioned above, we are not aware of any algorithms proposed for subspace clustering where the connectivity is explicitly enforced but now review related algorithms and heuristics. The approaches need to be able to handle larger data sizes (the ocean dataset includes 532{,}783 time series with 365 monthly time steps).
\emph{Iterative improvement heuristics:}
\geetal~\cite{GEGHBB08} propose a heuristic called NetScan for connected $k$-center and adapt it to connected $k$-means. The broad idea of this algorithm is: initialize centers, assign while enforcing connectivity via simultaneous \emph{breadth-first search} (BFS) with a look-ahead for ``bridge''-nodes, then recompute better centers for each subgraph and iterate this process until a convergence criterion is reached. This approach does not scale well, and there is no implementation available. 
We implement a related Lloyd-style scheme with a more efficient assignment step.
\emph{Connected Ward's method:}
Agglomerative clustering is a general approach that starts with singleton clusters and then subsequently merges a pair of clusters into one cluster until the desired number of clusters is reached. This  can naturally be adapted to respect connectivity by only allowing merges of clusters that are neighboring.
We use the implementation of this method from scikit-learn \cite{scikitlearn} that is based on Murtagh~\cite{murtagh85} and Li~\cite{li11}. 
\emph{Subspace clustering methods:} 
Subspace clustering is a large and active research area. We choose two well-established methods to investigate the connectivity of the produced solutions: 
\emph{Sparse subspace clustering with orthogonal matching pursuit} (SSC-OMP)~\cite{YouRV16}, which addresses subspace clustering without any connectivity constraints, and \emph{elastic net subspace clustering} (EnSC)~\cite{you2016oracle}, which encourages well-connected affinity graphs within recovered subspaces, but it does not enforce connectivity in an externally specified spatial graph.
\emph{Hyperspectral imaging (HSI):}
Hyperspectral images typically are 3D tensors (two spatial dimensions and one spectral dimension). Many spatial-spectral clustering methods developed for HSI data
encourage spatial coherence via a regularization term, which encourages neighboring pixels to have the same label. Such regularization promotes local coherence but generally does not require every cluster to induce a single connected component. This is the method of choice for, e.g., identifying landscape types. 
 We nevertheless compare our method with two popular HSI approaches~\cite{cai2020graph},  \emph{efficient graph convolutional subspace clustering} (EGCSC) and \emph{efficient kernel graph convolutional subspace clustering} (EKGCSC).

%% file: Connected_subspace_clustering.tex
\section{Problem Definition, Hardness and Algorithmic Overview}\label{sec:problem}

We define the connected subspace clustering problem as follows: 
Let $P \subset \mathbb{R}^{m}$ be a set of $m$-dimensional points with $|P| = n$ (corresponding to $n$ time series of length $m$). We also think of this point set as an $n\times m$-matrix where the $i$-th point is in the $i$-th row. 
Let $d : \mathbb{R}^m \times \mathbb{R}^m \to \mathbb{R}$ denote the Euclidean distance which we use as a distance measure for time series.  
Let $G=(P,E)$ denote an unweighted and undirected connectivity graph. 
We want to partition $G$ into connected subgraphs.

\begin{definition}{(Feasible clustering)}
   For a number of target clusters $k$,  a partition of $P$ into clusters $C_1,\dots,C_k$ is feasible if the subgraph of $G$ induced by $C_i$ is connected for all $i \in \{1,\dots,k\}$.
\end{definition}

The purpose of the clusters from the point of the application is that computing a PCA with $m'$ components on a cluster shall induce a smaller error than computing it on the global dataset, in order to enable better analysis of sea level behavior for each local area. 

To this end, we want to minimize the sum of squared distances of points to the best-fit subspace of their cluster. The best-fit subspace of a set of points \( C_i \subset \mathbb{R}^m \) is the affine subspace of a given dimension \( m' \) that minimizes the sum of squared Euclidean distances from the points in \( C_i \) to the subspace. It is obtained by computing the mean $\mu_i = \frac{1}{|C_i|} \sum_{x \in C_i} x$,
 the centered data matrix
$X = [x - \mu_i : x \in C_i] \in \mathbb{R}^{m \times |C_i|}$, and then  the singular value decomposition of $X$. Then the best-fit subspace is $\mu_i + \operatorname{span}(u_1, \dots, u_{m'})$,
where $u_1, \dots, u_{m'}$ are the top $m'$ left singular vectors of $X$, which are the $m'$ top eigenvectors of $X X^T$.

\begin{definition}{(Connected $m'$-subspace $k$-clustering)} \label{def:subspace_clustering}
Given $P$, $G$, $k$, and $m'$, find a feasible clustering $C_1,\dots,C_k$ of $P$ with corresponding best-fit subspaces $S_i$ of dimension $m'$ such that 
$\sum^k_{i=1}\sum_{x\in C_i} d(x,S_i)^2$  
is minimized, where $d(x,S_i) = d(x-\mu_i, \operatorname{span}(u_1,\dots,u_{m'}))$.   
\end{definition}

\subsection{Hardness}

We prove that the connected $m'$-subspace $k$-clustering problem is hard to approximate to a constant even for $m'=0$. This is even true if the connectivity graph is a grid graph with holes which is the situation present in the application that we study in \Cref{sec:exp}.
 We show this via a non-trivial reduction from the vertex-disjoint paths problem on grid graphs. The proof of the following theorem is given in the supplementary material, see Appendix~\ref{appendix-remaining-proof}. 

\begin{theorem}\label{thm-hardness}
    For any $k \in \mathbb{N}$, connected $m'$-subspace $k$-clustering with $m'=0$ is NP-hard to approximate within factor $\Omega(|V'|^{1/2-\varepsilon})$ for any $\varepsilon > 0$, on grid graphs $G'=(V',E')$ with holes in Euclidean $t$-dimensional space, where $t \geq k$.
\end{theorem}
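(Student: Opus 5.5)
We reduce from the vertex-disjoint paths problem on grid graphs, which is NP-complete in the following form: given a grid graph $H$ with holes and $k$ terminal pairs $(s_1,t_1),\dots,(s_k,t_k)$, decide whether $H$ contains pairwise vertex-disjoint paths $s_i \leadsto t_i$. For $m'=0$ the objective is connected $k$-means. The idea is to make the distances cost essentially nothing exactly when the disjoint paths exist, and a lot otherwise. An approximation within any finite factor would then decide the paths instance. The polynomial factor comes from padding the instance.

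\textbf{Construction.} We work in $\mathbb{R}^t$ with $t\ge k$ and standard basis vectors $e_1,\dots,e_k$. Each path vertex of $H$ is placed at the origin, so all of them coincide metrically. For each $i$, we attach a large gadget of $L$ points at position $e_i\cdot D$ to the grid near $s_i$, and another at $t_i$. Each gadget is a solid block of grid cells, so the graph stays a grid graph with holes; the geometric embedding of the graph is independent of the metric positions.

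Both gadgets for pair $i$ sit at the same location $D e_i$. The coordinates are the only thing that separates different pairs. The choice of $D$ and $L$ (with $L$ polynomially large, around $|V(H)|^{c}$) controls the gap.

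\textbf{Completeness.} Suppose the disjoint paths exist. Take cluster $C_i$ to be the two gadgets of pair $i$, joined by path $P_i$. The remaining origin vertices must be absorbed into some cluster, so a little care is needed; I would adjust the construction by adding one extra cluster at the origin, or by making each non-path vertex attach to the gadgets cheaply.

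The cost of $C_i$ is then dominated by the $|P_i|\le |V(H)|$ origin points inside it. The mean is $\approx De_i$, so the cost is roughly $|V(H)|\cdot D^2$ in total, i.e. $O(|V(H)| D^2)$.

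\textbf{Soundness.} Suppose no disjoint paths exist. Some cluster must either contain gadgets of two different pairs $i\neq j$, or split a pair's gadgets across clusters. Splitting is impossible without two clusters sharing a gadget unless a cluster connects them, and $k$ clusters must cover $2k$ gadgets. A mixed cluster containing $L$ points at $De_i$ and $L$ at $De_j$ costs $\Omega(L D^2)$.

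The ratio is therefore $L/|V(H)|$. The total size is $|V'|\approx |V(H)|+kL$. Choosing $L=|V(H)|^{c}$ gives ratio $\approx |V'|^{1-1/c}$, which is at least $|V'|^{1/2-\varepsilon}$.

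\textbf{Main obstacle.} The expected bottleneck, which explains the $1/2$ exponent, is geometric. A gadget of $L$ points embedded as a grid block has perimeter about $\sqrt L$, and its attachment to $H$ must not create shortcuts. In addition, the path cost may scale with the gadget's side length $\sqrt L$ rather than with $|V(H)|$, since paths must route around the gadgets. The gap is therefore realistically $\sqrt{|V'|}$.

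The step I would work hardest on is soundness. I must show that no feasible clustering avoids mixing, including clusters that snake through gadgets. I would first argue by counting: each gadget must lie in a single cluster, on pain of cost $\Omega(LD^2)$ from placing gadget points with foreign points. Once each gadget lies in one cluster, the clusters induce disjoint connecting paths in $H$, contradicting the no-instance.
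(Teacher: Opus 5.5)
Your overall plan matches the paper's: reduce from vertex-disjoint paths on grid graphs, put every non-gadget vertex at the origin, and put both gadgets of pair $i$ at the same point $e_i$ (the paper takes $D=1$). But the two steps that actually carry the proof are missing. The first is the embedding. You cannot attach a solid block of $L$ cells ``near $s_i$'' and remain a grid graph, because in general there is no free area next to a terminal. You have to stretch the grid. The paper replaces every edge touching the row or column of a terminal by a path with $m$ new vertices, which frees a $(2m+1)\times(2m+1)$ square around each terminal. These subdivision vertices sit at the origin, and every feasible clustering pays for them, including the one built from the disjoint paths. So your completeness bound $O(|V(H)|D^2)$, your size estimate $|V'|\approx |V(H)|+kL$, and the exponent $1-1/c$ you derive from them are all unjustified. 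With $n=|V(H)|$, the correct bookkeeping is:
\begin{itemize}
\item cost at most $n+n^2m$ in the yes-case;
\item cost at least $m^2$ in the no-case;
\item $|V'|=O(km^2+mn^2)\le c\,n^{2a+1}$ for $m=n^a$;
\item hence a gap of $m/(2n^2)=n^{a-2}/2=\Omega(|V'|^{1/2-5/(4a+2)})$.
\end{itemize}
Your ``main obstacle'' paragraph senses that the padding scales with $\sqrt{L}$. Without a concrete construction and this computation, though, the exponent $1/2$ is asserted rather than proved.

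The second gap is in soundness. The claim that each gadget must lie in a single cluster ``on pain of cost $\Omega(LD^2)$'' is false. Another cluster can take a few gadget vertices, in particular the one through which the gadget is attached, at cost only $O(D^2)$ each. All you can conclude is that all but $O(\mathrm{cost}/D^2)$ vertices of each gadget lie in one majority cluster. You then need a gadget whose shape forces $s_i$ itself into that cluster. A block adjacent to several grid vertices lets the majority cluster leave elsewhere (your ``shortcut''). A single-edge pendant attachment is unavailable when $s_i$ has degree four in $H$, since all four lattice directions are taken.

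The ingredient you are missing is the paper's pinwheel gadget. $B_{s_i}$ consists of four pairwise non-adjacent $(m+1)\times m$ subgrids around $s_i$. Each is joined to the rest of the graph only through $s_i$ and at most one edge to the $H$-neighbour of $s_i$ in that direction, so $s_i$'s original edges are routed through the gadget. The argument then runs as follows:
\begin{itemize}
\item If the cost is below $m^2$, each $B_i$ must host a center, since $|B_i|=2(2m+1)^2>m^2$.
\item At most $m^2$ of the $4m^2+4m+1$ vertices of $B_{s_i}$ are assigned elsewhere, so each subgrid of size $m^2+m$ meets the majority cluster.
\item Connectivity then forces $s_i$, and likewise $t_i$, into that cluster.
\end{itemize}
Only after this does your final step yield vertex-disjoint paths.

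Two smaller points. Adding an extra cluster at the origin changes $k$, and the leftover vertices need not be connected; instead, grow the existing clusters along edges, as the paper does. Also, you work with means directly rather than with the discrete problem plus the $2^p$ transfer lemma. You therefore still have to show that a cheap solution places a distinct mean near each $De_i$, which your ``counting'' remark only gestures at.
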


\subsection{Establishing Connectivity:} We design a generic merging subroutine for clusterings with fragmented clusters. 
It operates on a neighborhood graph $N$ over the input points; $N$ may be any finite, simple, undirected graph, as only adjacency information is used, with no further structural assumptions. In our experiments, $N$ is a grid graph with holes as dictated by our application.
The routine first decomposes every cluster into its connected
components. It then repeatedly selects a smallest component and
merges it into an adjacent cluster, until $k$ connected components remain. The target cluster is chosen according to which neighboring cluster subspace best explains the
points of the selected component. The routine can be plugged into various clustering algorithms to establish connectivity, see \Cref{sec:algos}. 

\subsection{Algorithmic Framework:} 
Our heuristic combines an initial clustering with a Lloyd-style
iterative improvement procedure. In each iteration, we compute a
best-fit $m'$-dimensional affine subspace for every cluster and reassign
each point according to its reconstruction error. Since these
reassignments may violate connectivity, we consider several strategies
for maintaining or restoring connected clusters, including iterative
component merging, postprocessing, spatial filtering, and locally
restricted reassignment. The framework is intentionally modular:
different initialization methods and connectivity strategies can be
combined within the same iterative procedure.
Section~\ref{alg:sc} presents these methods in detail.
Section~\ref{sec:exp} compares the resulting configurations
on sea level anomaly data.

%% file: practical-algorithm-for-connected-subspace-clustering.tex
\section{An Algorithm for Connected Subspace Clustering}\label{sec:algos}

We begin with the general subspace clustering algorithm (\Cref{sec:basic_algo}), followed by two key components: the initialization (\Cref{sec:initialclustering}) and a method to (re)establish connectivity (\Cref{sec:connectivity}). 
\subsection{Basic Algorithm}\label{sec:basic_algo}
The general structure of our algorithm follows a Lloyd-style iterative improvement strategy: Alternately, assign points to clusters and update the clusters' optimal representation. 
Lloyd-type methods strictly decrease the objective each step but can take exponentially many iterations in the worst case (e.g., even for $k$-means \cite{ArthurV06}). Stronger convergence guarantees require additional assumptions in the input \cite{wang2022convergence}. We make no such assumptions and cap the number of iterations to ensure polynomial running time. 
This works well for subspace clustering because the $m'$-dimensional subspace $S$ minimizing $\sum_{x\in C_i} d(x,S)$ is just the best-fit subspace of dimension~$m'$ and it can be computed via principal component analysis. This procedure is given in \Cref{alg:sc}.
Several steps need further specification. First and foremost, we need a good initial clustering to start with. It is known that for $k$-means, the initialization is of utmost importance. We discuss several options in Section~\ref{sec:initialclustering}. Second, we need to specify a convergence criterion. We use the following simple criterion: Either terminate after a fixed number of iterations or when assignments no longer change.
Third, it is unclear how to maintain or (re)establish connectivity for the clusters. This can be done after each iteration or once after the procedure terminated. There is a big caveat to this question, since enforcing connectivity can increase the objective function (Section~\ref{sec:connectivity}).
\begin{algorithm}[t]
\caption{Connected Subspace Clustering\label{alg:sc}}
\begin{algorithmic}[1]
\STATE \textbf{Require:}\enspace $P = \{x_1, \dots, x_n\}$, number of clusters $k$, subspace dimension $m'$
\STATE \textbf{Ensure:}\enspace Clusters $C_1,\ldots,C_k$ with best-fit subspaces $S_1,\ldots,S_k$ of dimension $m'$

\STATE Compute an initial clustering $C_1, \dots, C_k$

\WHILE{stopping criterion not met}
    \FOR{$i=1$ to $k$}
        \STATE Compute the best-fit subspace
        \STATE \hspace{1em}$S_i \gets \mu_i + \operatorname{span}(v_1, \dots, v_{m'})$
        \STATE $C_i' \gets \emptyset$
    \ENDFOR
    \FOR{$x \in P$}
        \STATE Let $j \in \arg \min_{1 \leq i \leq k} d(x,S_i)$\label{step:assignment1}
        \STATE $C'_{j} \gets C'_j \cup \{x\}$\label{step:assignment2}
    \ENDFOR
   
    \label{step:establish_conn}
    \FOR{$i = 1$ to $k$}
        \STATE Set $C_i \gets C_i'$
         \STATE \textbf{Optional:} Establish connectivity for $C_i$
    \ENDFOR
\ENDWHILE

\STATE \textbf{Optional:} Establish connectivity for all $C_i$ \label{step:final_conn}
\end{algorithmic}
\end{algorithm}

\subsection{Initial Clustering}\label{sec:initialclustering}
We employ five initialization methods based on four clustering
approaches. Two methods use the same connectivity-constrained
agglomerative procedure with different dissimilarity measures. \\
\emph{1) Agglomerative clustering without a connectivity constraint (Agglo-ST)} 
\thompsonmerrifield~\cite{TM14} apply average-linkage clustering with a spatiotemporal distance function $D(X_i,X_j)=1-\exp(-d(x_i,x_j)/c_0)r(t_i,t_j)$, where $X_i=(x_i,t_i)$ combines location and time series. The exponential term penalizes spatial separation, while Pearson's $r$ rewards temporal similarity. Connectivity is only implicit and thus, connected clusters appear only when the data are smooth in space and time. This is explained in greater detail in the appendix (see~\Cref{appendix-tm}). \\
\emph{2) Agglomerative connected clustering}
The main difference to standard agglomerative clustering is the restriction of merges to pairs of neighboring clusters.
We defer the pseudocode to Appendix~\ref{appendix:pseudocode}.
We implemented Algorithm~\ref{alg-ac-connected}  and use it with (i) average linkage $D(A, V) = \frac{1}{|A||V|} \sum_{x \in A} \sum_{y \in V} D(x, y)$ (\emph{Conn-Agglo-Euc}) and (ii) with the spatiotemporal distance function described in 1) (\emph{Conn-Agglo-ST}). \\
\emph{3) Ward's method with connectivity (Conn-Ward)} 
This follows the same idea as agglomerative connected clustering, but with Ward's objective function. This algorithm was publicly available and we use the implementation of scikit-learn~\cite{scikitlearn}. \\
\emph{4) Greedy $k$-means++ with connectivity (Conn-KMeans++)}\label{sec:greedy-kmeans}
As a final initialization method, we use scikit-learn's implementation of greedy $k$-means++ \cite{scikitlearn}, a variant of $k$-means++ by \cite{AV07}, reported to deliver very good results, e.g.~\cite{CKV13}. For completeness, pseudocode is in \Cref{appendix:pseudocode}. Since $k$-means does not enforce spatial connectivity, we post-process the output with \Cref{algo:merging_for_connectivity} to obtain a connected clustering.

\subsection{Methods for (re)Establishing Connectivity}\label{sec:connectivity}
\begin{figure*}[]
  \centering
  \begin{tikzpicture}
     \node [label=below:{\begin{minipage}{5cm}(a) Before merging \end{minipage}}] at (0,0) {\includegraphics[width=0.42\textwidth]{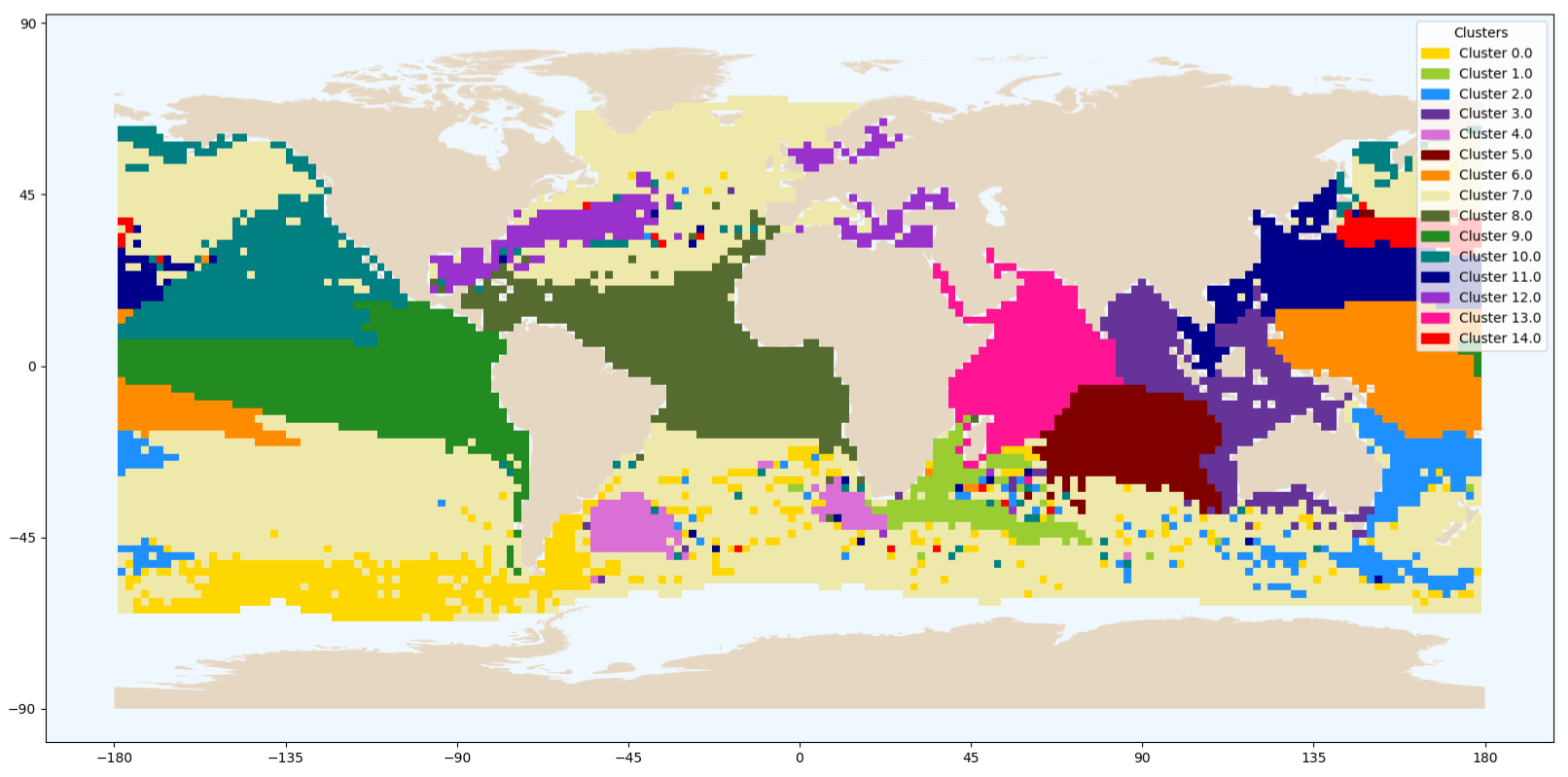}};
     \node [label=below:{(b) After merging}]at (8.6,0) {\includegraphics[width=0.42\textwidth]{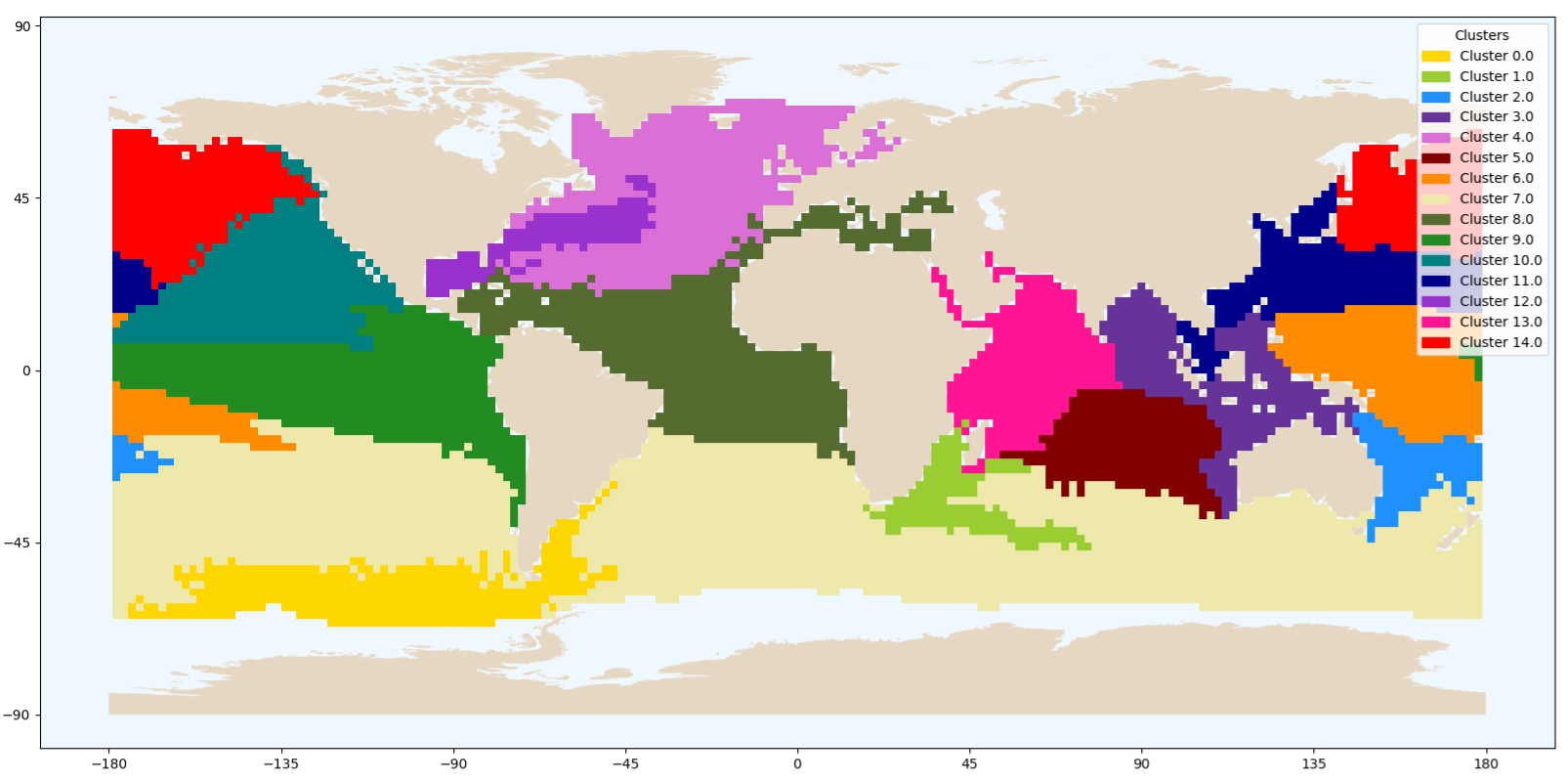}};\end{tikzpicture}
  \caption{Application of \Cref{algo:merging_for_connectivity} to an assignment of grid points for 15 clusters. The merging step enforces connectivity by reassigning disconnected subregions, prioritizing spatial coherence over original cluster labels.}
  \label{fig:merging}
\end{figure*}
Assigning points to best-fit subspaces in Step \ref{step:assignment2} of Algorithm~\ref{alg:sc} can violate the connectivity constraint. 
We evaluate the following options: 
\textbf{(i)} Reestablish connectivity after each iteration via Algorithm~\ref{algo:merging_for_connectivity} (\emph{IterMerge}), \textbf{(ii)} establish connectivity once at the end via the same algorithm (\emph{PostMerge}), \textbf{(iii)} apply a filtering step that improves, but does not strictly enforce connectivity (\emph{SmoothMerge}), and finally \textbf{(iv)} restrict Step~\ref{step:establish_conn} to maintain connectivity throughout (\emph{IntegratedConn}). For an overview of the different options, see \Cref{tab:connectivity-strategies}. We discuss each option in detail below.

\begin{table}[h]
  \centering
  \small
  \setlength{\tabcolsep}{3pt}
  \renewcommand{\arraystretch}{1.1}

  \begin{tabular}{
    p{0.25\columnwidth}
    p{0.40\columnwidth}
    p{0.23\columnwidth}}
    \hline
    \textbf{Method} &
    \textbf{During iteration} &
    \textbf{Final repair} \\
    \hline
    IterMerge       & Component merging  & No  \\
    PostMerge       & None               & Yes \\
    SmoothMerge     & Label smoothing    & Yes \\
    IntegratedConn  & Local reassignment & Yes \\
    \hline
  \end{tabular}

  \caption{Connectivity strategies. ``Final repair'' indicates whether
Algorithm~\ref{algo:merging_for_connectivity} is applied after the
iterative procedure terminates.}
  \label{tab:connectivity-strategies}
\end{table}

\subsubsection{Merging Connected Components}
To restore connectivity (Alg.~\ref{algo:merging_for_connectivity}), we form the induced subgraph on the neighborhood graph $N$ by retaining only edges whose endpoints share a cluster label. The connected components of this subgraph are contiguous subregions of clusters (see \Cref{fig:merging}(a)). 
We then repeatedly select the smallest component and merge it with an
adjacent cluster whose fitted subspace is preferred by most of its
points, until $k$ components remain
(see \Cref{fig:merging}).
If a reassignment reduces the number of components below $k$, the connected components can be split along edges of a spanning tree until
exactly $k$ components remain. All ties in $\arg\min$ and $\arg\max$ expressions are broken
arbitrarily but consistently. The procedure is motivated by the expectation that an unconstrained assignment typically consists of several substantial contiguous
regions together with smaller disconnected fragments. More details are in \Cref{appendix-merging}, and the proof of~\Cref{thm:merging_runtime} is in~\Cref{appendix:runtime-proof}.
\begin{algorithm}[]
\caption{Merging Connected Components}
\label{algo:merging_for_connectivity}
\begin{algorithmic}[1]
\STATE \textbf{Require:}\enspace
Possibly disconnected clusters $C_1,\dots,C_k$,
subspaces $S_1,\dots,S_k$, the connectivity graph $N$

\STATE \textbf{Ensure:}\enspace
Connected clustering $\tilde{C}_1,\dots,\tilde{C}_k$

\STATE Construct the graph $G_C = (V,E')$ where
\label{step-merging-graph}
$E' = \{ \{u,v\} \mid \{u,v\} \in E \text{ and }$
$\exists i: u,v \in C_i \}$

\STATE Compute set $Z$ of connected components of $G_C$
\label{step-merging-cc}

\STATE Compute the component graph $G_{CC} = (Z,E'')$ where
\label{step-merging-gcc}
$E'' = \{ \{A,B\} \mid \exists u \in A,\exists v \in B,$
$A,B \in Z,\{u,v\} \in N \}$

\WHILE{$|Z| > k$}
\label{step-merge-loop-start}

    \STATE Compute
    $A_{\min} \in \arg\min_{A \in Z}|A|$
    \label{step-merge-min}

    \FOR{$x \in A_{\min}$}

        \STATE Compute the winner
        \label{step-merge-winner}
        $w(x) = \arg\min \{ d(x,S_i) \mid i \in [k]$
        $\text{ s.\,t. } \exists B \subset C_i, B \in Z,$
        $\{A_{\min},B\} \in E'' \}$

    \ENDFOR

    \STATE Compute
    $i^w = \arg\max_i
    |\{x \in A_{\min} \mid w(x)=i\}|$
    \label{step-merge-clusterwinner}

    \STATE Reassign all points in $A_{\min}$ to cluster $C_{i^w}$

    \STATE Update graphs $G_C$ and $G_{CC}$
    \label{step-merge-loop-end}

\ENDWHILE

\STATE For each $A \in Z$ create a cluster $\tilde{C}$ and output these
\label{step-merge-output}

\end{algorithmic}
\end{algorithm}
An optimized implementation computes the initial connected components by BFS/DFS, precomputes point-to-subspace rankings and component score vectors, and maintains the smallest component in a min-heap.
Component merges use union--find with union-by-rank and path compression, while winner lookups and score-vector updates cost $O(k)$ per iteration; these ingredients yield the bound below.
\begin{restatable}{theorem}{thmruntime}
\label{thm:merging_runtime}
Algorithm~\ref{algo:merging_for_connectivity} can be implemented in
$O(|E|+n(k+\log n))$
time. Consequently, its running time is
$O(n(k+\log n))$ on bounded-degree graphs, including the grid graphs used in our application, and $O(nk)$ whenever $k=\Omega(\log n)$.
\end{restatable}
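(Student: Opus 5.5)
The plan is to split the running time into preprocessing and the main loop (Steps~\ref{step-merge-loop-start}--\ref{step-merge-loop-end}), and to use one structural fact throughout.

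\textbf{Structural fact.} Since $A_{\min}$ is a smallest component, merging it into an adjacent component $B$ of $C_{i^w}$ yields a component of size $|A_{\min}|+|B|\ge 2|A_{\min}|$. Two consequences follow.
\begin{enumerate}
\item[(a)] The loop runs at most $|Z|-k\le n$ times.
\item[(b)] When $|Z|=z$ we have $|A_{\min}|\le n/z$. Hence $\sum_{\text{iterations}}|A_{\min}|\le \sum_{z=k+1}^{n} n/z=O(n\log n)$.
\end{enumerate}
Equivalently, every point can lie in $A_{\min}$ at most $\log_2 n$ times, because its component size at least doubles each time.

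\textbf{Preprocessing, in $O(|E|+nk)$ time.}
\begin{itemize}
\item Build $G_C$ by scanning $E$ once.
\item Compute $Z$ by BFS/DFS.
\item Build $G_{CC}$ as adjacency lists of components, keeping multiplicities of boundary edges.
\item Compute and store all distances $d(x,S_i)$, treating a distance evaluation as $O(1)$ given precomputed projections, as the paper does.
\item Put the component sizes into a binary min-heap.
\item Keep the components in a union--find structure with union-by-rank and path compression.
\end{itemize}

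\textbf{One loop iteration.}
\begin{enumerate}
\item Extract $A_{\min}$ from the heap in $O(\log n)$. Over all iterations this costs $O(n\log n)$.
\item Collect the set $L\subseteq[k]$ of clusters having a component adjacent to $A_{\min}$, marking it in a length-$k$ bit array. This costs $O(|L|+\deg_{G_{CC}}(A_{\min}))$.
\item For each $x\in A_{\min}$, evaluate $w(x)$ from the stored distances.
\item Tally $i^w$ in a length-$k$ counter array and reset the counter afterwards.
\item Reassign $A_{\min}$ by a single union-find union with its chosen neighbor $B$.
\item Merge adjacency lists by moving the shorter list into the longer one.
\item Update the heap key of the merged component.
\end{enumerate}
Small-into-large merging of the adjacency lists bounds the total list work by $O(|E|)$ plus the per-element moves amortized over doublings. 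The intended argument charges each boundary-edge entry only when its side is the smaller one, so that the total over all iterations stays $O(|E|)$ up to the heap term.

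\textbf{Main obstacle: the winner step.} Naively, computing $w(x)$ costs $|L|\le k$ per point per iteration, so (b) only gives $O(nk\log n)$. To remove the $\log n$ factor, I would first try to charge the $|L|$ term to boundary edges rather than to points. Every $i\in L$ is witnessed by an edge leaving $A_{\min}$, so $|L|\le \min(k,\partial A_{\min})$, and a boundary edge of $A_{\min}$ disappears from the boundary after the merge if it goes to $B$.

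If that charging fails, the fallback is to precompute, for each point, its clusters sorted by distance. Then $w(x)$ is the first entry of $x$'s list that lies in $L$. Each point keeps a pointer into its list that only ever advances past clusters that have become non-adjacent to its component. This requires showing that a skipped cluster never becomes relevant again, which is the delicate point.

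Once the winner step is bounded by $O(nk)$ in total, summing the per-iteration costs gives $O(|E|+n(k+\log n))$. The two corollaries are then immediate:
\begin{itemize}
\item For bounded degree, $|E|=O(n)$, so the bound is $O(n(k+\log n))$.
\item For $k=\Omega(\log n)$, we have $n\log n=O(nk)$, so the bound is $O(nk)$.
\end{itemize}
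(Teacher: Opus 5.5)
\paragraph{Where your argument breaks.} Your facts (a) and (b), the heap and union--find accounting, and the preprocessing are fine. The argument stops exactly at the step you flag, though, and neither proposed repair works.

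Evaluating $w(x)$ for all $x\in A_{\min}$ costs $|A_{\min}|\cdot|L|$ per iteration. With (b), this only gives $O(nk\log n)$.

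Your first repair bounds $|L|$ by the number of boundary edges of $A_{\min}$. That pays for a single scan of $L$, not for $|A_{\min}|$ scans of it. Moreover, boundary edges leading to clusters other than $i^w$ stay on the boundary of the merged component, so they are charged again, up to $O(\log n)$ times.

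The fallback rests on a false invariant: the set of clusters adjacent to a point's component is not monotone. Suppose $x$ ranks cluster $3$ first but its component only touches clusters $1$ and $2$, so the pointer skips $3$. That component may later be absorbed into a component of cluster $1$ that touches cluster $3$, or a neighbouring component may be reassigned to cluster $3$. Then cluster $3$ is back in $L$ and is the correct value of $w(x)$. The pointer would have to move backwards, and the amortization collapses.

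\paragraph{The missing idea.} The paper decouples the vote from the current neighbour set, so no per-point work is done inside the loop. Each point gets a fixed ranking $r_x$ of all $k$ clusters, computed once in $O(nk)$. Each component carries an additive score vector $\mathit{score}_A[i]=\sum_{x\in A}s(r_x[i])$ for a fixed positional scoring function $s$.

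A merge is then a single $O(k)$ vector addition. The winner is an $O(k)$ argmax of $\mathit{score}_{A_{\min}}$ over the adjacent clusters, independent of $|A_{\min}|$. Over at most $n$ iterations this is $O(nk)$, and the heap and union--find contribute $O(n\log n)$ and $O(n\alpha(n))$.

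This replaces the exact plurality of restricted argmins by a surrogate, which the paper explicitly accepts as mapping the majority decision ``well enough''. So your obstacle is real for the literal rule, and the additive surrogate is precisely what removes the $\log n$ factor.

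\paragraph{Two smaller repairs.} First, reassigning $A_{\min}$ to $C_{i^w}$ fuses it with \emph{every} adjacent component of that cluster, not a single $B$. A single union therefore leaves adjacent same-cluster components in $Z$. This is cheap to fix, since there are at most $n$ unions overall. Second, small-into-large merging of adjacency lists costs $O(|E|\log n)$, not $O(|E|)$. That is harmless on bounded-degree graphs, but it does not deliver the additive $O(|E|)$ term of the general bound as you claim.
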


\subsubsection{Improving Connectivity via Filtering}\label{sec:filter_for_conn} 
Enforcing connectivity at every step has the side effect of a fluctuating cost (the reassignment decreases cost, while enforcing connectivity raises it).
For \emph{SmoothMerge}, we encourage spatial coherence after each reassignment step using Gaussian weighted label voting. For each point
$x$, nearby grid points contribute to the support of their current
cluster label, with weights decreasing with their Haversine distance
from $x$. We assign $x$ the label with maximum weighted support.
This operation generally reduces fragmentation but does not guarantee
connectivity, so Algorithm~\ref{algo:merging_for_connectivity} is
applied after the iterative procedure terminates. Full details are
given in \Cref{appendix-filtering-connectivity}.

\subsubsection{Integrating Connectivity Constraints}
We also evaluate incorporating local connectivity information directly
into the reassignment step, rather than relying exclusively on
connectivity repair afterward.
For this we replace Step~\ref{step:assignment1} and~\ref{step:assignment2} of \Cref{alg:sc} with \Cref{alg:integrated_conn}.
\begin{algorithm}
\caption{Integrated connectivity}\label{alg:integrated_conn}
    \begin{algorithmic}
    \FOR{each point $x \in P$}
     \FOR{each $i = 1$ to $k$ such that there exists $y \in C_i$ with $y \in \delta_N(x) \cup \{x\}$}
        \STATE Compute $d(x, S_i) = d(x - \mu_i, \operatorname{span}(v_1, \ldots, v_{m'}))$
    \ENDFOR
    \STATE Assign $x$ to the cluster $C_i'$ that minimizes $d(x, S_i)$
\ENDFOR
\end{algorithmic}
\end{algorithm}

The sole change is a locality restriction: $x$ may switch from $C_i$ to $C_j$ only if a neighbor of $x$ in $N$ lies in $C_j$ and $d(x, S_j) < d(x,S_i)$. Among these
candidate labels, it chooses the cluster whose fitted subspace is
closest. This locality restriction encourages spatially coherent
assignments but does not guarantee that every cluster remains
connected. We therefore apply the final merging step in
Step~\ref{step:final_conn}.

\section{Experimental Evaluation}\label{sec:exp}
We evaluate our Connected Subspace Clustering framework on global
sea level anomaly data, with the goal of identifying contiguous
regions exhibiting internally similar temporal variability.
\subsection{Dataset, Preprocessing and the Connectivity Graph}
We use the Copernicus Marine Service (CMEMS) global gridded
sea level anomaly product~\cite{altimetry_data}. We choose the CMEMS product because its enhanced spatial sampling and time-step accuracy are well suited to identifying regional
spatiotemporal patterns. 
This dataset contains monthly \emph{sea level anomaly} (SLA) values on a $0.25^{\circ} \times 0.25^{\circ}$ regular, rectilinear grid ($720$ latitudes and $1440$ longitudes). We restrict the analysis to grid cells with complete SLA observations throughout the study period. Owing to the dataset's validity mask, this excludes land, many coastal cells, and parts of the high latitudes, leaving $532{,}783$ valid ocean grid points. For tractability, all reported clustering experiments use this grid coarsened to a $2^\circ \times 2^\circ$ resolution, yielding $8{,}160$ grid points.

Following the preprocessing strategy of
\cite{TM14,EMastersthesis}, we apply a temporal boxcar filter
(half-width seven months; approximately $90\%$ gain at a
24-month period) and a spatial Gaussian filter based on Haversine
distance, thereby accounting for the Earth's curvature. The spatial filter is applied independently to each monthly field
using normalized Gaussian weights with half-width
$h=500$, truncated at a radius of $3\sigma$, where
$\sigma=h/1.178$. Masked grid cells are excluded from the weighted
average. All methods are evaluated on both unfiltered and filtered
data.
\label{sec:preprocessing} 
More on the dataset can be found in \Cref{sec:data-set-choice} and further preprocessing details are described in \Cref{appendix-preprocessing}.

\subsubsection{Connectivity Graph} \label{sec:grid_graph}
We map the combinatorial structure of the data to a connectivity graph $N=(V,E)$ with one vertex per grid point and edges for each of the four neighbors in the grid. Because masked cells are omitted, $N$ is a grid graph with holes. 

To account for the periodicity of longitude, retained cells in the
westernmost and easternmost grid columns are treated as adjacent when
they occur at the same latitude. Thus, $N$ can be viewed as a subgraph
of a two-dimensional grid on a cylinder. Optionally, diagonal edges can be added; all algorithms described in this paper support either choice. In the evaluation the four-neighbor graph is used.
The set of neighbors of a vertex $x\in V$ is $\delta_N(x) = \{y \mid \{x,y\} \in E\}$.

\subsubsection{Hardware and Source Code}
We implemented all algorithms in Python. The
anonymized source code is available via~\url{https://anonymous.4open.science/r/subspace_clustering-C00F}. The repository README provides complete environment setup, configuration, and execution instructions for reproducing the reported experiments. The experiments were conducted on a single machine with an AMD Ryzen 7~PRO~7840U processor (8 cores, 16 threads) and 30~GiB of RAM.

\subsubsection{Method Names}
We use the abbreviations \emph{Agglo-ST}, \emph{Conn-Agglo-Euc},
\emph{Conn-Agglo-ST}, \emph{Conn-KMeans++}, and \emph{Conn-Ward}
for the five initialization methods from \Cref{sec:initialclustering}.
The four connectivity strategies are \emph{IterMerge},
\emph{PostMerge}, \emph{SmoothMerge}, and \emph{IntegratedConn};
see \Cref{tab:connectivity-strategies}.

\subsection{Initializer Quality under $k$-Means and Subspace Objectives}
\begin{figure}[t]
\centering \vspace{-0.8cm}
    \begin{tikzpicture}
        \node[label=below:{(a) $k$-means cost function}] at (0,0) {\includegraphics[width=0.4\textwidth]{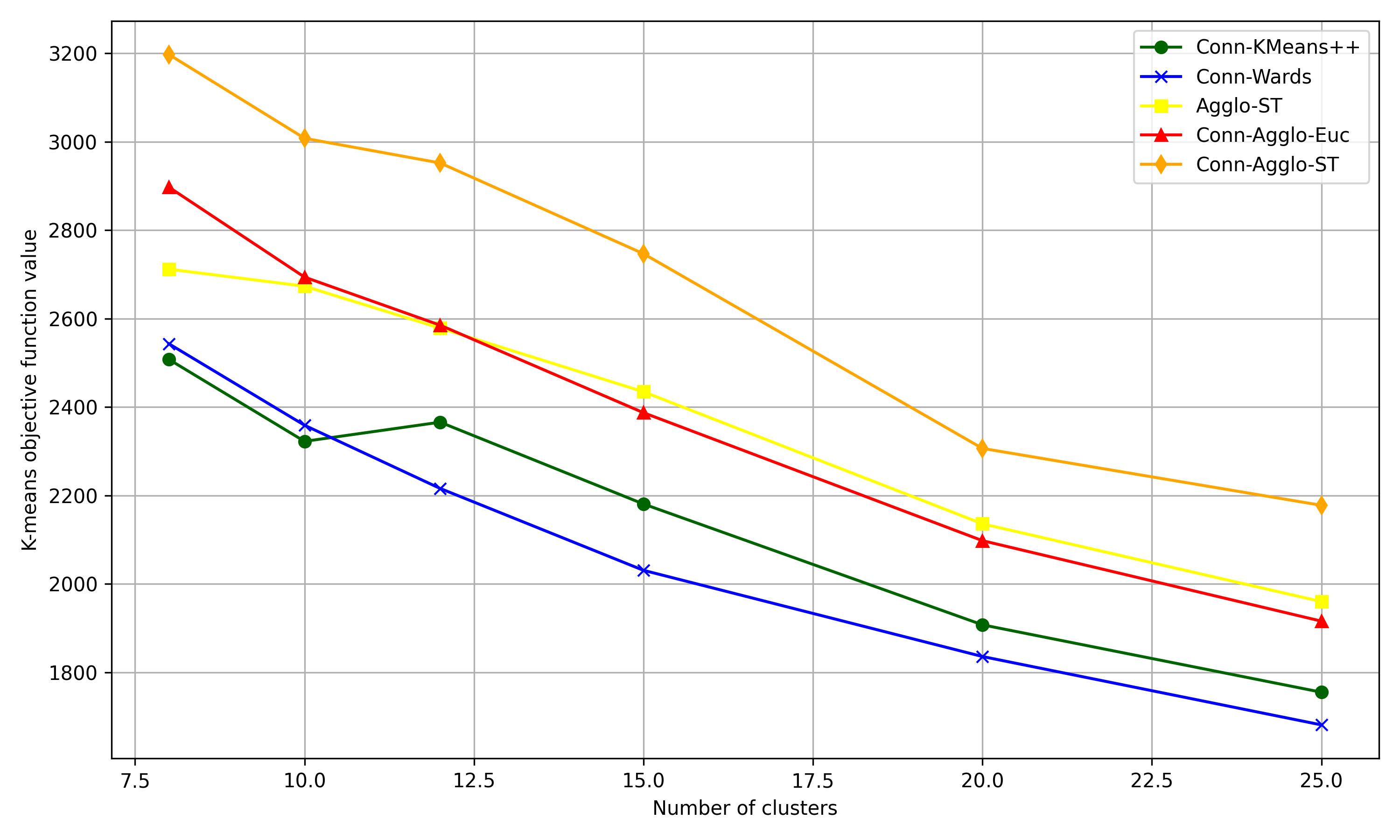}};
        \node[label=below:{(b) Subspace clustering cost for $m'=15$}] at (0,-4.8) {\includegraphics[width=0.4\textwidth]{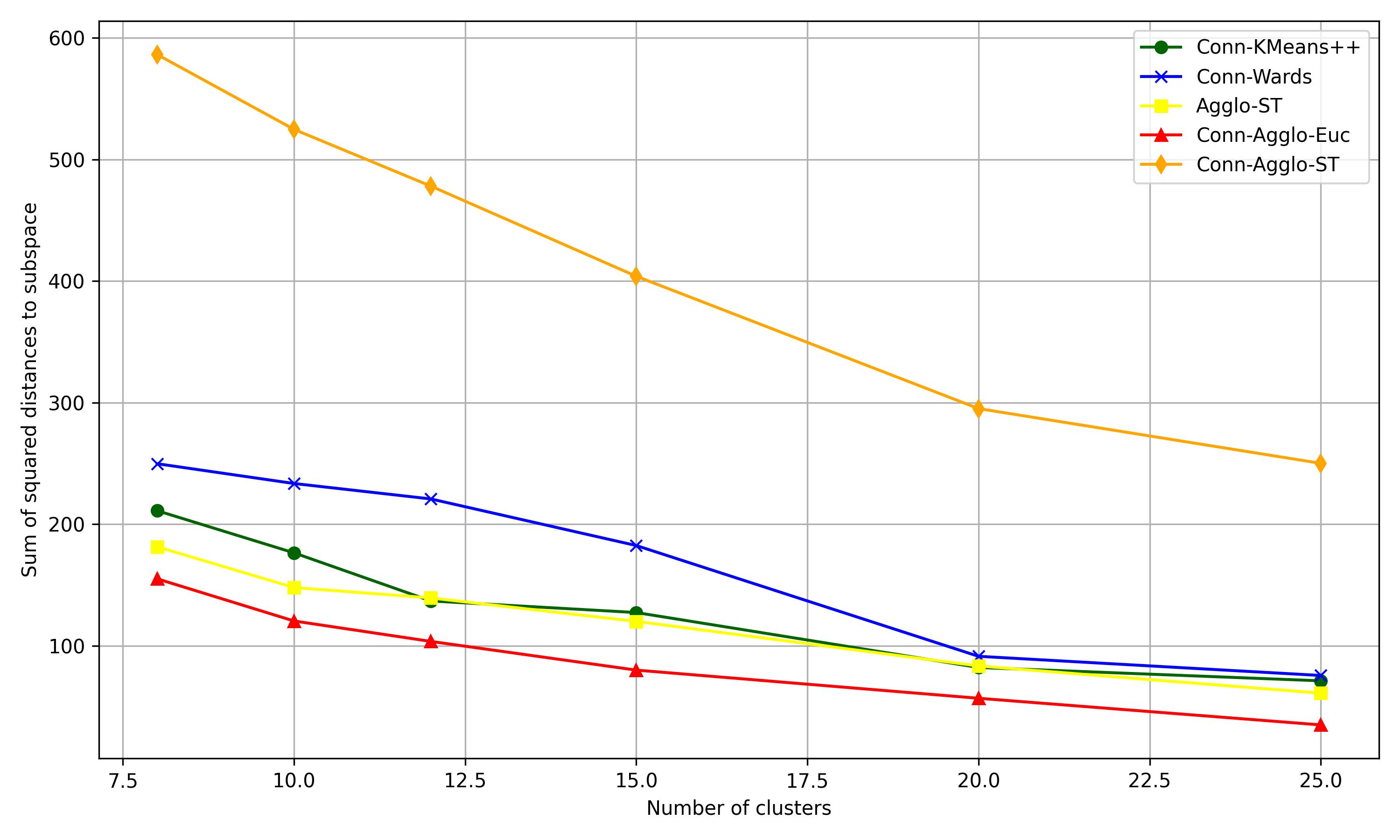}};
    \end{tikzpicture}
    \caption{Comparison of our five initial clustering methods evaluated with two different cost functions.\label{fig:cost-evaluation-k-means-sc}}
    \vspace{-6pt}
\end{figure}
The $k$-means problem is a special case of subspace clustering with $m'=0$, and we use $k$-means as an initial clustering. When evaluating the quality of the initial clusterings, we made an interesting observation: For Figure~\ref{fig:cost-evaluation-k-means-sc}, we ran all five initialization methods with an increasing number of clusters.
For each initialization method, we evaluate the resulting partition
before applying the iterative subspace-clustering procedure. We score
each fixed partition once using the $k$-means objective and once using
the rank-$15$ subspace-reconstruction objective. 
Under the $k$-means objective, Conn-Ward and Conn-KMeans++ outperform the other approaches, see Figure~\ref{fig:cost-evaluation-k-means-sc}(a).
However, under the subspace clustering objective, the ranking flips (Figure~\ref{fig:cost-evaluation-k-means-sc}(b)). Notably, agglomerative connected clustering outperforms all other methods, and the tailored method by Thompson and Merrifield (\emph{Agglo-ST}) is now competitive with \emph{Conn-KMeans++}. Conn-Ward has comparatively high subspace-reconstruction cost, particularly for smaller values of $k$.
This trend is consistently observed across multiple configurations (see~\Cref{sec:results_appendix}). 
These results show that initializer quality is strongly
objective-dependent: methods that perform well under the $k$-means
objective do not necessarily provide good starting partitions for
the rank-$m'$ reconstruction objective used in our downstream regional
PCA analysis. We examine the geographic and oceanographic interpretation of the
resulting regions in \Cref{sec:oceanographic-interpretation}.

\subsection{Robustness / Filtering}\label{sec:robustness}
\begin{figure}[h]
  \centering
  \begin{tikzpicture}
     \node [label=below:{(a) Clustering on filtered input data.}] at (0,0) {\includegraphics[width=0.4\textwidth]{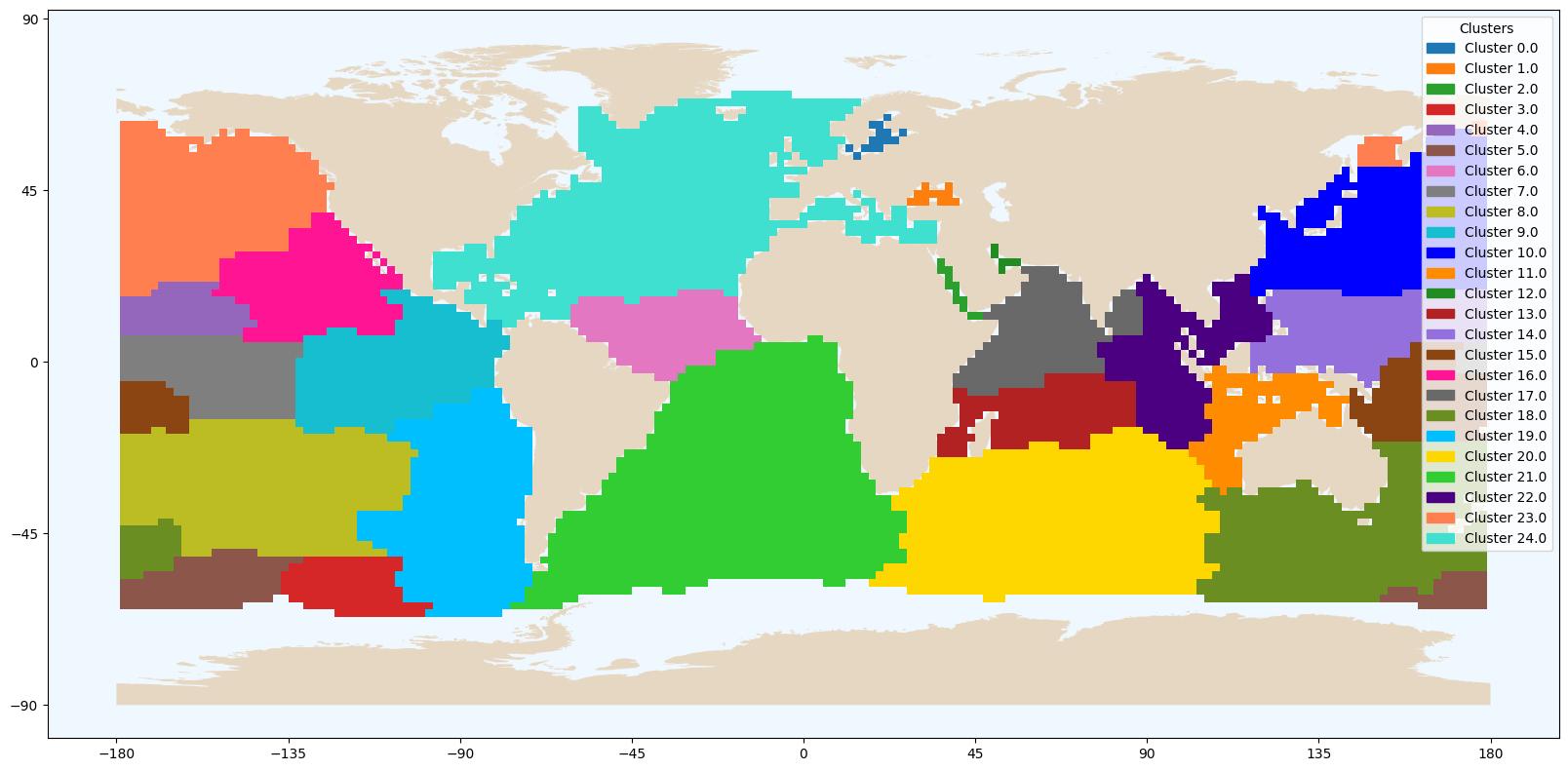}};
     \node [label=below:{(b) Clustering on unfiltered input data.}]at (0,-5) {\includegraphics[width=0.4\textwidth]{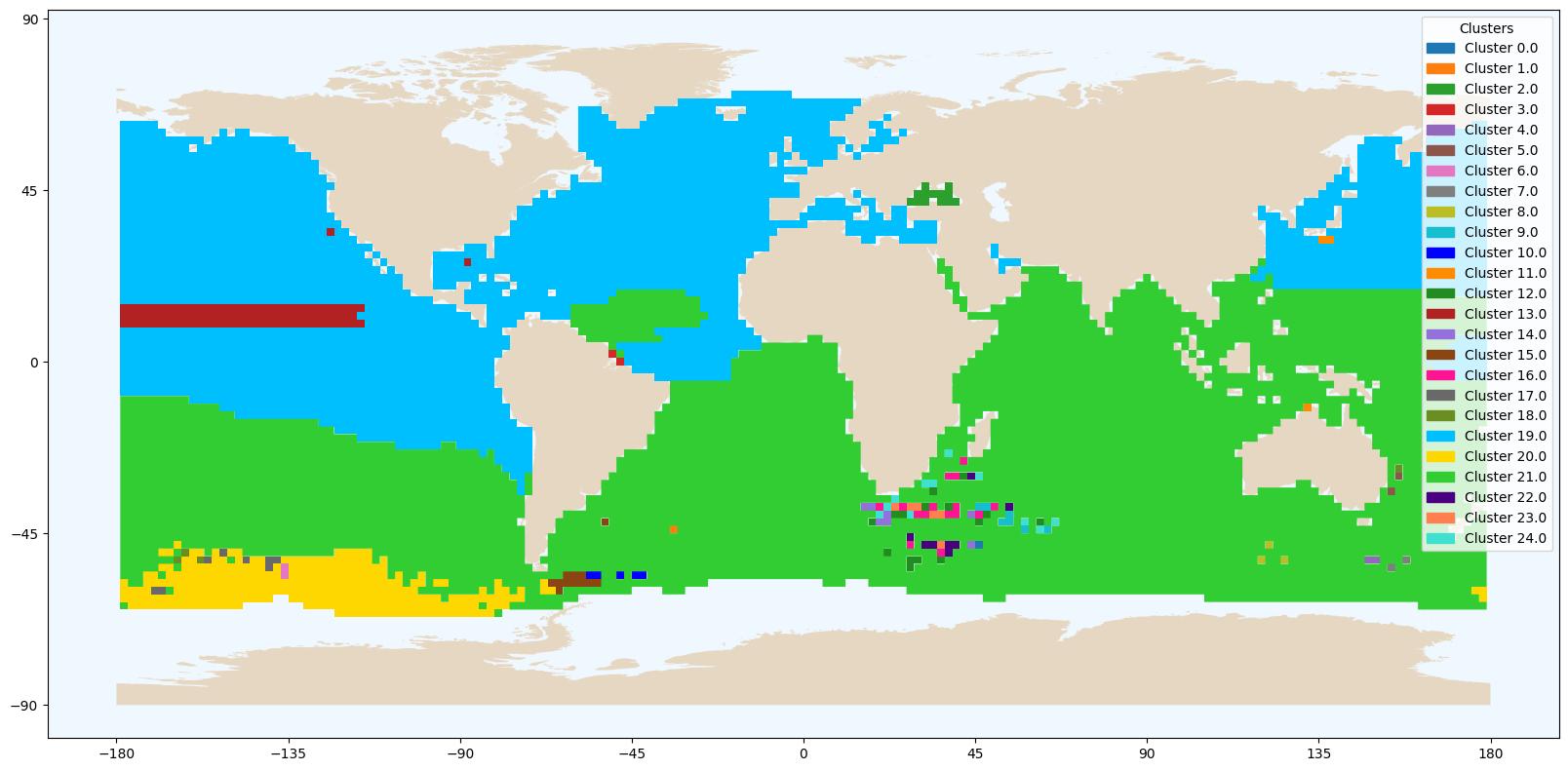}};\end{tikzpicture}
  \caption{Agglo-ST initialization for $k=25$ on
(a) filtered and (b) unfiltered SLA data. Without filtering,
Agglo-ST produces one dominant cluster and several small regions,
some of which are too small to fit the requested subspace dimension.}
  \label{fig:thompson_filtering}
\end{figure}
We next examine how spatial and temporal filtering affect the performance of the
initialization methods.
Agglo-ST is highly sensitive to preprocessing. On unfiltered data, it
frequently degenerates into one dominant cluster and several very small
clusters (\Cref{fig:thompson_filtering}). Some of these clusters contain too few points for our PCA
implementation to fit an $m'$-dimensional subspace, so the resulting
initialization cannot be used by Conn-Subspace for those parameter
settings.

Even for larger $k$, \emph{Agglo-ST} consistently underperforms (\Cref{tab:15_unfiltered,tab:20_unfiltered,tab:25_unfiltered}); e.g., at $k=20$, it is at least 8.98\% higher than the next worst clustering method, \emph{Conn-Ward}.
The other four initialization methods produce usable starting
partitions on both filtered and unfiltered data for all tested
configurations.

\subsection{Comparison of Connectivity Strategies}\label{sec:connectivity-evaluation}
\begin{figure*}[t]
  \centering
  \begin{subfigure}[t]{0.4\textwidth}
    \centering
    \includegraphics[width=\linewidth]{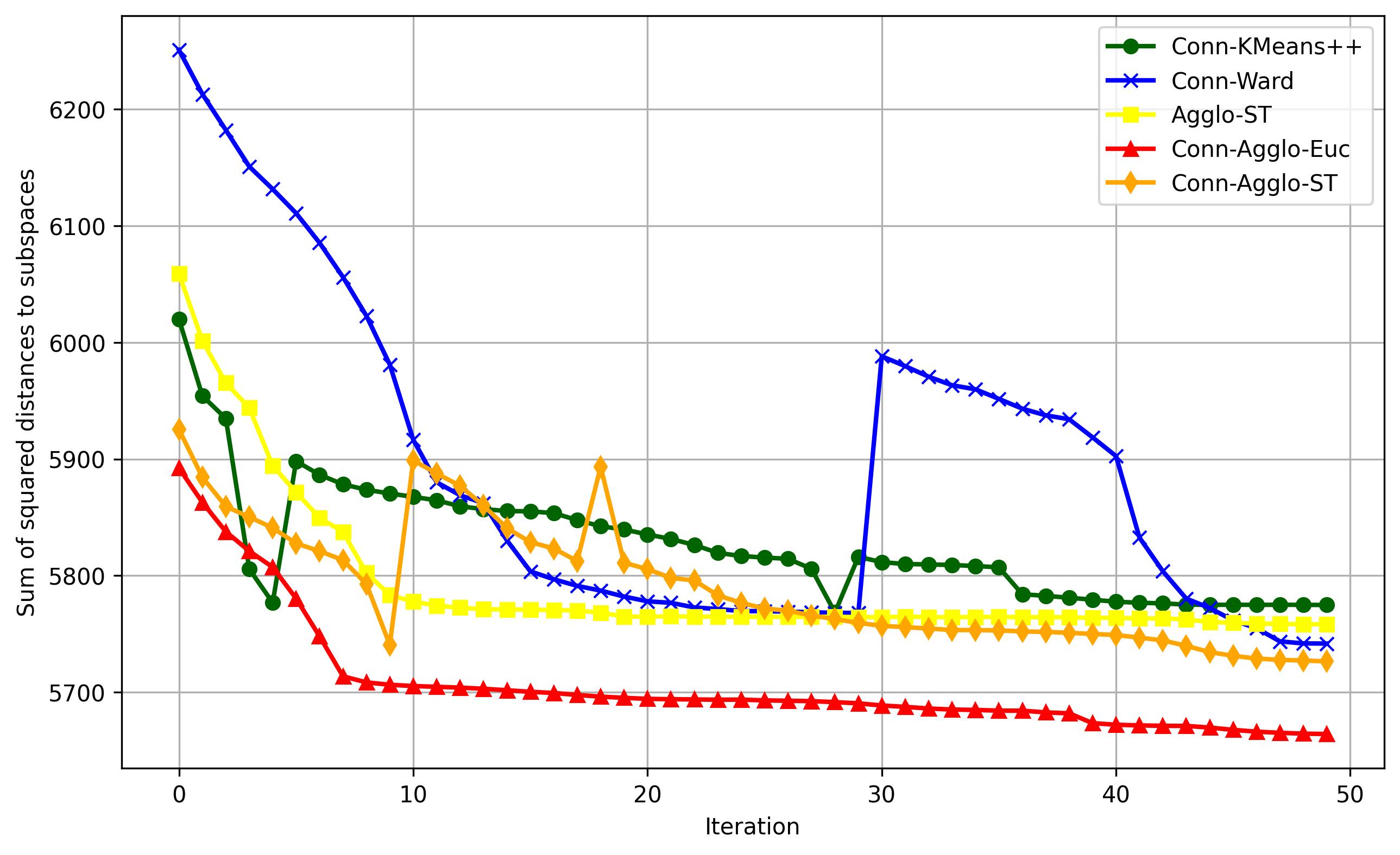}
    \caption{IterMerge}
  \end{subfigure}\hfil
  \begin{subfigure}[t]{0.4\textwidth}
    \centering
    \includegraphics[width=\linewidth]{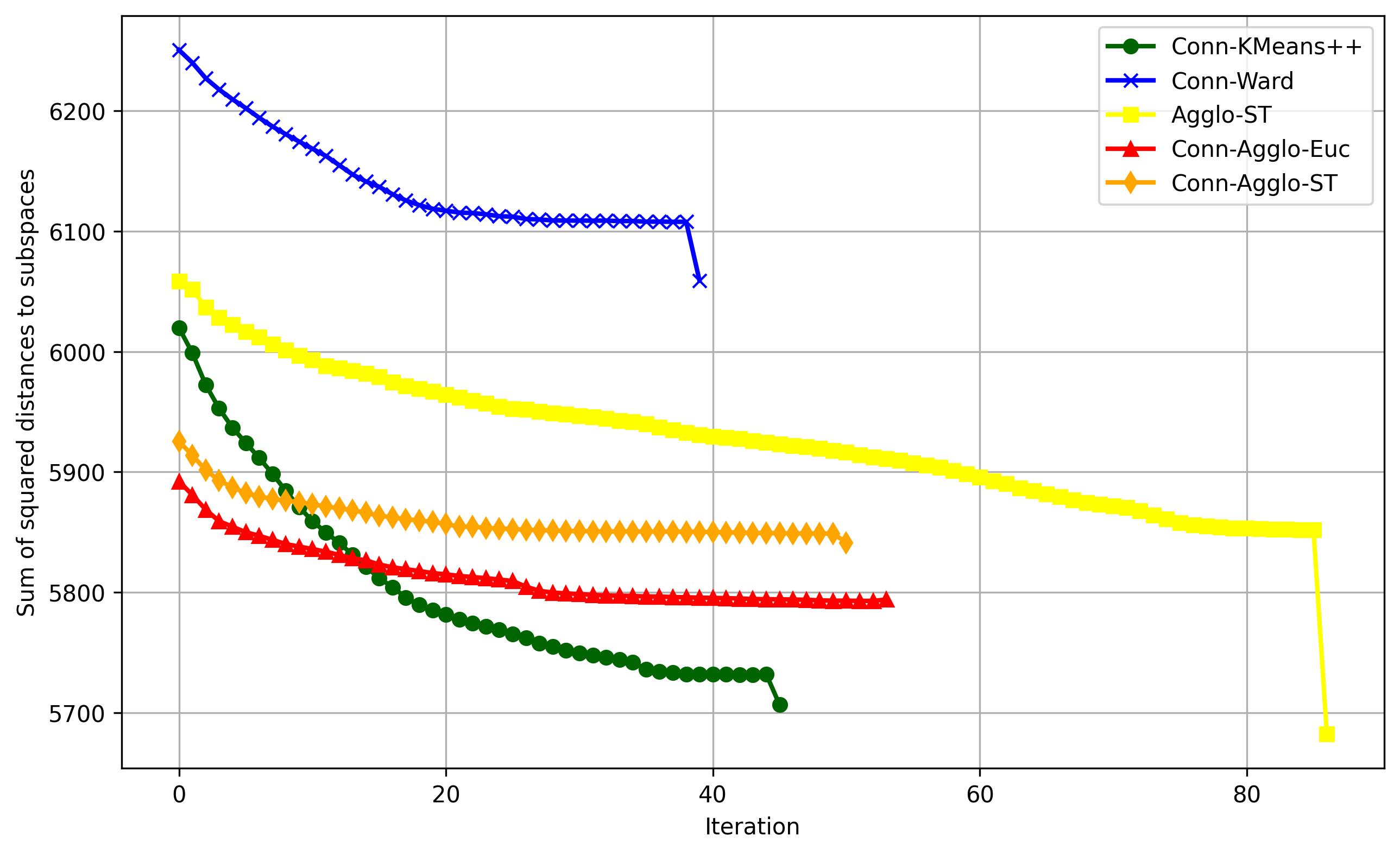}
    \caption{IntegratedConn}
  \end{subfigure}
  \begin{subfigure}[t]{0.4\textwidth}
    \centering
    \includegraphics[width=\linewidth]{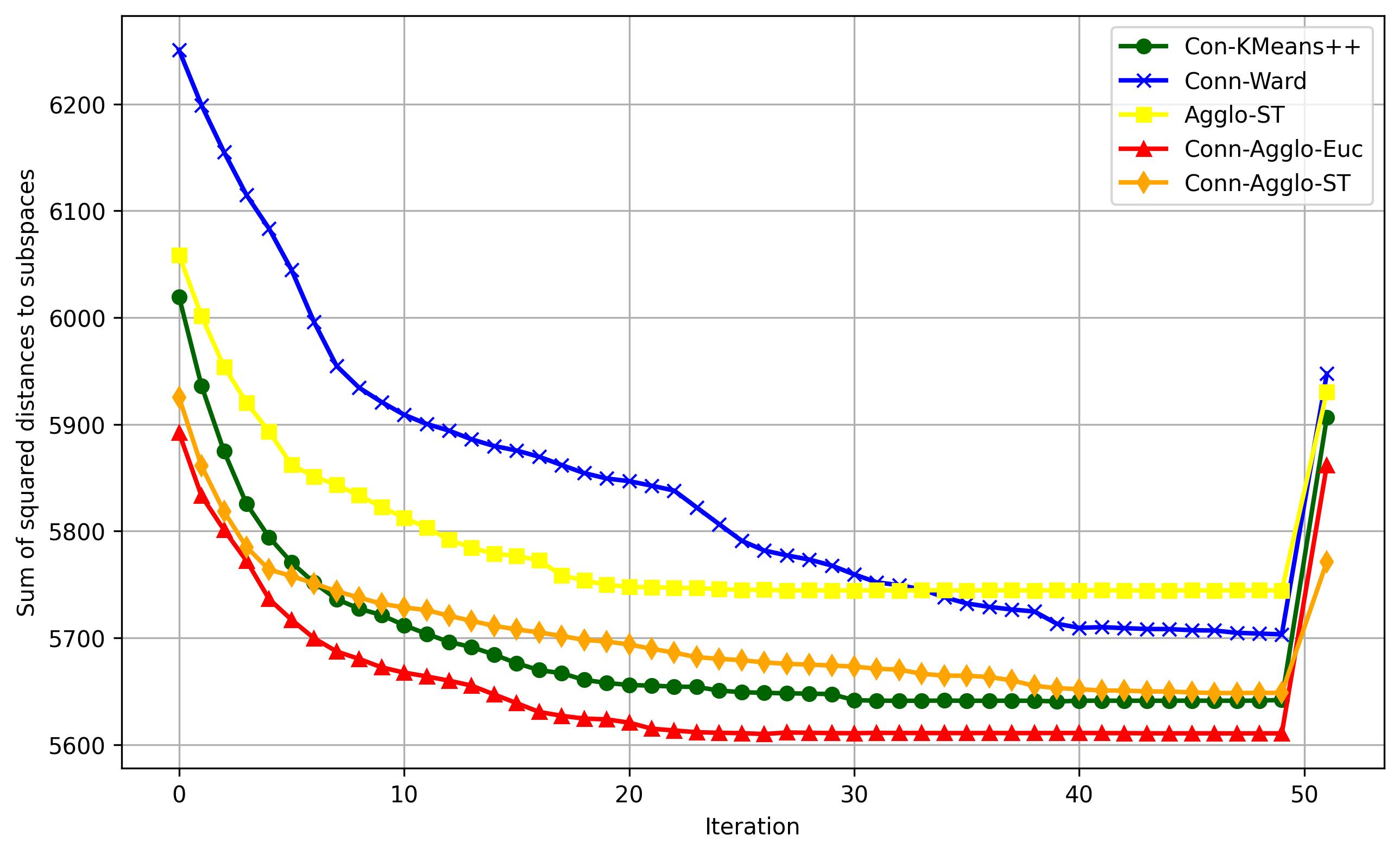}
    \caption{SmoothMerge}
  \end{subfigure}\hfil
  \begin{subfigure}[t]{0.4\textwidth}
    \centering
    \includegraphics[width=\linewidth]{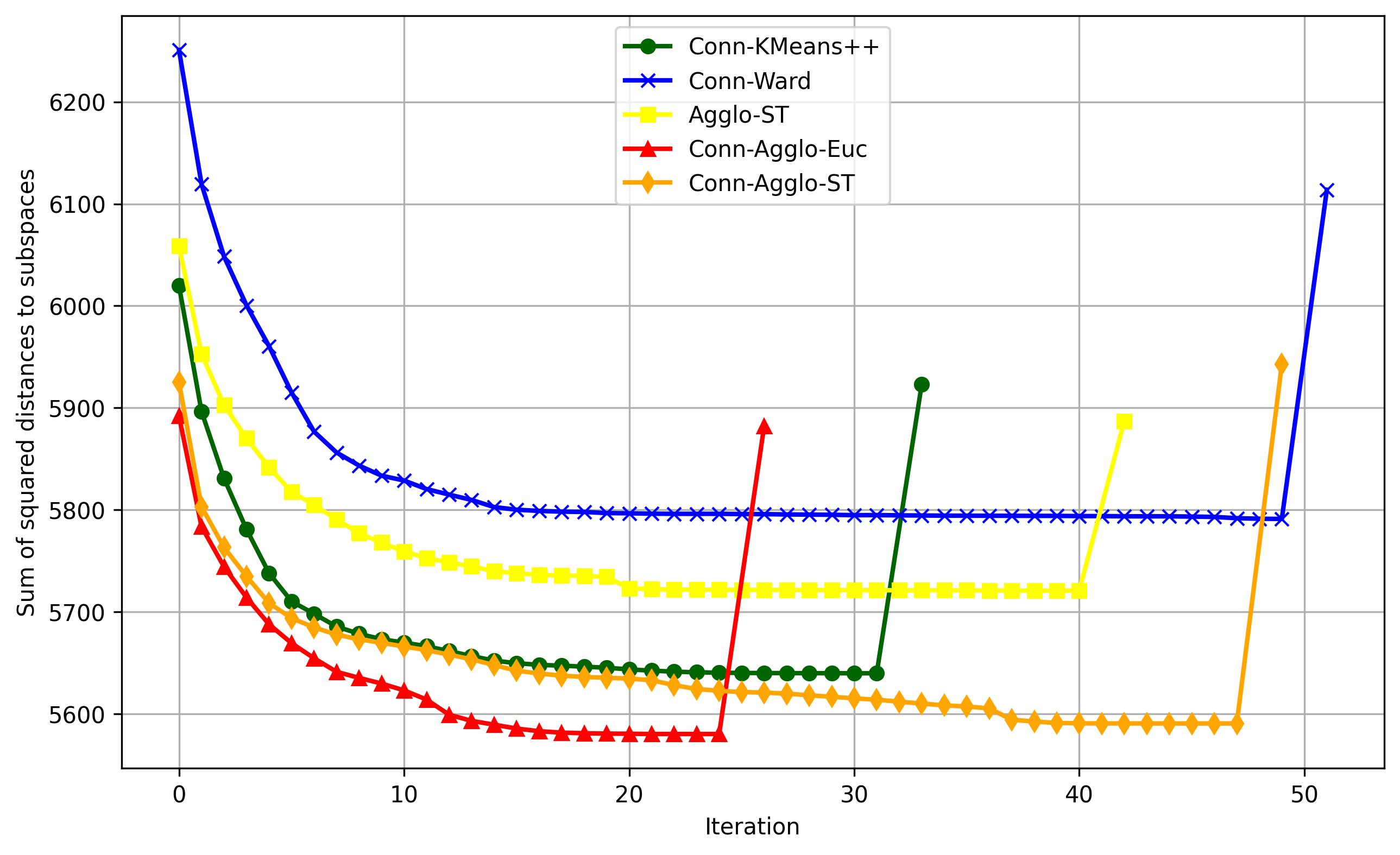}
    \caption{PostMerge}
  \end{subfigure}
  \caption{Cost function trajectories for the four connectivity heuristics over algorithm iterations. IterMerge and IntegratedConn show relatively continuous improvements, while SmoothMerge and PostMerge show late-stage cost increases due to delayed enforcement of connectivity.}
  \label{fig:connectivity_approaches}
\end{figure*}
%%%%%%%%%%%%%%%%%%%% filtered 8 cluster, smaller
\begin{table}[]
\centering
\caption{Initial and final subspace-reconstruction objectives for IterMerge on filtered data with $k=8$ and $m'=15$.
For this representative setting, IterMerge improves the objective value for all five initialization methods.}\label{tab:8_mini}
\small
\setlength{\tabcolsep}{4pt}
\begin{tabular}{@{}lrr@{}}
\toprule
& \multicolumn{1}{c}{Initial objective}
& \multicolumn{1}{c}{Final objective} \\
\midrule
Agglo-ST       & 6058.8 & 5681.5 \\
Conn-Agglo-Euc & 5892.1 & 5656.5 \\
Conn-Agglo-ST  & 5925.4 & 5716.8 \\
Conn-KMeans++  & 6019.4 & 5742.0 \\
Conn-Ward      & 6250.7 & 5728.4 \\
\bottomrule
\end{tabular}
\end{table}
We investigate how different strategies for reestablishing connectivity (see \Cref{sec:connectivity}) during subspace clustering impact the objective.
Therefore, we compare the four connectivity strategies across five
initializations, four subspace dimensions
$m'\in\{5,10,15,30\}$, four cluster counts
$k\in\{8,15,20,25\}$, and both filtered and unfiltered data.
This yields 160 base configurations. Each base configuration is evaluated with all four connectivity strategies, resulting in 640 runs.
For each base configuration, we compare the final
subspace-reconstruction objective attained by the four strategies.

Overall, \emph{IterMerge} attains the lowest final objective value in 118 of the 160 base configurations ($73.75\%$). \emph{SmoothMerge} and \emph{IntegratedConn} follow, \emph{PostMerge} is worst. 
In two of the 160 base configurations ($1.25\%$), Agglo-ST on
unfiltered data produces clusters that are too small to fit the requested subspace dimension; no refined solution is therefore available for these configurations (\Cref{sec:robustness}).
In addition to outperforming the other connectivity strategies most
frequently, \emph{IterMerge} lowers the objective value relative to the
corresponding initialization in every valid configuration.
\Cref{tab:8_mini} shows one representative setting; complete results
are given in \Cref{sec:results_appendix}.
As illustrated in \Cref{fig:connectivity_approaches}, because connectivity is enforced only at the end, \emph{SmoothMerge} and especially \emph{PostMerge} show a final cost spike. The iterative smoothing step in \emph{SmoothMerge} mitigates this effect, but does not remove it. 
\emph{IterMerge} and \emph{IntegratedConn} behave more like Lloyd's method, where costs improve through the iterations.
Even though the strict connectivity enforced by \Cref{algo:merging_for_connectivity} in \emph{IterMerge} can lead to a temporary increase in the cost function after the merging, it is the approach that most successfully balances the connectivity constraint with the subspace clustering objective.

\subsection{Comparison with Connected and Subspace-Clustering Baselines}
\begin{figure}[h]
\centering 
  \includegraphics[width=0.4\textwidth]{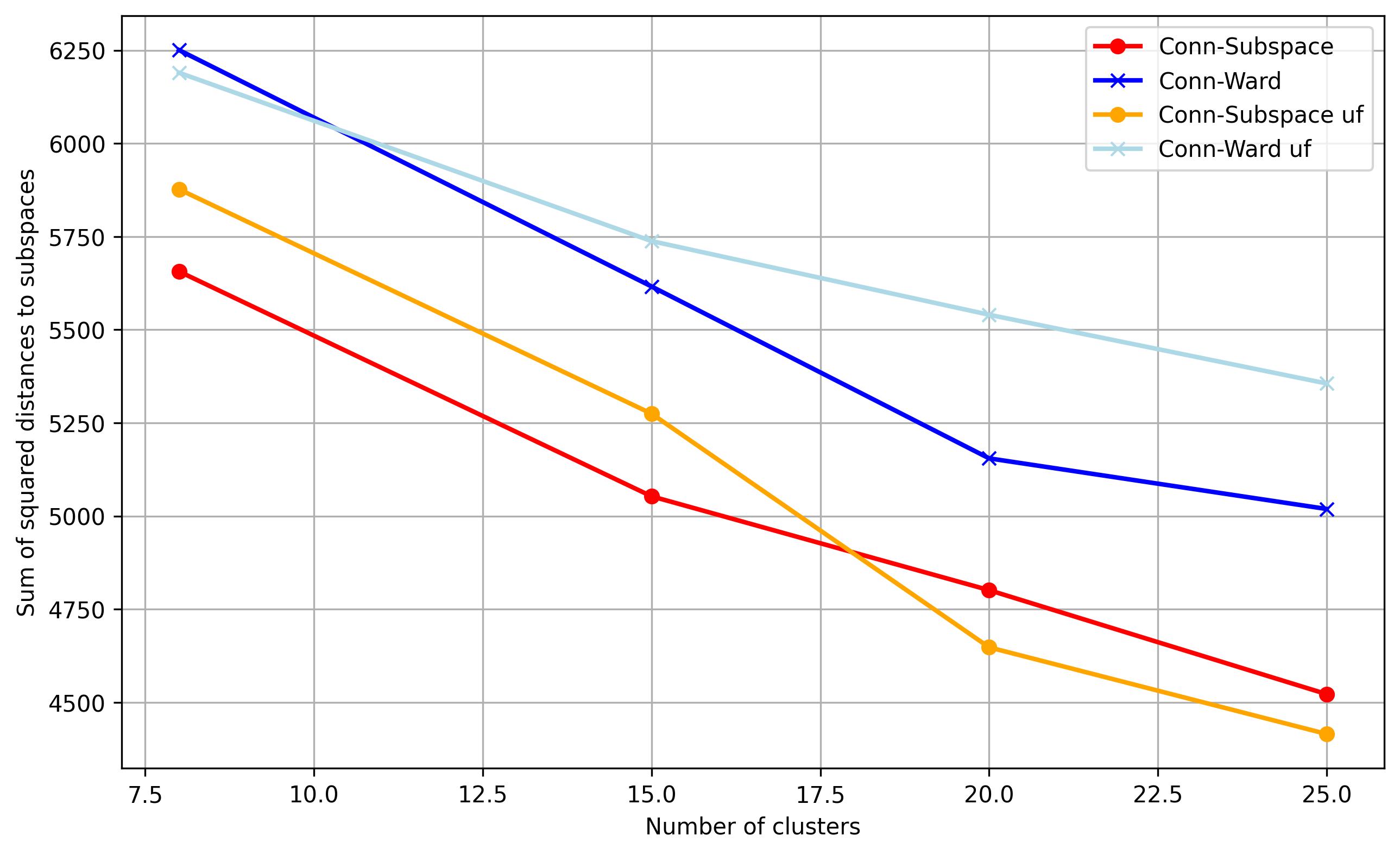}
    \caption{Subspace-reconstruction objective as a function of the targeted number of clusters $k$ for Conn-Ward and Conn-Subspace initialized with Conn-Agglo-Euc and using IterMerge. Results are shown for filtered and unfiltered data.}\label{fig:comp_CS_CW}
\end{figure}
Across the valid configurations, Conn-Subspace consistently lowers
the objective value relative to its initialization, even from suboptimal starts; the largest reduction is $20.3\%$ (Agglo-ST initialization, unfiltered data, $k=15$, $m'=30$).

Among the configurations compared in \Cref{fig:comp_CS_CW},
Conn-Agglo-Euc followed by Conn-Subspace with IterMerge attains the
lowest objective value for every tested $k$, on both filtered and
unfiltered data.
\emph{Conn-Ward} is the connected clustering method most readily available prior to this work. While \emph{Conn-Ward} relies on a hierarchical agglomeration based on local variance, it lacks the ability to revise cluster assignments after initial formation. In contrast, Conn-Subspace can iteratively correct misassignments.

\subsubsection{Comparison with Subspace-Clustering and
Spatial-Spectral Methods}
We additionally compare against two unconstrained subspace-clustering methods, SSC-OMP and EnSC, and two spatial-spectral methods from hyperspectral image segmentation, EGCSC and EKGCSC. None of these methods guarantees that each output cluster induces a single connected component in the prescribed graph. Implementation and parameter details are given in \Cref{sec:additional-experiments}.
SSC-OMP and EnSC both yield higher subspace-reconstruction
objectives and more fragmented partitions than our method. For
example, on unfiltered data with $k=25$, SSC-OMP attains an objective value of $14{,}473.1$, compared with $2{,}947.9$ for Conn-Agglo-Euc followed by Conn-Subspace with IterMerge at $m'=30$.
Despite only 25 targeted clusters, SSC-OMP produces up to $1{,}966$ connected components; EnSC is less fragmented (at most $222$) but its objective value stays consistently above ours (see \Cref{tab:number_of_ssc-omp-ensc,sec:results_appendix}).

EGCSC and EKGCSC yield less fragmentation than SSC-OMP and
EnSC and attain lower objective values, yet in every tested configuration they still produce more connected components than requested clusters and remain worse than our method (\Cref{tab:number_of_conn_comps_hsi,sec:results_appendix}).
In contrast, our method returns exactly $k$ connected clusters.

\subsection{Implications for Sea Level Research}\label{sec:oceanographic-interpretation}
Combining subspace clustering with \emph{Empirical Orthogonal Function} (EOF) analysis isolates regionally coherent sea level variability that global EOFs can blur.
We illustrate this using the best-performing configuration, Conn-Agglo-Euc followed by Conn-Subspace with IterMerge, on filtered data with $k=15$ (see \Cref{fig:clustering_geodesy}). 
\begin{figure}[h]
    \centering
    \includegraphics[width=0.4\textwidth]{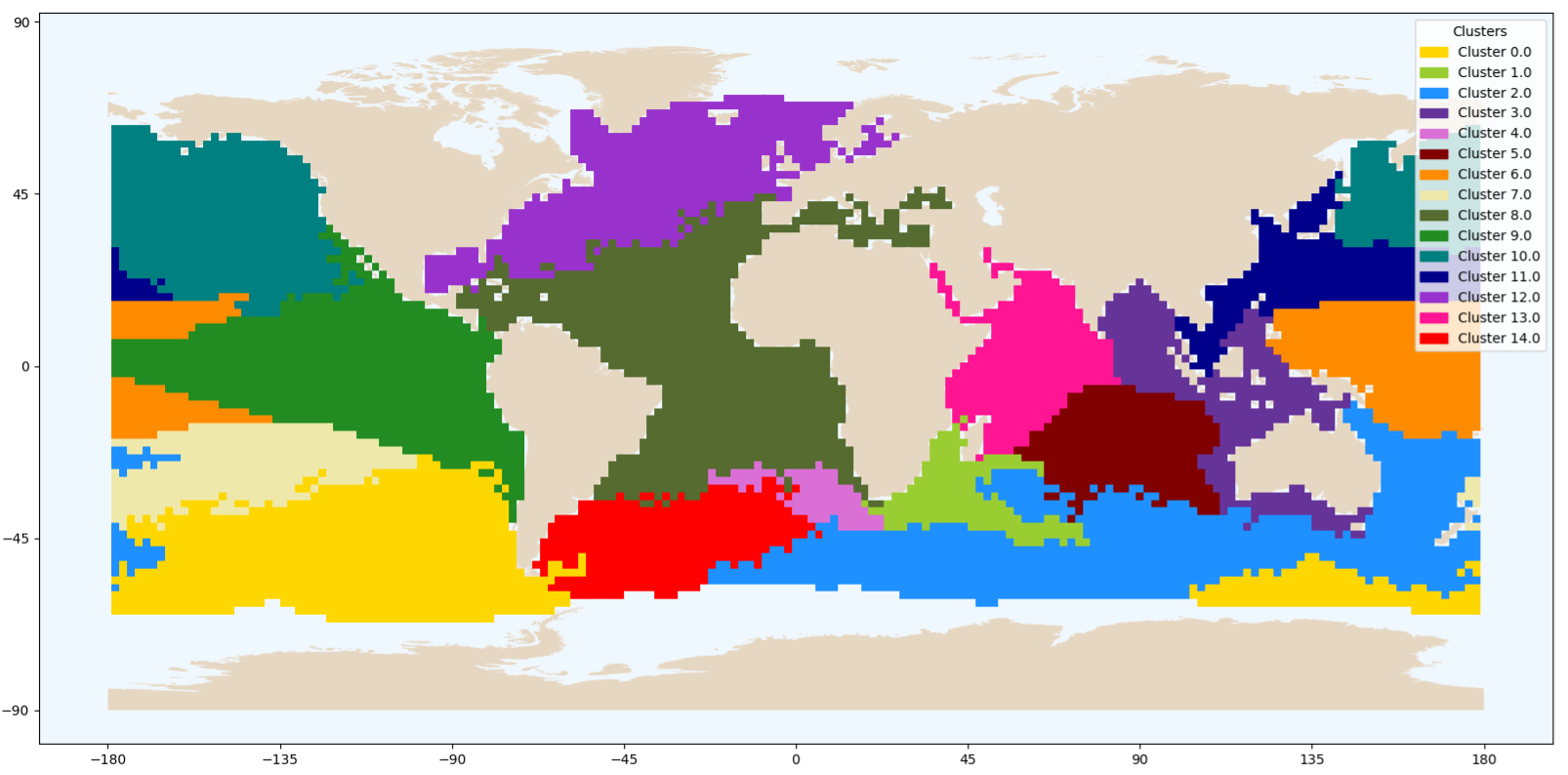}
    \caption{Clustering achieved by Conn-Subspace with IterMerge, for $k=15$.}
    \label{fig:clustering_geodesy}
    \vspace{-6pt}
\end{figure}
For example, cluster 1 directly relates to the Agulhas western boundary current, where water is transported southward along the South African coast and then diverted eastward back into the Indian Ocean.
Similarly, cluster 12 is related to the Gulf Stream region in the Atlantic Ocean. While the
trend and annual cycle dominate the global EOF decomposition
(Appendix, \Cref{fig:eofs}(a),(b)), regional decomposition reveals locations
in which other modes dominate.
\begin{figure}[h]
    \centering
  \begin{subfigure}{0.4\textwidth}
    \includegraphics[width=\textwidth]{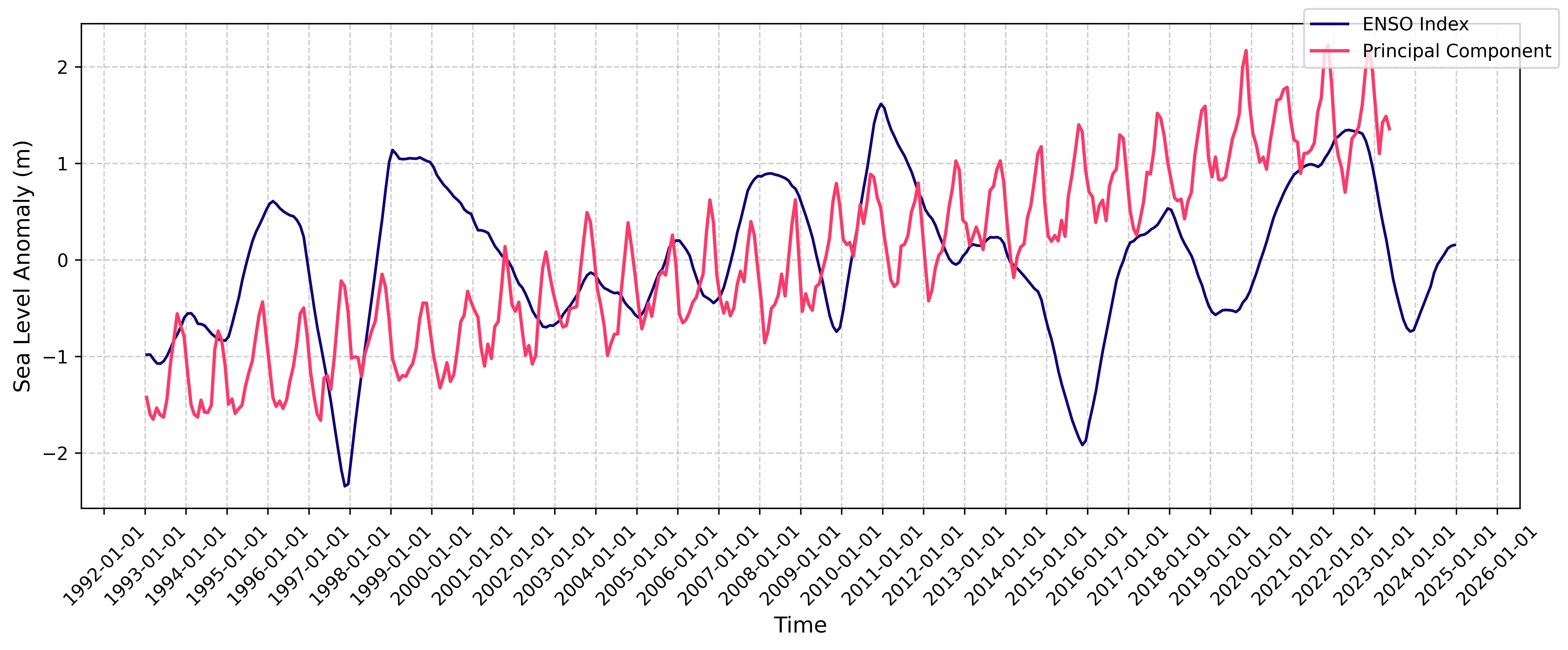}
    \caption{Signal from cluster 8 (red) together with the ENSO index (blue).}
    \label{fig:cluster8}
  \end{subfigure}
  \begin{subfigure}{0.4\textwidth}
    \includegraphics[width=\textwidth]{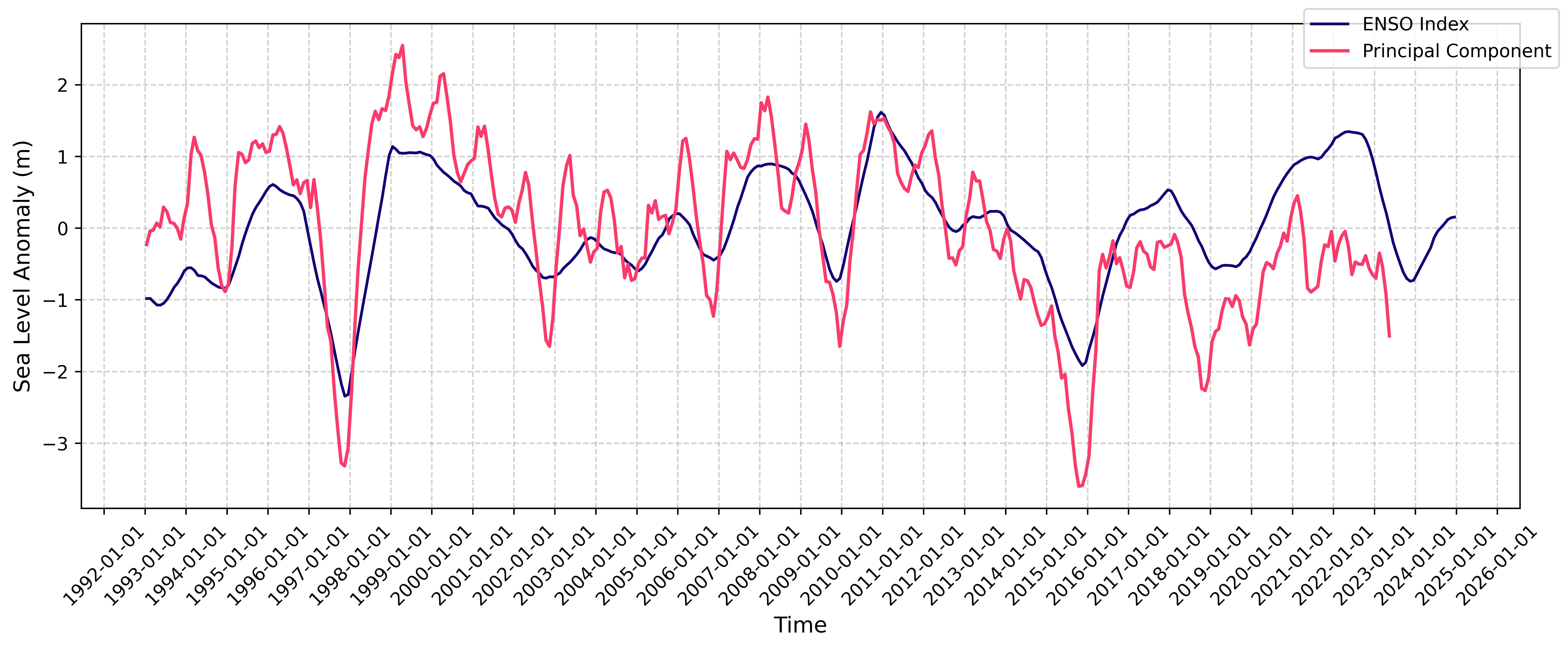}
    \caption{Signal from cluster 9 (red) together with the ENSO index (blue).}
    \label{fig:cluster9}
  \end{subfigure}
  \caption{ENSO index comparison} 
  \label{fig:cluster_compared_with_enso}
  \vspace{-6pt}
\end{figure}
Cluster 8 spans much of the Atlantic basin, whereas cluster 9 covers the eastern tropical Pacific. The leading principal component of cluster 9 shows clear visual correspondence with the El Niño--Southern
Oscillation (ENSO) index, wheras that of cluster 8 shows no comparable correspondence (\Cref{fig:cluster_compared_with_enso}). This illustrates how the clustering distinguishes regions according to their dominant temporal variability. In cluster 9, the annual cycle shifts to the second EOF. The leading component of cluster 13 similarly shows correspondence with the Indian Ocean Dipole (IOD) index, although the cluster extends beyond the IOD region and contains additional variability. 

This regional separation supports targeted analysis of sea level drivers and geodetic applications in which regional altimetry EOFs can improve tide-gauge reconstruction \cite{klos2019introducing,church2004}. Potential inconsistencies at cluster boundaries warrant further study.

%% file: conclusion.tex
\section{Conclusion}
We introduced \emph{Connected Subspace Clustering}, established strong inapproximability even for zero-dimensional subspaces and grid graphs with holes, and developed a scalable Lloyd-style heuristic with graph-based connectivity guarantees. Experiments on global sea level anomaly data show that the bottom-up merging (IterMerge) routine is strongest of the four connectivity strategies and that Conn-Subspace consistently improves its initialization. Unlike the evaluated baselines, it returns exactly $k$ connected regions while attaining substantially lower reconstruction cost. The resulting regions also expose interpretable regional modes, including ENSO-related variability.

Limitations of the proposed approach include sensitivity to the neighborhood graph and to the initialization; promising directions for the future include further study of the simultaneous optimization of the objective function and connectivity (bi-criteria optimization) and the extension to other datasets or settings.
Overall, combining the subspace clustering objective with topological information offers a practical route to achieve structure-aware clusterings.

%% file: appendix/appendix-related-work.tex
\subsection{Further related work}
\paragraph{Partitional $k$-clustering and $k$-means.}\label{related-work-k-means-and-connected-clustering}
 Our research belongs to the field of partitional metric clustering where we are given a point set $P$ from a metric space, and the goal is to partition these points into clusters while optimizing a cost function and possibly obeying side constraints. Famous partitional clustering problems are $k$-median, $k$-means or $k$-center. For us, $k$-means is specifically relevant: Given $P\subset \mathbb{R}^m$, find centers $c_1,\ldots,c_k \in \mathbb{R}^m$ to minimize the squared error when the points are replaced by their closest center, i.\,e., minimize $\sum_{x \in P} \min_{i=1,\ldots,k} ||x-c_i||^2$.
 This problem is NP-hard, both for constant $k$~\cite{ADHP09} and for constant $m$~\cite{MahajanNV12} (if both are simultaneously constant then there is a poly-time algorithm for the problem~\cite{IKI94}). The best known approximation algorithm for $k$-means achieves a factor of 5.912~\cite{CAEMN22}. 

\emph{Practical algorithms for $k$-means.} The widely known heuristic for $k$-means by Lloyd was developed in 1957~\cite{Llo57, Llo82}. A milestone result is due to \cite{AV07} who presented the $k$-means++ algorithm that outperforms Lloyd's algorithm empirically and achieves an $O(\log k)$-approximation guarantee. State-of-the-art algorithms for practical approximation improve this to an $O(1)$-worst-case guarantee~\cite{LS19}.

\paragraph{Exact and approximate algorithms for connected $k$-clustering}
\cite{GEGHBB08} and \cite{DELRRSW24} consider connected $k$-center clustering, where the objective is to minimize the maximum radius, rather than a sum of squared errors. 
\cite{GEGHBB08} show that connected $k$-center can be solved optimally when the neighborhood graph is a tree. \cite{DELRRSW24} provide an $O(\log^2 k)$ approximation for connected $k$-center with general connectivity graphs. They also consider the special case of Euclidean metrics in $\mathbb{R}^m$ where their result for connected $k$-center improves to an $O(m^{2.5})$ approximation guarantee. 

The connected $k$-means problem is a special case of the connected subspace clustering problem where $m'=0$. It is sum-based rather than the above radius-based $k$-center problem, and it uses squared Euclidean distances. 

\cite{gupta_clustering_2011} study the connected $k$-means problem under the restriction that the connectivity graph is a star-like graph. They show that the problem is NP-hard to approximate better than $\Omega(\log n)$ even for this restricted graph class and provide an $O(\log n)$-approximation for this special graph class. 

\cite{ELRRS25} show that the related metric connected $k$-median problem can be solved optimally on trees. For general connectivity graphs, they show that the problem is NP-hard to approximate  better than $\Omega(n^{1-\epsilon})$ and they give an $(n \cdot \log^2 k)$-approximation algorithm for this case.

There are no results about the special case of connectivity graphs that are grids.

%% file: appendix/appendix-remaining-proof.tex
\subsection{Proofs for hardness of connected $(k,p)$-clustering problem}\label{appendix-remaining-proof}

In this section we present our theoretical result on the computational hardness of the following problem.

\begin{restatable}{problem}{prbmain}{(connected $(k,p)$-clustering).}
    Given a metric space $(\mathbb{R}^m,d)$, $V\subset \mathbb{R}^m$,
a connected graph $G=(V,E)$ and parameters $k,p \in \mathbb{N}$.
Find a partition of $V$ into $k$ clusters $C_1,\dots,C_k$ with corresponding centers $c_1,\dots,c_k \in \mathbb{R}^m$ s.t.\ for $i \in [k]$, $C_i$ induces a connected subgraph in $G$ and 
$\sum_{i \in [k]} \sum_{x \in C_i} d(x,c_i)^p$ 
is minimal.
\end{restatable}

Note that this is a $m'$-subspace $k$-clustering problem with $m'=0$.

Concretely we will prove the following theorem.

\begin{restatable}{theorem}{thmmain}\label{thm-hardness}
    Given $k \in \mathbb{N}$ and a constant $p \in \mathbb{N}$.
    The connected $(k,p)$-clustering problem is NP-hard to approximate within factor $\Omega (|V'|^{1/2- \varepsilon})$, for any $\varepsilon > 0$ on grid graphs $G'=(V',E')$ with holes and  Euclidean $t$-dimensional space, where $t \geq k$.
\end{restatable}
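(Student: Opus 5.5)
We plan a gap-introducing reduction from the vertex-disjoint paths problem on grid graphs, which is NP-complete when the number of terminal pairs is part of the input and stays hard on subgraphs of the grid (grid graphs with holes). Given such an instance, that is, a grid graph $H$ on $N$ vertices with terminal pairs $(s_1,t_1),\dots,(s_k,t_k)$, we build a grid graph with holes $G'=(V',E')$ and a point embedding in $\mathbb{R}^t$ with $t\ge k$. The intended completeness case is that disjoint $s_i$--$t_i$ paths exist; there the optimal connected $k$-clustering should cost at most roughly $N\cdot L^p$. In the soundness case no such paths exist, and every feasible clustering should cost at least roughly $B\cdot L^p$. Here $B$ is a blow-up parameter chosen as $B=N^{c}$ for a large constant $c$. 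The ratio of $B/N$ to $|V'|$ then gives the factor $|V'|^{1/2-\varepsilon}$.

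\textbf{Construction.} First scale $H$ by a constant factor so that vertices and paths become corridors of width one surrounded by holes; this preserves the path structure inside a grid graph. At each terminal pair $i$, attach a square grid block $Q_i$ of $\Theta(B)$ vertices to $s_i$, and another to $t_i$. The blocks are placed far outside $H$ in the plane and joined to the terminal by a thin private corridor. All vertices of both blocks for pair $i$ get the coordinate $L\,e_i\in\mathbb{R}^t$. This uses one orthogonal direction per pair, which is why $t\ge k$ is needed. All other vertices, meaning those of $H$ and of the corridors, are placed at the origin. Blocks of different pairs are then at mutual distance $\sqrt2 L$. The layout has to be arranged so that the block for $s_i$ can reach the block for $t_i$ only through $H$. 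With $B$ chosen appropriately, this gives $|V'|=\Theta(kB+N^2)$.

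\textbf{Completeness.} Suppose disjoint paths $P_i$ exist. Let cluster $C_i$ consist of both blocks of pair $i$, their corridors and $P_i$. Every remaining vertex of $H$ is attached to some adjacent cluster, which is possible because $H$ is connected. Choose the center $L e_i$ for $C_i$. Block vertices then cost nothing, and each origin vertex costs $L^p$. The total is therefore at most $|V'\setminus\bigcup Q|\cdot L^p=O(N^2)L^p$; if corridors are kept short this improves to $O(N)L^p$.

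\textbf{Soundness, the main obstacle.} Suppose no disjoint paths exist. We must show that every connected $k$-partition pays $\Omega(B L^p)$. There are two cases.
\begin{itemize}
\item If some cluster contains $\Omega(B)$ vertices located at $L e_i$ and $\Omega(B)$ vertices located at $L e_j$ with $i\neq j$, then any center is at distance $\Omega(L)$ from one of the two groups. Hence the cost is $\Omega(B L^p)$.
\item Otherwise each cluster is dominated by at most one pair. There are $k$ clusters and $2k$ blocks, and a block split among several clusters pays nothing extra by itself. The key claim is that the heavy parts of both blocks of pair $i$ must then share a cluster. If no cluster contains large parts of both blocks of some pair, a counting argument over the $k$ clusters forces some cluster to contain large parts of two different pairs' blocks, which is the first case.
\end{itemize}
Once both blocks of each pair lie in a common connected cluster, connectivity yields vertex-disjoint $s_i$--$t_i$ paths through $H$. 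This contradicts the assumption. The delicate point is the thin corridors, which must prevent a cluster from routing around $H$; the layout has to guarantee this. Finally, setting $B=N^{c}$ makes the gap $B/N=\Omega(|V'|^{1/2-\varepsilon})$ once $c$ is large enough relative to $1/\varepsilon$. This yields the theorem for every constant $p$, and in particular for $m'=0$ with $p=2$.
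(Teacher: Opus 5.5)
Your soundness reasoning works in outline. With blocks that are pendant at their terminals and a threshold of order $B/k$ for ``large'', the counting goes through. Arguing directly with continuous centers is also a legitimate alternative to the paper's route through the discrete variant and the $2^p$-loss lemma. The gap is the construction, which is exactly where the difficulty lies: you assert it rather than build it, and as described it cannot be realized. The terminals of the path instance may lie deep inside $H$. A grid graph is planar with edges meeting only at grid points, so a ``private corridor'' from such a terminal to a block placed far outside $H$ must share a vertex with $H$. That vertex gives the block a second entrance into $H$ and gives other pairs new routes. This breaks both the pendant structure your soundness argument needs and the equivalence with the original path instance. Scaling $H$ by a constant factor does not help: it leaves only $O(1)$ free area next to an interior terminal, while a block of $\Theta(B)$ vertices must fit there.

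The paper's missing idea is to create that room by stretching. Every edge incident to the row or column of a terminal is replaced by a path with $m$ intermediate vertices. This keeps the graph a grid graph with holes and opens empty quadrants of side about $m$ around each terminal. The block $B_{s_i}$ is placed there as four $(m+1)\times m$ subgrids attached at $s_i$. A pigeonhole argument over the four subgrids then forces $s_i$ into the cluster whose center lies in $B_i$. The price is that the $\Theta(m)$ subdivision vertices on the stretched edges can be used by any pair's path, so they must sit at the origin and are paid in the completeness case. Hence the completeness cost is only bounded by $n+n^2m$, against a soundness bound of $m^2$, while $|V'|=O(km^2+mn^2)$. This is precisely where the exponent $1/2$ comes from. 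Your own accounting ($|V'|=\Theta(kB+N^2)$, completeness $O(N)L^p$, soundness $\Omega(BL^p)$) would give $|V'|^{1-\varepsilon}$ rather than $|V'|^{1/2-\varepsilon}$. That is a sign the cost of making room for the blocks has been left out.
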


To do so we first consider a discrete variant of our clustering problem that restricts the centers to originate from the set of input vertices.

\begin{problem}{(discrete connected $(k,p)$-clustering).}
Given 
a connected graph $G=(V,E)$, a distance metric $d : V \times V \rightarrow V$ and parameters $k,p \in \mathbb{N}$.
Find a partition of $V$ into $k$ clusters $C_1,\dots,C_k$ with corresponding centers $c_1,\dots,c_k \in V$ s.t.\ for $i \in [k]$, $C_i$ induces a connected subgraph in $G$ that minimizes $\sum_{i \in [k]} \sum_{x \in C_i} d(x,c_i)^p.$ 
\end{problem}
The cases $p = 1$ and $p=2$ are commonly known as $k$-median and $k$-means clustering, respectively.

We show the hardness of discrete connected $(k,p)$-clustering by reducing from the vertex disjoint paths problem in grid graphs with holes, which is known to be NP-complete~\cite{kramer1984complexity,formann1990vlsi,lynch1975equivalence}.

\begin{problem}{(vertex-disjoint paths problem).}
Given an undirected and unweighted graph $G=(V,E)$ and $k$ pairs of vertices $(s_1,t_1),(s_2,t_2),\dots,(s_k,t_k)$. Decide if there exist vertex-disjoint paths $P_1,\dots,P_k$ s.t.\ $P_i$ is an $(s_i,t_i)$-path.
\end{problem}

 Let $G = (V, E)$ with $(s_1, t_1), \ldots, (s_k, t_k)$ be the input instance for the \emph{vertex-disjoint paths} problem, where $G$ is a grid and $s_i, t_i \in V$ for all $1 \le i \le k$. 
 Since $G$ is a grid, each vertex in $V$ can be represented by a pair of integer coordinates. For $v \in V$, we write $v = (x(v), y(v))$, where $x : V \rightarrow \mathbb{N}$ and $y : V \rightarrow \mathbb{N}$. We also assume that $G$ might contain holes, i.e., $G$ is not necessarily a complete grid.

Based on $G = (V, E)$ and the $k$ node pairs we construct an instance for \emph{discrete connected $(k,p)$-clustering}. Let $m$ denote some even integer parameter to be chosen later. Then the connectivity graph $G_m=(V',E')$ is defined as follows:

We start with $(V',E')=(V,E)$ and then we apply the following modifications. For each horizontal edge $\{u,v\}\in E$ (i.e., $y(u)=y(v)$ and $|x(u)-x(v)|=1$) in the grid, we check whether there exists a pair of vertices $(s_i,t_i)$ with $x(s_i)\in\{x(u),x(v)\}$ or $x(t_i)\in\{x(u),x(v)\}$, respectively. If this is the case then we replace in $E'$ the edge $(u,v)$ by a path (in the same row $y(u)=y(v)$) with $m$ intermediate vertices. Let $V'_{(u,v)}$ be the set of vertices on this path, including $u$ and $v$ itself. If there does not exist a node $s_i$ or $t_i$ in the same column as $u$ or $v$ then we define $V'_{(u,v)}=\{u,v\}$.
Analogously, we replace each vertical edge $\{u,v\}\in E'$ by a path if there exists a node $s_i$ or $t_i$ in one of the corresponding rows $y(u)$ and $y(v)$.

For $i \in [k]$, we add to $V'$ an  $(2m+1) \times (2m+1)$ grid centered at $s_i$.  Let $B_{s_i}$ be the vertex set consisting of $s_i$ and the vertices of this grid. The edges on the grid $B_{s_i}$ are chosen such that the grid consists of
    $4$ disjoint $((m+1)\times m)$ complete grids $X^1_{s_i},X^2_{s_i},X^3_{s_i},X^4_{s_i}$ and there is no edge between two grids $X^a_{s_i},X^b_{s_i}$ with $a \neq b$ and each such grid is connected to $s_i$. Additionally if there exists an edge $(u,s_i)$ in $E$ then one subgrid is connected to $u$. We do the same construction for $t_i$ and let $B_i = B_{s_i}\cup B_{t_i}$.    

Then $V' = (\bigcup_{i \in [k]}B_i) \cup  (\bigcup_{(u,v)\in E}V'_{(u,v)})\cup V$.   
See Figure \ref{fig: construction} for an example of the construction. Note that by construction, if $G$ is a grid then $G_m$ is a grid with holes.  

We now define the distance function $d: V'\times V' \rightarrow \mathbb{R}_{\ge 0}$. Let $A=V'\setminus \bigcup_{i\in [k]} B_i $. We set $d(x,y) = 0$ with $x,y \in A$ and $d(x,y) = 0 $ with $x,y \in B_i$ for $i \in [k]$. Additionally $d(x,y) = 1$ with $x \in  A, y \in B_i$ and finally 
$d(x,y) = \lambda $ with $\lambda \in (1,2]$ and $x\in B_i,y \in B_j$ for $i\neq j$. The constructed metric $d$ can be viewed as a star metric, where all vertices in $ A$ are located in the center and all vertices in $B_i$ are located at a leaf with distance $1$. 

\begin{figure}[t]
    \centering
    \begin{minipage}[b]{0.47\linewidth}
        \centering
        \includegraphics[page=1,width=\linewidth]{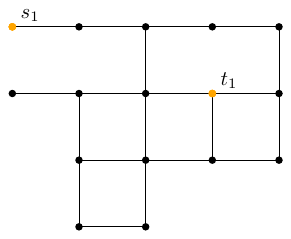}
        \par\smallskip (a)
    \end{minipage}\hfill
    \begin{minipage}[b]{0.47\linewidth}
        \centering
        \includegraphics[page=3,width=\linewidth]{fig/connected-k-median-construction.pdf}
        \par\smallskip (b)
    \end{minipage}
    \caption{Figure (a) shows a grid graph $G = (V,E)$ and vertex pair $(s_1,t_1)$.
    Figure (b) shows the graph $G_m$ with $m = 3$. The subgraphs added at $s_1$ and $t_1$  are indicated in color blue.
    Each edge $(u,v) \in E$ that crosses the row or column of $s_1$, $t_1$ is replaced by a path as indicated by the red dots.
    \label{fig: construction}}
\end{figure}

We first argue that if there exists a solution for the vertex-disjoint paths problem in $G$, then there exists a solution for the discrete connected $(k,p)$-clustering in $G_m$ with cost at most $n+n^2m$.

 \begin{lemma}\label{lem: ExistsVDPsol}
    If there exist vertex-disjoint $(s_i,t_i)$-paths $P_i$ for all $i \in [k]$ in $G$, then for all $p \in \mathbb{N}$ there exists a solution for  discrete connected $(k,p)$-clustering in $G_m$ with cost at most $|V|+|V|^2m$. 
 \end{lemma}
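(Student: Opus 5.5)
We build an explicit clustering from the disjoint paths $P_1,\dots,P_k$ and bound its cost term by term. Since all distances are $0$, $1$ or $\lambda\le 2$, the cost is essentially a count of misplaced vertices. The idea is that cluster $C_i$ holds all of $B_i$, with center some vertex of $B_{s_i}$, so every vertex of $B_i$ contributes $0$. Cluster $C_i$ also holds the $G_m$-realization of $P_i$, which links $B_{s_i}$ to $B_{t_i}$. The leftover vertices of $A$ get attached to some cluster and pay distance $1$ each, hence cost $1^p=1$ regardless of $p$. No vertex pays $\lambda$, because no $B_j$ vertex is ever put in $C_i$ for $i\neq j$.

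The steps, in order, are as follows.

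\textbf{Step 1: route $C_i$.} Let $P'_i$ be the path in $G_m$ obtained from $P_i$ by replacing every edge $\{u,v\}$ by the path $V'_{(u,v)}$. We must check that it meets $B_{s_i}$ and $B_{t_i}$ correctly, since $s_i\in B_{s_i}$. The first edge of $P_i$ leaves $s_i$ towards a grid neighbour $u$. By construction, $s_i$ (or the subgrid attached to $u$) is adjacent to $u$'s subdivided edge, so $P'_i$ enters $B_{s_i}$ through $s_i$ itself. Every subgrid $X^a_{s_i}$ is connected to $s_i$, so $B_{s_i}\cup B_{t_i}\cup P'_i$ is connected. Set $C_i:=B_i\cup P'_i$. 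Vertex-disjointness of the $P_i$ gives disjointness of the $P'_i$, since interior subdivision vertices belong to a unique edge. A $P_j$ with $j\ne i$ cannot pass through $s_i$ or $t_i$, so the $C_i$ are pairwise disjoint.

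\textbf{Step 2: absorb the rest of $A$.} Let $R=V'\setminus\bigcup_i C_i\subseteq A$. We must attach every vertex of $R$ to some $C_i$ while keeping connectivity. The original graph $G$ is connected, and $G_m$ restricted to $A$ together with the $s_i,t_i$ is connected. Each component of $G_m[R]$ is therefore adjacent to some $C_i$, either through an original grid vertex or along a subdivided path whose endpoint lies in some $C_i$ or in another component. Process components of $R$ in BFS order from $\bigcup_i C_i$ and add each to an adjacent cluster. Connectivity is preserved because each added component is connected and touches the cluster. One caveat is that the $B_{s_i}$ subgrids might not touch $A$ except through $s_i$ or $u$, but that is harmless because the connectivity argument uses only $A$-edges.

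\textbf{Step 3: count the cost.} All of $B_i$ and its center have distance $0$. Every vertex of $A$ in any cluster has distance exactly $1$ to the center in $B_i$, so the total cost is $|A|$. The vertices of $A$ are the original vertices, at most $|V|$ of them, plus the subdivision vertices, at most $m$ per edge. A grid has at most $2|V|\le |V|^2$ edges (for $|V|\ge2$), giving $|A|\le |V|+|V|^2m$ as claimed. A more careful count would only subdivide edges in rows and columns containing terminals, but the crude bound suffices.

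The main obstacle is Step 1's local check at the terminals. We must verify that the edge $\{s_i,u\}$, whose subdivided path is attached via the construction of $B_{s_i}$, really joins $B_{s_i}$ to $P'_i$ without passing through vertices of another $B_j$. We must also verify that $s_i$ being a vertex of both $V$ and $B_{s_i}$ causes no double counting. I would handle this by reading the construction as $s_i\in B_{s_i}$, with edges from $s_i$ to original neighbours retained in $E'$, possibly subdivided. Under that reading, $P'_i$ starts at $s_i$ directly. If instead the edge is rerouted via a subgrid, the path simply continues from that subgrid, which still lies inside $B_i\subseteq C_i$.
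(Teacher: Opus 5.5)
Your proposal is correct and follows essentially the same route as the paper. Both take the clusters to be $B_i$ together with the subdivided path $P_i$ (the paper places the center at $s_i$, you place it anywhere in $B_{s_i}$), absorb the leftover vertices of $A$ greedily into adjacent clusters, and note that the cost is exactly $|A|$. The differences are cosmetic: you absorb whole components of the leftover set instead of growing clusters vertex by vertex, and you count subdivision vertices via $|E|\le 2|V|$ for grids instead of the paper's $|E|\le |V|^2/2$ with $|V'_{(u,v)}|\le 2m$.
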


\begin{proof}
To show the lemma, we construct a feasible solution for discrete connected $(k,p)$-clustering on $G_m$ that has the claimed cost.
We choose the vertices $s_1,\dots,s_k$ as the $k$ centers and assign all vertices in $B_i$ to $s_i$.

Additionally, for all edges $(u,v)$ of path $P_i$ we assign the vertices $V'_{(u,v)}$ to $s_i$ to ensure that all points assigned to $s_i$ are connected.  
Next we deal with vertices in $V'$ that are not yet assigned. While there are unassigned vertices, we pick an arbitrary  vertex $v \in V'$ that is assigned to some cluster but has unassigned neighbors. We then assign each neighbor of $v$ to the same cluster as $v$. By assigning this way, we make sure that all clusters in $G_m$ are connected.

Now we bound the cost of this solution. Every vertex in $A$ has distance $1$ to its assigned center. For all $i\in[k]$ the vertices in $B_i$ have distance $0$ to their assigned center since we have a center located in each $B_i$. Therefore we have costs 
$|A| \leq |V| + \left |\bigcup_{(u,v)\in E} V_{(u,v)}'\right | \leq |V| + |V|^2m,$
where we used that $|E| \leq \binom{|V|}{2} \leq |V|^2/2$ and for all $(u,v) \in E$, $|V'_{(u,v)}| \leq m+2\le 2m$.
\end{proof}

Next we argue that if there does not exist a solution for the vertex-disjoint paths problem in $G$, then there does not exist a  solution for discrete connected $(k,p)$-clustering in $G_m$ with cost smaller than $m^2$.

\begin{lemma}\label{lem: ExistsNoVDPsol}
    If there does not exist a solution for the vertex-disjoint paths problem in $G$, then for all $p \in \mathbb{N}$ there does not exist a solution for discrete connected $(k,p)$-clustering in $G_m$ with cost smaller than $m^2$. 
\end{lemma}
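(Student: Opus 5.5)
We prove the contrapositive: from any feasible discrete connected $(k,p)$-clustering of $G_m$ with cost below $m^2$ we extract vertex-disjoint $(s_i,t_i)$-paths in $G$.

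\textbf{Step 1: what cost below $m^2$ forces.} In the star metric every pairwise distance lies in $\{0,1,\lambda\}$ with $\lambda\le 2$, so each vertex contributes $0$, $1$ or $\lambda^p\ge 1$. Hence fewer than $m^2$ vertices have positive cost. For each $i$, the block $B_{s_i}$ has roughly $4m(m+1) > m^2$ vertices. Consequently some cluster has its center inside $B_{s_i}$; otherwise every vertex of $B_{s_i}$ pays at least $1$. The same holds for $B_{t_i}$.

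\textbf{Step 2: a bijection between clusters and pairs.} There are $2k$ blocks, $k$ clusters and $k$ centers. Suppose two blocks $B_{s_i}$ and $B_{t_i}$ must each host a center, and centers in the same $B_i$ cost nothing to each other. Then the intended count is: each of the $k$ centers lies in some $B_i$, and every block $B_{s_i}$ and $B_{t_i}$ is mostly served by the center lying in $B_i$. Step 1 only forces a center in $B_{s_i}\cup B_{t_i}$, i.e.\ in $B_i$, since $d=0$ within $B_i$. This needs tightening. By pigeonhole over the $k$ groups $B_1,\dots,B_k$ with $k$ centers, each $B_i$ contains exactly one center $c_i$.

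Next, at least half of the vertices of $B_{s_i}$, and likewise of $B_{t_i}$, must belong to the cluster of $c_i$. Otherwise more than about $2m^2$ vertices pay $\lambda^p\ge1$.

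\textbf{Step 3: connectivity forces an $s_i$--$t_i$ path through $A$.} Each $B_{s_i}$ consists of four subgrids $X^a_{s_i}$ attached only at $s_i$, plus possibly one attachment to a neighbor $u$. If $s_i$ were not in cluster $i$, then at most one of the subgrids, the one attached to $u$, could reach the rest of cluster $i$. Cluster $i$ would then contain at most about $m(m+1)$ vertices of $B_{s_i}$, and the other roughly $3m^2$ vertices pay at least $1$, which is too much. Hence $s_i$ and $t_i$ both lie in cluster $i$.

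Connectivity of cluster $i$ now yields an $s_i$--$t_i$ path $Q_i$ inside cluster $i$ in $G_m$. Clusters are disjoint, so the paths $Q_1,\dots,Q_k$ are vertex-disjoint. Any part of $Q_i$ passing through another block $B_{s_j}$ could only enter and leave via $s_j$, which belongs to cluster $j$; so $Q_i$ stays within $A\cup\{s_i,t_i\}$. Contracting each subdivided path $V'_{(u,v)}$ back to its edge $\{u,v\}$ maps $Q_i$ to a walk in $G$. Shortcutting repeats turns it into an $(s_i,t_i)$-path $P_i$. These paths remain vertex-disjoint because original grid vertices are shared between $G$ and $G_m$. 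This contradicts the assumption that no solution exists.

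\textbf{Main obstacle.} The hardest part is the counting in Steps 2--3. One must verify that the stated block sizes ($(m+1)\times m$ subgrids, total about $4m^2$) leave enough slack against the bound $m^2$. For instance, losing three subgrids costs about $3m^2\ge m^2$, and losing half a block likewise exceeds $m^2$.

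This argument does not seem to use the subdivided edges at all. Presumably they matter only for the approximation gap, through the cost bound in Lemma~\ref{lem: ExistsVDPsol}, and not for this lemma. I would double-check this, and also check whether a cluster could touch a foreign $B_j$ only through the attachment vertex $u$. That case is handled because $u\in A$ and entering the subgrid does not create a through-route.
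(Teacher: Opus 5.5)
Steps 1 and 2 are fine and agree with the paper: counting forces a center into every $B_i$, and pigeonhole then leaves exactly one center $c_i$ in each $B_i$. Write $C_i$ for the cluster of $c_i$. The gap is in Step 3. You model $B_{s_i}$ as four subgrids hanging off $s_i$ ``plus possibly one attachment to a neighbor $u$''. From this you conclude that if $s_i\notin C_i$, at most one subgrid can reach the rest of $C_i$, so $C_i$ contains only about $m(m+1)$ vertices of $B_{s_i}$. But the construction attaches a subgrid to $u$ for \emph{every} edge $\{u,s_i\}\in E$. It has to, because in Lemma~\ref{lem: ExistsVDPsol} the path $P_i$ may leave $s_i$ in any direction. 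So up to four of $X^1_{s_i},\dots,X^4_{s_i}$ have an exit, and the paper only uses that each subgrid has \emph{at most one} edge leaving $B_{s_i}$. With several exits, $C_i$ can arrive from $B_{t_i}$ through $A$ and enter three subgrids of $B_{s_i}$ via their exits while $s_i$ lies in another cluster. Then $C_i$ contains about $3m(m+1)$ vertices of $B_{s_i}$, your bound fails, and your counting no longer rules this configuration out.

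The missing idea is to look at the cluster $C_j$, $j\neq i$, that contains $s_i$.
\begin{itemize}
\item Fewer than $m^2$ vertices of $B_j$ lie outside $C_j$, since each would pay $\lambda^p\ge 1$. So $C_j$ has vertices outside $B_{s_i}$ and must leave $B_{s_i}$.
\item Because $s_i$ is enclosed by the pairwise non-adjacent subgrids, $C_j$ leaves through a single $X^a_{s_i}$ and its unique exit vertex.
\item $X^a_{s_i}$ can only be left via $s_i$ or that exit vertex, and both lie in $C_j$. Since $C_i$ is connected and meets $B_{t_i}$, it avoids $X^a_{s_i}$ entirely.
\item These $m^2+m>m^2$ vertices then each pay at least $1$, which already exceeds the budget.
\end{itemize}
The paper runs the same exclusivity argument from the other side. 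At least $|B_{s_i}|-m^2=3m^2+4m+1>3(m^2+m)$ vertices of $B_{s_i}$ lie in $C_i$, so $C_i$ meets all four subgrids. Without $s_i$ it would have to occupy all four exits, leaving no way out for the cluster of $s_i$.

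Your path extraction is also slightly off. Since $s_i$ is enclosed, $Q_i$ does not stay in $A\cup\{s_i,t_i\}$: it passes through one subgrid of $B_{s_i}$ and one of $B_{t_i}$. Those segments must also be contracted, to the edges $\{s_i,u\}$ and $\{t_i,u'\}$ of $G$. Finally, the subdivided edges are not there for the gap. They make room for the $(2m+1)\times(2m+1)$ blocks so that $G_m$ remains a grid graph with holes.
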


\begin{proof}
    Assume there is a solution for  discrete connected $(k,p)$-clustering in $G_m$ with cost smaller than $m^2$. 
    Then there needs to be a center located in each set $B_i$, since otherwise each vertex in $B_i$ incurs distance at least $1$ to its assigned center and therefore the total cost would be at least $|B_i| = 2\cdot (2m+1)^2  > m^2$. Therefore, each set $B_i$ contains one center $c_i$ and the set $A$ contains none. 
    
    Additionally, at most $m^2$ vertices of the set $B_{s_i}$ can be assigned to a center not equal to $c_i$, otherwise they would incur cost at least $m^2$. This implies that at least $|B_{s_i}| - m^2 =3m^2+4m+1$   vertices in $B_{s_i}$ are assigned to $c_i$.  
   Note that $B_{s_i}$ consists of $4$ disjoint parts with size $m^2 + m$ each. Since at least $3m^2+4m +1$ vertices in $B_{s_i}$ are assigned to $c_i$ it holds by the pigeonhole principle that at least one vertex from each part $X^1_{s_i},\dots X^4_{s_i}$  is assigned to $c_i$. That implies that the cluster corresponding to $c_i$ contains connected subsets of all sets $X^1_{s_i},\dots, X^4_{s_i}$. Since
   $s_i$ is enclosed by $X^1_{s_i},\dots X^4_{s_i}$  and each $X^1_{s_i},\dots, X^4_{s_i}$ has at most one edge that leaves $B_{s_i}$,  it must be that $s_i$ is  assigned to $c_i$ as otherwise the cluster containing $s_i$ would not be connected.
   Analogously we argue that $t_i$ is assigned to $c_i$. The cluster that contains $s_i$ and $t_i$ also needs to contain an $(s_i,t_i)$-path in $G_m$ to ensure that the cluster is connected. Since the clusters are disjoint, this would imply that there are vertex-disjoint $(s_i,t_i)$-paths in $G$, which is a contradiction to the assumption.
\end{proof} 

We are now ready to prove our hardness result for  discrete $(k,p)$-connected clustering.

\begin{theorem}\label{thm: approx hardness discrete}
Given $k,p \in \mathbb{N}$.
Discrete connected $(k,p)$-clustering is NP-hard to approximate within factor  $\Omega(|V'|^{1/2 -\varepsilon})$, for any $\varepsilon >0$, on grid graphs $G'=(V',E')$ with holes.
\end{theorem}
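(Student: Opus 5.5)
The plan is a gap-introducing reduction. We use the construction $G_m$ above together with Lemma~\ref{lem: ExistsVDPsol} and Lemma~\ref{lem: ExistsNoVDPsol}, then choose $m$ as a suitable polynomial in $n:=|V|$ so that the gap $m^2/(n+n^2m)$ becomes $\Omega(|V'|^{1/2-\varepsilon})$.

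First, fix $\varepsilon>0$ and an instance of vertex-disjoint paths on a grid graph $G$ with holes, where $n=|V|$. We may assume $n\ge 2$ and $k\le n$. We build $G_m$ with $m$ the even integer nearest to $n^{c}$, where the constant $c=c(\varepsilon)$ is fixed below. The construction is clearly polynomial for constant $c$: we add at most $|E|$ paths of $m$ vertices and $2k$ blocks of $(2m+1)^2$ vertices, and we write down the star metric $d$. It also yields a grid graph with holes. Counting vertices gives
\[
|V'|\le n+|E|\,m+2k(2m+1)^2=O(n^2m+nm^2)=O(n^{2+c}),
\]
using $c\ge 1$.

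By Lemma~\ref{lem: ExistsVDPsol}, a yes-instance has optimum at most $n+n^2m=O(n^{2+c})$. By Lemma~\ref{lem: ExistsNoVDPsol}, a no-instance has optimum at least $m^2=\Omega(n^{2c})$. The ratio is therefore $\Omega(n^{c-2})$. We want this to be at least $|V'|^{1/2-\varepsilon}=O(n^{(2+c)(1/2-\varepsilon)})$. That requires $c-2>(2+c)(1/2-\varepsilon)$, i.e.\ $c(1/2+\varepsilon)>3-2\varepsilon$, so choosing $c>6/(1+2\varepsilon)$ suffices. Any fixed $c\ge 6$ works for every $\varepsilon>0$, with strict slack that absorbs the hidden constants for large $n$; small instances are solved by brute force. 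Hence an $\alpha$-approximation with $\alpha=o$-small multiple of $|V'|^{1/2-\varepsilon}$ would output cost below $m^2$ exactly on yes-instances and so decide vertex-disjoint paths on grids with holes, which is NP-complete. This is the standard gap argument.

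The main obstacle is the tightness of the accounting. The bound $|E|\le n^2/2$ used in Lemma~\ref{lem: ExistsVDPsol} is loose for grids, where $|E|\le 2n$. With the loose bound the exponents must be checked carefully, as done above. With the sharper bound, yes-costs are $O(nm)$ and $|V'|=O(nm^2)$, giving ratio $m/n$ against $|V'|^{1/2}\approx n^{1/2}m$. Either way the exponent approaches $1/2$ only through the slack $\varepsilon$, so care is needed to confirm that the block sizes $(2m+1)^2$, which dominate $|V'|$, do not destroy the gap. I would verify this point first, possibly re-tuning block sizes relative to the subdivision length.

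A second check is that the lemmas hold for any fixed $p$. This is immediate, since all distances lie in $\{0,1,\lambda\}$ with $\lambda\le 2$, so $d^p\ge d$ on nonzero distances and the lower bound only improves. The upper bound uses distance exactly $1$ or $0$, so it is unchanged.
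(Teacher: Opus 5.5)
Your strategy is the paper's: plug $G_m$ into the two lemmas, take $m$ polynomial in $n$, and compare the gap $m^2/(n+n^2m)$ with $|V'|$. However, the exponent bookkeeping, which is the whole content of this theorem, contains a genuine error.

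You correctly obtain $|V'|=O(n^2m+nm^2)$. But with $m\approx n^c$ and $c\ge 1$ the dominant term is $nm^2=n^{2c+1}$, coming from the $2k\le n$ blocks of size $(2m+1)^2$, not $n^{2+c}$. The assumption $c\ge1$ gives $2c+1\ge c+2$, which is the opposite of what you used. Consequently your claim that any fixed $c\ge 6$ works for every $\varepsilon>0$ is false. For $c=6$ and $k=\Theta(n)$ the gap is about $n^4$ while $|V'|$ is of order $n^{13}$, so the gap is only about $|V'|^{4/13}$.

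A sanity check would have caught this. With your count, the gap $n^{c-2}$ equals $|V'|^{(c-2)/(c+2)}$, which would yield $\Omega(|V'|^{1-\varepsilon})$ hardness. That is impossible for this construction. The gap certified by the lemmas is below $m/n^2$, and $|V'|\ge (2m+1)^2>m^2$, so the certified gap is below $|V'|^{1/2}/n^2$ for every choice of $c$.

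Your last paragraph senses the problem (``the block sizes dominate $|V'|$'') but leaves it unresolved. It also suggests re-tuning the block sizes, which is unnecessary. The fix is to let the exponent depend on $\varepsilon$. The gap is at least $m/(2n^2)=n^{c-2}/2$, and $|V'|\le Cnm^2=Cn^{2c+1}$ (using $k\le n$ and $m\ge n$). Hence the gap is $\Omega(|V'|^{(c-2)/(2c+1)})=\Omega(|V'|^{1/2-5/(4c+2)})$, and choosing $c\ge 5/(4\varepsilon)$ makes $5/(4c+2)\le\varepsilon$. This is exactly the paper's argument, with $a$ in place of $c$.

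Your sharper bound $|E|\le 2n$ would improve the loss term to $3/(4c+2)$, but $c$ must still grow like $1/\varepsilon$. Your remark that the lemmas are insensitive to $p$ is correct and matches the paper, whose lemmas are stated for all $p$.
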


\begin{proof}
Given a graph $G = (V,E)$ with $|V| = n$ and $k$ pairs $(s_i,t_i)$ as input instance for the vertex-disjoint paths problem. Let $m=n^a$ for some $a\ge 2$ to be chosen later and consider the graph $G_m$. 

By Lemma \ref{lem: ExistsVDPsol} and Lemma $\ref{lem: ExistsNoVDPsol}$ there cannot be an approximation algorithm for discrete connected $(k,p)$-clustering on grids with holes that has an approximation factor smaller than
$ \frac{m^2}{n+n^2m}  \ge \frac{m^2}{2n^2m}=
 \frac{m}{2n^2} = \frac{n^{a-2}}{2}.$
for all $k,p \in \mathbb{N}$.

Notice that the number $n'=|V'|$ of nodes in the instances of connected clustering satisfies $n' \in O(k \cdot m^2 + m \cdot n^2)$. Since $k\le n$ and $m\ge n$, this implies
$  n'\le cnm^2 = cn^{2a+1}$
for a sufficiently large constant $c$. Therefore, $n \ge (n'/c)^{1/(2a+1)}$ and the bound on the approximation factor becomes
  $\frac{n^{a-2}}{2} \ge \frac{(n'/c)^{(a-2)/(2a+1)}}{2} = \Omega((n')^{\frac{1}{2}-\frac{5}{4a+2}}).$
Now it suffices to choose $a$ large enough so that $5/(4a+2) \le \varepsilon$.
\end{proof}

Next we show an auxiliary lemma to relate the discrete connected $(k,p)$-clustering problem and the continuous variant. Effectively we get that an $\alpha$-approximation for the continuous variant implies a $(2^p\alpha)$-approximation for the discrete variant. 
\begin{lemma} \label{lem: discreteApproxCont}
    Given  metric space $(\mathbb{R}^m,d)$ and a connected graph $G=(V,E)$ with $V\subset \mathbb{R}^m$, $n = |V|$ and $k,p \in \mathbb{N}$. Let $C^*_1,\dots,C^*_k$ be an $\alpha$-approximation for the connected $(k,p)$-clustering problem on $G$. Then one can compute in $O(kn)$ time a solution $C_1,\dots,C_k$ with centers $c_1,\dots,c_k \in V$ that is an $(2^{p}\alpha)$-approximation for the discrete connected $(k,p)$-clustering problem on $G$.
    
\end{lemma}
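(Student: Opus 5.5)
Keep the clusters $C_1,\dots,C_k := C^*_1,\dots,C^*_k$ unchanged; connectivity is therefore inherited. The only change is to replace each continuous center $c^*_i\in\mathbb{R}^m$ by the input point of $C^*_i$ closest to it:
\[
c_i \in \arg\min_{v\in C^*_i} d(v,c^*_i).
\]
For each cluster this takes $O(|C^*_i|)$ distance evaluations, and $O(kn)$ in total even if we scan all of $V$ for every center. (If $c^*_i$ is not given explicitly, one can take it to be an optimal continuous center of the cluster; the argument below works for any center in $\mathbb{R}^m$.)

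For the cost, fix $i$ and $x\in C_i$. By the triangle inequality and the choice of $c_i$,
\[
d(x,c_i)\le d(x,c^*_i)+d(c^*_i,c_i)\le 2\,d(x,c^*_i).
\]
Hence $d(x,c_i)^p\le 2^p d(x,c^*_i)^p$. Summing over all $x$ and $i$ shows that the discrete solution costs at most $2^p$ times the continuous cost, which is at most $2^p\alpha\,\mathrm{OPT}_{\mathrm{cont}}$.

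It remains to compare optima. Every feasible discrete solution, with centers in $V\subset\mathbb{R}^m$, is also feasible for the continuous problem, so $\mathrm{OPT}_{\mathrm{cont}}\le \mathrm{OPT}_{\mathrm{disc}}$. Combining, the discrete cost is at most $2^p\alpha\,\mathrm{OPT}_{\mathrm{disc}}$, which is the claimed guarantee.

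There is no real obstacle here. The only points needing care are (i) using the nearest point inside the cluster, so that the factor is $2$ rather than something depending on other clusters, and (ii) observing that feasibility depends only on the partition. The lemma will then be combined with Theorem~\ref{thm: approx hardness discrete}. One caveat: that theorem's instance uses a star-like metric, so it must be realized in Euclidean $t$-space with $t\ge k$, for example by placing $A$ at the origin and $B_i$ at $e_i$. This embedding issue belongs to the subsequent theorem, not to this lemma.
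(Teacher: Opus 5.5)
Your proposal is correct and follows essentially the same argument as the paper: keep the partition $C^*_1,\dots,C^*_k$, replace each $c^*_i$ by the point of $C^*_i$ nearest to it, bound the resulting cost by $2^p$ times the continuous cost, and conclude via $\Delta\le\Delta'$, since discrete solutions are feasible for the continuous problem. The only cosmetic difference is how the factor $2^p$ is obtained. The paper applies the relaxed triangle inequality $d(x,c'_i)^p\le 2^{p-1}\bigl(d(x,c^*_i)^p+d(c^*_i,c'_i)^p\bigr)$ and then uses $d(c^*_i,c'_i)\le d(c^*_i,x)$, whereas you first show $d(x,c_i)\le 2\,d(x,c^*_i)$ and then raise both sides to the $p$-th power.
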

\begin{proof}
Let $\Delta$ be the optimal cost for the  connected $(k,p)$-clustering problem on $G$ and $\Delta'$ be the optimal cost for the discrete connected $(k,p)$-clustering problem on $G$. Let $c^*_1,\dots,c^*_k$ be the corresponding centers of $C^*_1,\dots,C^*_k$. For $i \in [k]$, let $c'_i = \argmin_{x \in C^*_i} d(x,c^*_i)^p$.
By the relaxed triangle inequality we get
\begin{align*}
    \sum_{i = 1}^k \sum_{x \in C^*_i} d(x,c'_i)^p
    &\leq \sum_{i = 1}^k \sum_{x \in C^*_i} 2^{p-1}(d(x,c_i^*)^p + d(c_i^*,c_i')^p)\\
    &\leq \sum_{i = 1}^k \sum_{x \in C^*_i} 2^{p-1}(d(x,c_i^*)^p + d(c_i^*,x)^p)\\
    &\leq 2^p \alpha \Delta \\
    &\leq 2^p \alpha \Delta'.
\end{align*}
Note that computing $c'_1,\dots,c_k'$ can be done in $O(kn)$ time.
Since $C^*_1,\dots,C^*_k$ with centers $c'_1,\dots,c_k'$ is a feasible solution for discrete connected $(k,p)$-clustering, the proof is complete.
\end{proof}

We will again show the hardness of the connected $(k,p)$-clustering by reducing from the vertex-disjoint paths problem. 
Note that we cannot directly apply the reduction used for the discrete setting, since the continuous variant gets the metric space as input. To circumvent this issue, we focus on the Euclidean metric space and tailor an instance in  $k$-dimensions that incurs the same distances as utilized in the construction for the discrete variant, which then allows us to use the discrete variant to solve the vertex-disjoint paths problem.

\thmmain*

\begin{proof}
 Let $G = (V,E)$ with $(s_1,t_1),\dots,(s_k,t_k)$ be an instance for the vertex-disjoint paths problem. Let $n = |V|$ and consider the graph $G_m =(V',E')$ with $m \in \mathbb{N}$ . Let $d$ denote the Euclidean distance. Next consider a $k$-dimensional simplex with edge length $1$ and the origin as center. We place all the points in $A$ at the origin and all the points in $B_i$ at its own vertex of the simplex. Then  $d(x,y) = 0$ with $x,y \in A$ and $d(x,y) = 0 $ with $x,y \in B_i$ for $i \in [k]$. Therefore it holds $d(x,y) = 1$ with $x\in A, y \in B_i$ and $d(x,y) = \sqrt{2}$  for $x \in B_i,y \in B_j$ for $i \neq j.$

 Assume an $\alpha$-approximation for the connected $(k,p)$-clustering on $G_m$ and $d$ exists. By Lemma \ref{lem: discreteApproxCont}, we can construct in polynomial time a $2^p\alpha$-approximation for discrete connected $(k,p)$-clustering on $G_m$ and $d'$, where $d'$ is $d$ restricted to the vertices of $G_m$. Since $2^p$ is constant and
 through Theorem \ref{thm: approx hardness discrete} it must be that $\alpha \in \Omega(|V'|^{1/2 - \varepsilon})$.
 
\end{proof}

 \input{appendix/appendix_image_subspace_clustering}

%% file: appendix/appendix_image_subspace_clustering.tex
% #1 = x0  (Aufpunkt x)
% #2 = y0  (Aufpunkt y)
% #3 = dx  (Richtungsvektor x)
% #4 = dy  (Richtungsvektor y)
% #5 = t   (wie weit wir gehen: Skalar)
% #6 = Farbe
\newcommand{\pointOnLine}[6]{%
  \pgfmathsetmacro{\px}{#1 + #5*#3}%
  \pgfmathsetmacro{\py}{#2 + #5*#4}%
  \fill [#6] (\px,\py) circle[radius=2pt];
}

% #1 = x0  (Aufpunkt x)
% #2 = y0  (Aufpunkt y)
% #3 = dx  (Richtungsvektor x)
% #4 = dy  (Richtungsvektor y)
% #5 = t   (Parameter entlang der Geraden)
% #6 = s   (Abstand senkrecht zur Geraden)
% #7 = Farbe
\newcommand{\pointOrthOnLine}[7]{%
  % Punkt P auf der Geraden
  \pgfmathsetmacro{\Px}{#1 + #5*#3}%
  \pgfmathsetmacro{\Py}{#2 + #5*#4}%
  % Länge des Richtungsvektors
  \pgfmathsetmacro{\len}{sqrt(#3*#3 + #4*#4)}%
  % normierter Normalvektor (-dy, dx)/len
  \pgfmathsetmacro{\nx}{-#4/\len}%
  \pgfmathsetmacro{\ny}{ #3/\len}%
  % verschobener Punkt Q
  \pgfmathsetmacro{\Qx}{\Px + #6*\nx}%
  \pgfmathsetmacro{\Qy}{\Py + #6*\ny}%
  % Zeichnen (nur der Punkt):Flet
  \fill [#7] (\Qx,\Qy) circle[radius=2pt];%
  \draw[gray, thin] (\Px,\Py) -- (\Qx,\Qy);
}

% #1 = x0  (Aufpunkt x)
% #2 = y0  (Aufpunkt y)
% #3 = dx  (Richtungsvektor x)
% #4 = dy  (Richtungsvektor y)
% #5 = t_min (Startparameter)
% #6 = t_max (Endparameter)
% #7 = Farbe

\newcommand{\drawLine}[7]{%
  \pgfmathsetmacro{\Ax}{#1 + #5*#3}%
  \pgfmathsetmacro{\Ay}{#2 + #5*#4}%
  \pgfmathsetmacro{\Bx}{#1 + #6*#3}%
  \pgfmathsetmacro{\By}{#2 + #6*#4}%
  \draw [thick,#7] (\Ax,\Ay) -- (\Bx,\By);
}

\begin{figure*}[t]

\scalebox{0.8}{
\begin{tikzpicture}
  % Gerade durch (0,0) in Richtung (1,0.5), gezeichnet für t in [-2, 12]
  \drawLine{0}{0}{1}{0.5}{-2}{15}{red}
  \drawLine{0}{-0.75}{1.5}{-0.25}{-1.5}{8}{teal}
  \drawLine{10}{-1}{0.5}{1}{-2.5}{9}{blue}

  % Aufpunkt (0,0), Richtungsvektor (1,0.5), t = 3
  \pointOnLine{0}{0}{1}{0.5}{0}{red}
  \foreach \t/\s in {
      1.344/1.153,
      8.474/-0.832,
      7.638/1.710,
      2.551/0.454,
      4.954/-1.245,
      4.495/-0.088,
      6.516/0.219,
      7.887/-1.018,
      0.939/1.603,
      0.283/0.237
  } {
    \pointOrthOnLine{0}{0}{1}{0.5}{\t}{\s}{red}
  }

  \foreach \t/\s in {
  0.812/-2.338,
  1.584/-0.663,
  4.307/0.395,
  3.935/0.147,
  7.230/0.835,
  2.948/-1.402,
  2.494/0.415,
  5.442/-0.751,
  4.449/-1.075,
  2.682/-0.844
} {
  \pointOrthOnLine{0}{-0.75}{1.5}{-0.25}{\t}{\s}{teal}
}

\foreach \t/\s in {
  3.251/0.163,
  1.367/0.237,
  5.102/-0.604,
  3.987/0.435,
  0.745/-1.087,
  1.818/1.021,
  -1.936/-0.395,
  0.968/-0.146,
  -0.756/0.194,
  7.066/0.155
} {
  \pointOrthOnLine{10}{-1}{0.5}{1}{\t}{\s}{blue}
}

\end{tikzpicture}
}
\caption{A point set and three one-dimensional subspaces in $\mathbb{R}^2$. For each point, the distance to its closest subspace (symbolized by gray lines) is computed and squared. The sum of these squared distances is the objective that shall be minimized by the choice of the subspaces.}\label{fig:subspace_distances}
\end{figure*}

%% file: appendix/appendix-data-set-choice.tex
\subsection{Dataset}\label{sec:data-set-choice}

There are two principal sources of satellite-derived SLA products: The  \cite{copernicus_climate_change_data}, which provides climate-oriented products emphasizing long-term homogeneity, and the Copernicus Marine Service, which prioritizes accuracy regarding the sea level estimation at each time step and enhanced spatial sampling. The CMEMS dataset \cite{altimetry_data} builds upon the steady two-satellite merged constellation formed by the reference missions (TOPEX-Poseidon, Jason-1/2/3, Sentinel-6A) (also used in C3S products) by homogenizing five additional missions with respect to this reference set. Although the constellation evolves over time, the integration of more satellites improves spatial coverage and temporal accuracy, potentially at the expense of long-term stability (see Quality Information Document \cite{altimetry_data}). Given our focus on the identification and analysis of regional spatiotemporal patterns, the higher-resolution, time-accurate CMEMS product is preferred. 
Spatial interpolation of the altimetry data onto the uniform grid is performed by the CMEMS using the Optimal Interpolation method described by  \cite{le1998improved}.

\input{appendix/appendix-tm}

%% file: appendix/appendix-tm.tex
\subsection{Reimplementation of Thompson and Merrifield}\label{appendix-tm}

\cite{TM14} develop a specific distance function for clusters that they use with an average linkage algorithm without explicit enforcement of connectivity. We implemented their method following \cite{EMastersthesis}, resulting in Algorithm~\ref{alg-al-tm}. 

\begin{algorithm}[H]
\caption{Hierarchical Agglomerative Clustering (Average Linkage), no explicit connectivity constraint\label{alg-al-tm}}
\begin{algorithmic}[1]
\STATE \textbf{Require:}\enspace $P = \{x_1, \dots, x_n\}$, number of clusters $k$
\STATE \textbf{Ensure:}\enspace A clustering of $P$ into $k$ clusters

\STATE Create $H = \{ C_i=\{x_i\} \mid x_i \in P\}$
\STATE Compute $D(C_i,C_j)=D(x_i, x_j)$ for all $x_i, x_j \in P$
\WHILE{$|H|> k$}
    \STATE Add $C^*=C_i \cup C_j$ to $H$, remove $C_i$ and $C_j$ 
    \FORALL{remaining clusters $C_l\in H$}
        \STATE Compute $D(C^*, C_l)$
    \ENDFOR
\ENDWHILE

\end{algorithmic}
\end{algorithm}

This algorithm does not enforce connectivity directly. Thompson and Merrifield use average linkage as dissimilarity measure and carefully craft a distance function that takes into account both the location of a point and the time series associated with it. For  $X_i=(x_i,t_i)$ and $X_j=(x_j,t_j)$ which contain both the spatial and temporal information, they use 
\[
    D(X_i, X_j) = 1 - \exp\left(-\frac{d(x_i, x_j)}{c_0}\right) r(t_i,t_k)
    \label{eq:spatiotemporal_distance_function}
\]
where $d(x_i, x_j)$ describes the spatial distance between two points $x_i$ and $x_j$,  $r(t_i, t_j)$ is the correlation coefficient between the time series  $t_i$ and $t_j$, and $c_0 \approx 4328$ is a normalization constant. We use the Haversine distance as the spatial distance which computes the distance between two points $x=(\lambda_1, \phi_1)$ and $y=(\lambda_2, \phi_2) $ in the simplified model that the earth is a perfect sphere with radius $R$ of roughly $6.370~\text{km}$. The Haversine distance is defined as
\[
R(x, y) =
R \cdot 2 \cdot \arctan2\left(
    \sqrt{a}, \sqrt{1 - a}
\right),
\]
where
\[
a =
\sin^2\left( \frac{\phi_2 - \phi_1}{2} \right)
+ \cos(\phi_1) \cos(\phi_2)
\sin^2\left( \frac{\lambda_2 - \lambda_1}{2} \right).
\]

For the correlation between two time series $t_1$ and $t_2$, we employ the Pearson correlation coefficient. This measure yields a value in the range $[-1,1]$ where 1 denotes a perfect positive linear correlation, 0 indicates no linear correlation, and –1 signifies a perfect negative linear correlation. The exponential function produces values in the interval $[0,1]$, assigning higher values to closer grid points and lower values to those that are more distant. 

Consequently, two time series that exhibit a strong negative correlation but are geographically close are considered more dissimilar than two equally negatively correlated time series that are spatially distant. In cases where two time series are uncorrelated, their resulting distance is 1, regardless of their spatial separation. Overall, the greater the similarity between two time series and the closer their associated grid points are, the smaller the resulting distance, approaching zero in the case of high correlation and close proximity.

Algorithm~\ref{alg-al-tm} does not enforce connectivity.  Connected clusters only emerge if the underlying data matches certain criteria, such as smoothness in time and space. 
If the input data is not appropriately filtered, the resulting segmentation lacks spatial coherence and fails to capture meaningful structures. In such cases, local extrema are typically isolated into their own clusters.  \Cref{fig:thompson_filtering} shows an example, and Table~\ref{tab:8_unfiltered} shows the decline in performance of the algorithm for unfiltered data.

%% file: appendix/appendix_eofs_andPCs.tex
\subsection{EOFs and PCs for global sea level data}
As can be seen in Figure~\ref{fig:eofs}, the first three EOFs and their corresponding PCs capture the long-term trend, the seasonal cycle, and the El Niño--Southern Oscillation (ENSO) signal, respectively. The ENSO signal is partially mixed with other modes of variability, which is a known issue in EOF analysis.
\begin{figure*}[h]
  \centering
  \begin{tikzpicture}
     \node [label=below:{(a) EOF 0}] at (0,0) {\includegraphics[width=0.4\textwidth]{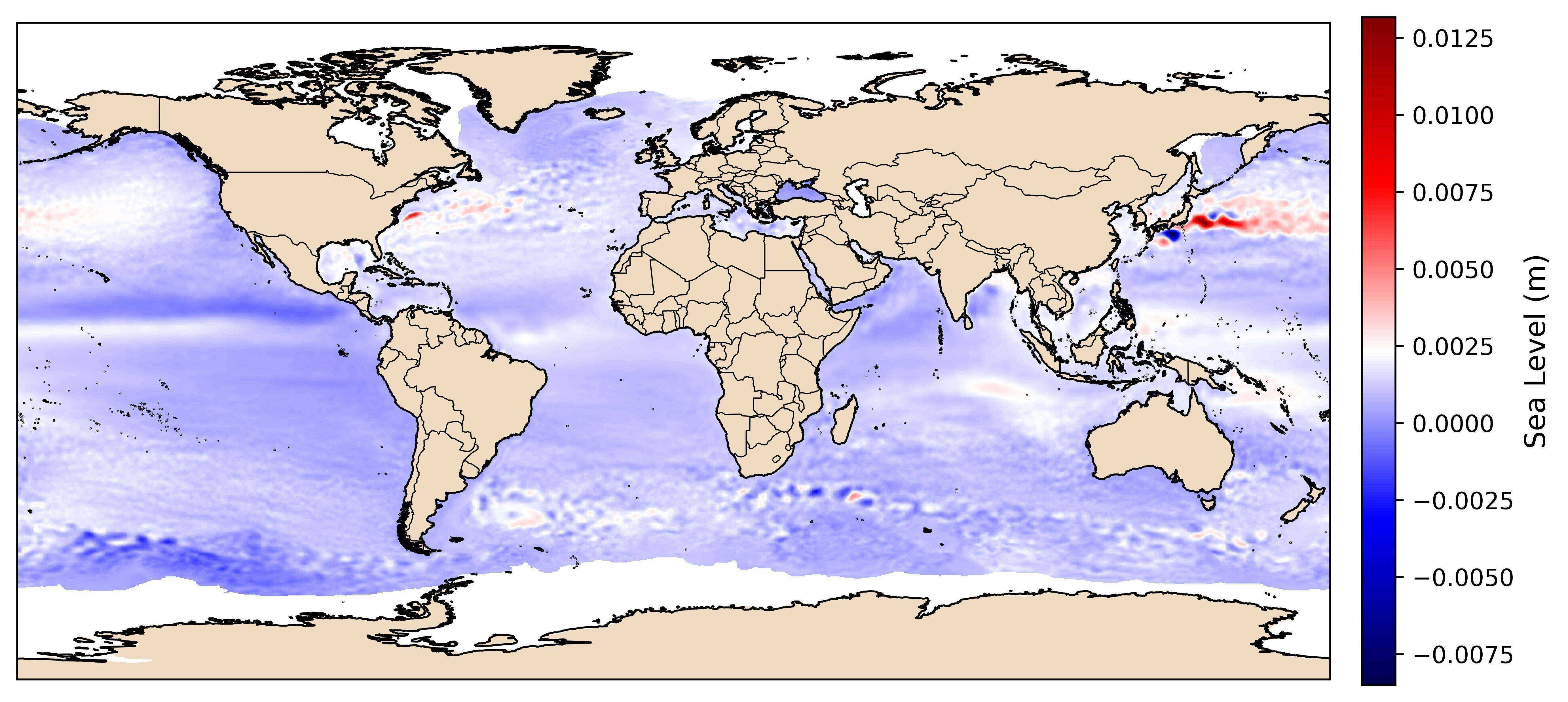}};
     \node [label=below:{(b) PC 0}]at (7.1,0) {\includegraphics[width=0.4\textwidth]{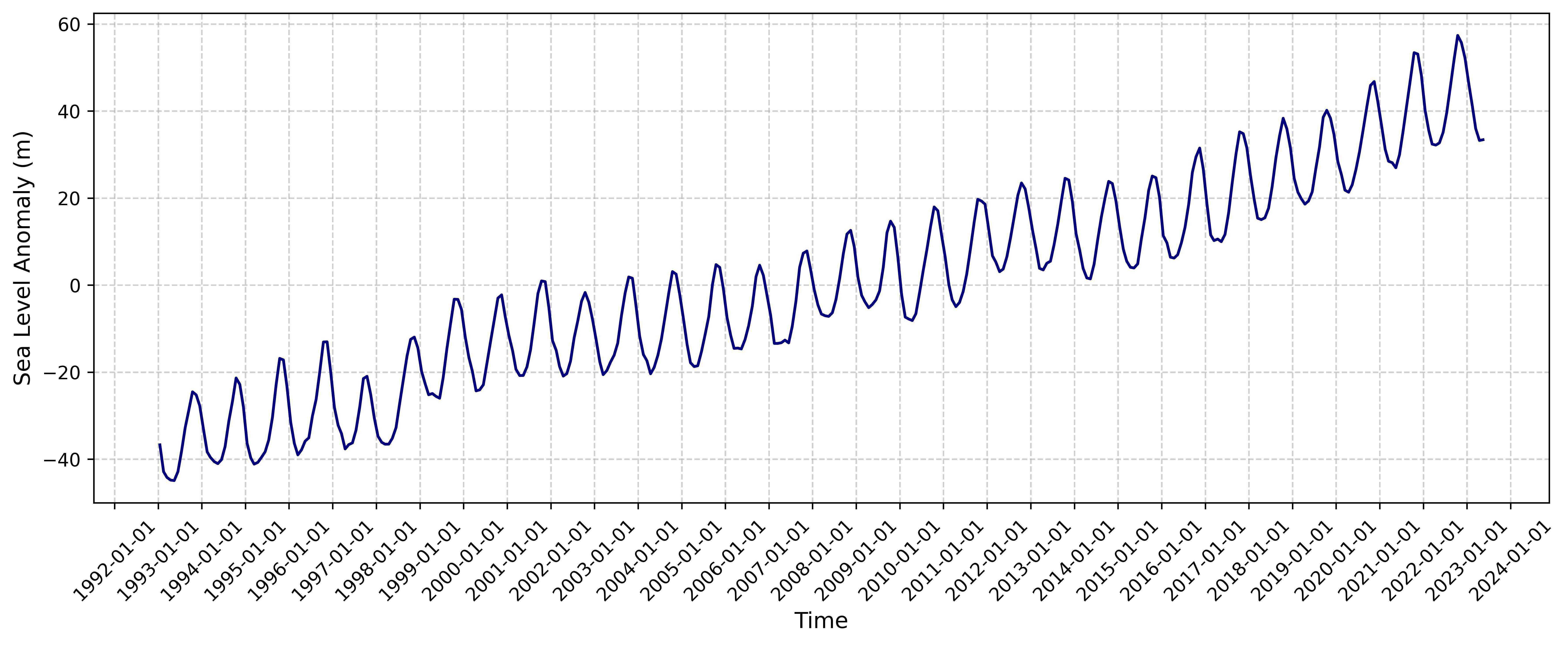}};
     \node [label=below:{(c) EOF 1}]at (0,-4) {\includegraphics[width=0.4\textwidth]{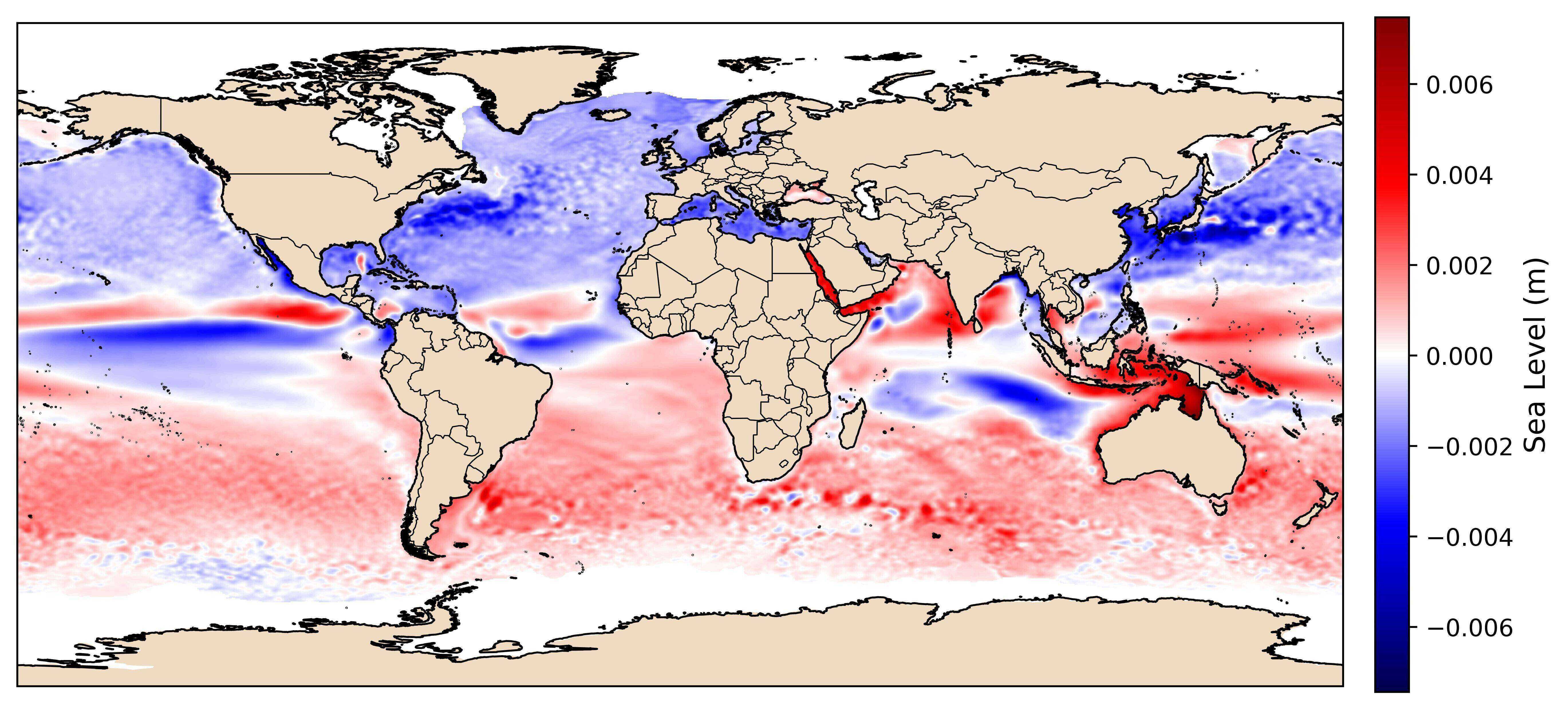}};     
     \node [label=below:{(d) PC 1}]at (7.1,-4) {\includegraphics[width=0.4\textwidth]{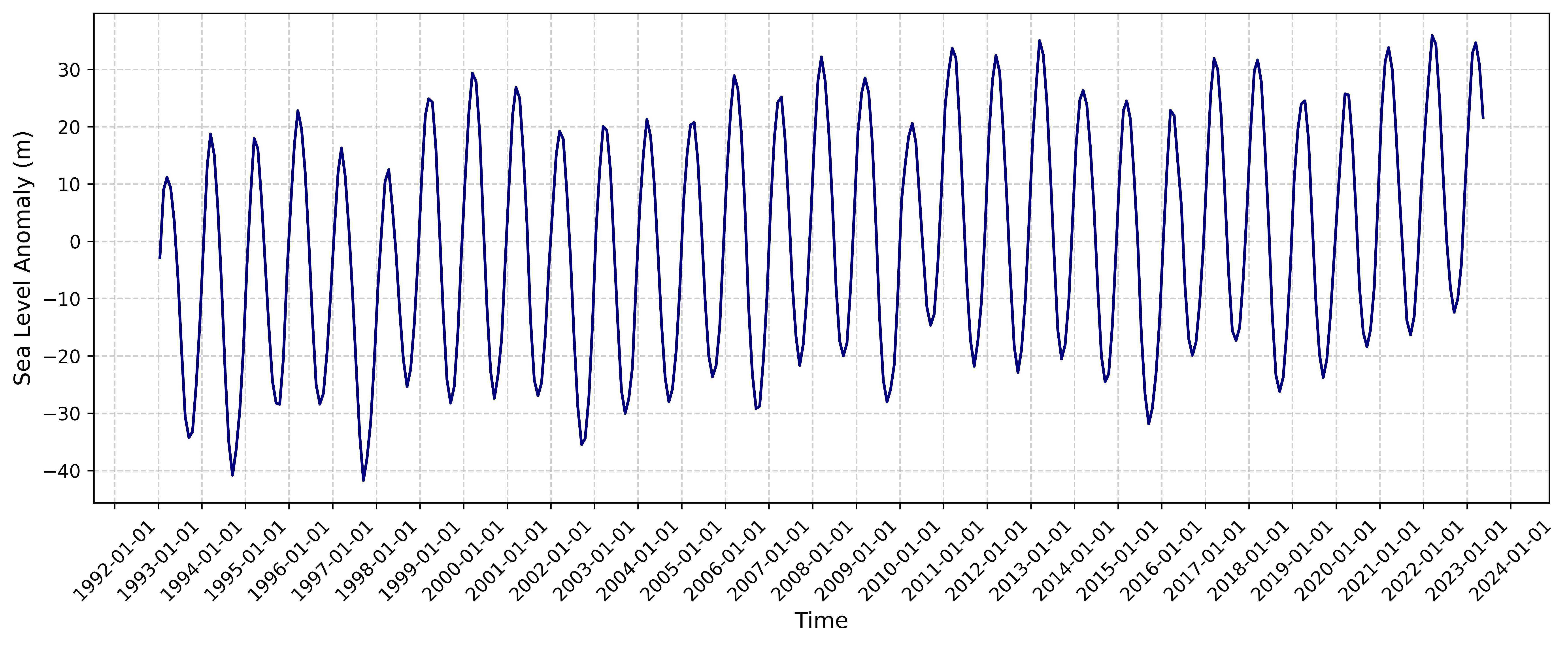}} ;
     \node [label=below:{(e) EOF 2}]at (0,-8) {\includegraphics[width=0.4\textwidth]{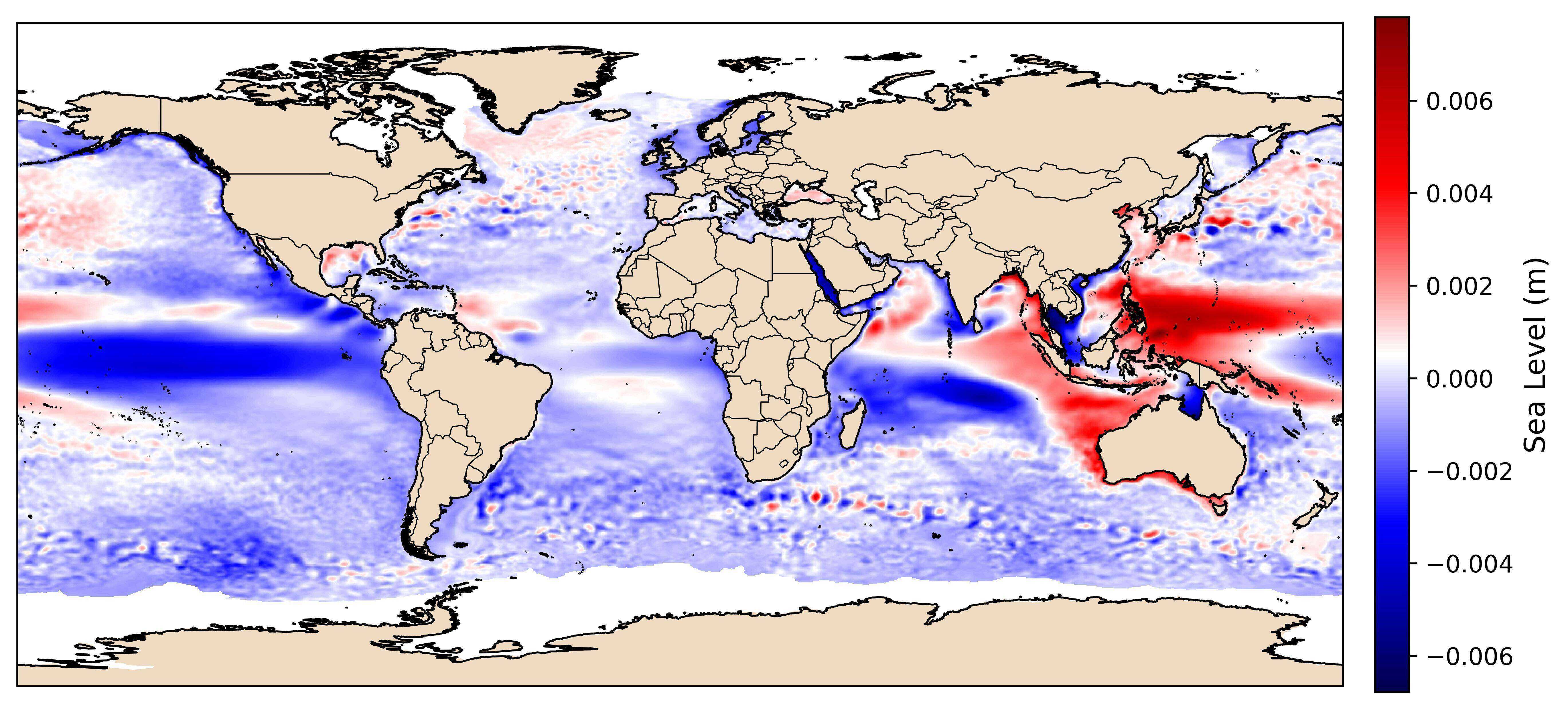}} ;
     \node [label=below:{(f) PC 2}]at (7.1,-8) {\includegraphics[width=0.4\textwidth]{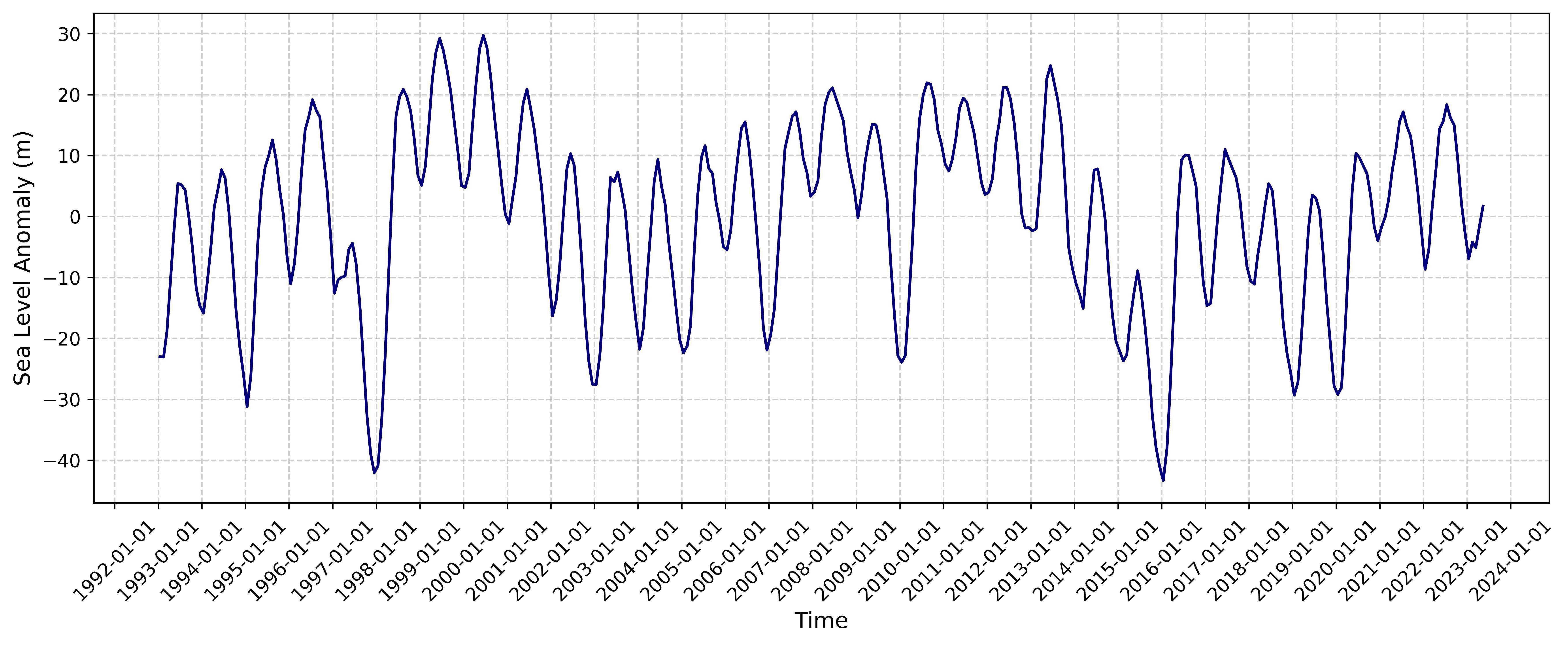}}; \end{tikzpicture}
  \caption{Empirical Orthogonal Functions (EOFs) and their corresponding Principal Components (PCs) derived via Singular Value Decomposition (SVD) of global sea level data. Panels (a), (c), and (e) show the spatial patterns (EOFs 0–2), while panels (b), (d), and (f) present their respective temporal variations (PCs 0–2). EOFs 0 and 1, along with their PCs, primarily capture the long-term trend and the seasonal cycle, respectively. The El Niño–Southern Oscillation (ENSO) signal is evident in EOF 2, although it is partially mixed with other modes of variability. } 
  \label{fig:eofs}
\end{figure*}

%% file: appendix/appendix_geodetic-analysis.tex
\subsection{Implications for sea level research}\label{appendix:sea_level_research}
\begin{figure}[tb]
    \centering
  \begin{subfigure}{0.4\textwidth}
    \includegraphics[width=\textwidth]{fig/cluster_8.0_first_component.jpg}
    \caption{Signal from cluster 8 (red) together with the ENSO index (blue).}
    \label{fig:cluster8}
  \end{subfigure}
  \begin{subfigure}{0.45\textwidth}
    \includegraphics[width=\textwidth]{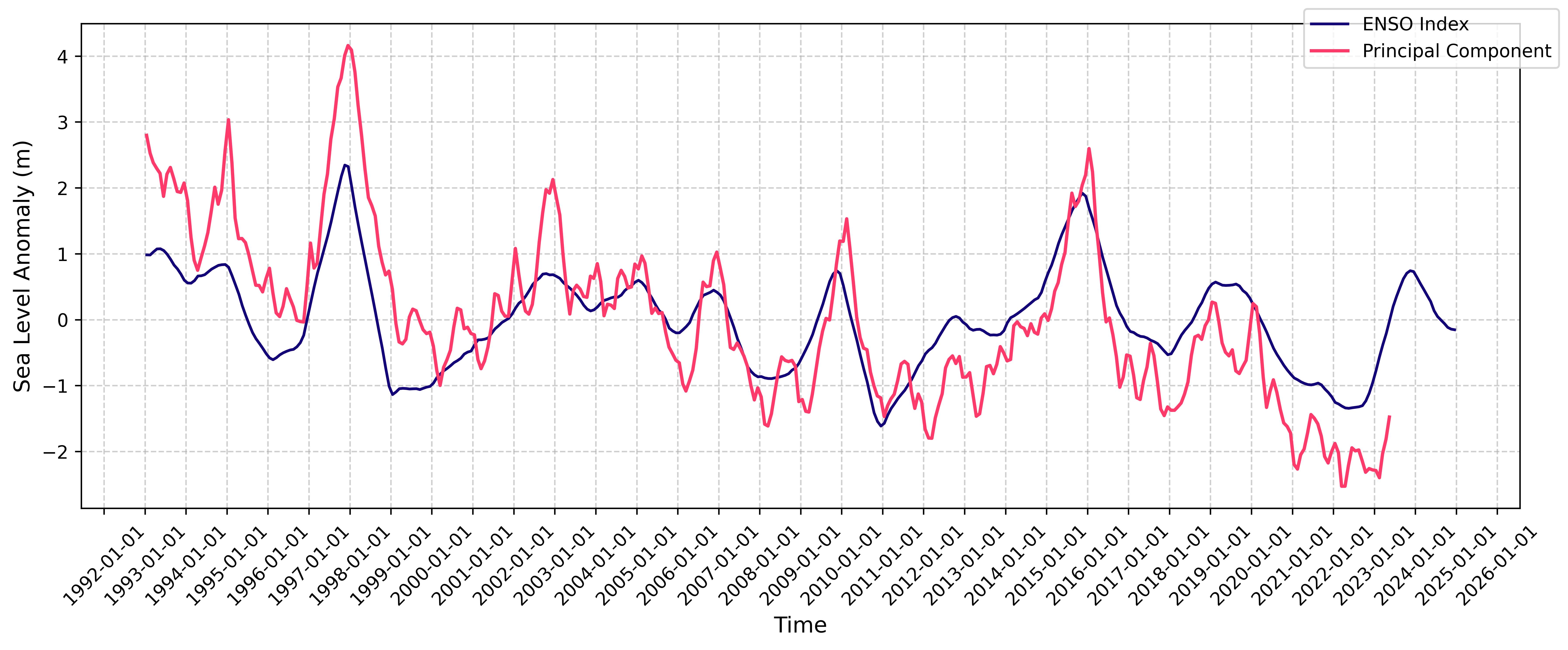}
    \caption{Signal from cluster 6 (red line) together with the ENSO index (blue line).}
    \label{fig:cluster6}
  \end{subfigure}
  \begin{subfigure}{0.4\textwidth}
    \includegraphics[width=\textwidth]{fig/cluster_9.0_first_component.jpg}
    \caption{Signal from cluster 9 (red) together with the ENSO index (blue).}
    \label{fig:cluster9}
  \end{subfigure}
  \caption{First EOFs for clusters 6, 8 and 9 together with the ENSO index. The first EOFs for clusters 6 and 9 show a strong correlation with the ENSO index, while the first EOF for cluster 8 shows a weak correlation.} 
  \label{fig:cluster_compared_with_enso_appendix}
\end{figure}

\begin{figure}[tb]
    \centering
  \begin{subfigure}{0.4\textwidth}
    \includegraphics[width=\textwidth]{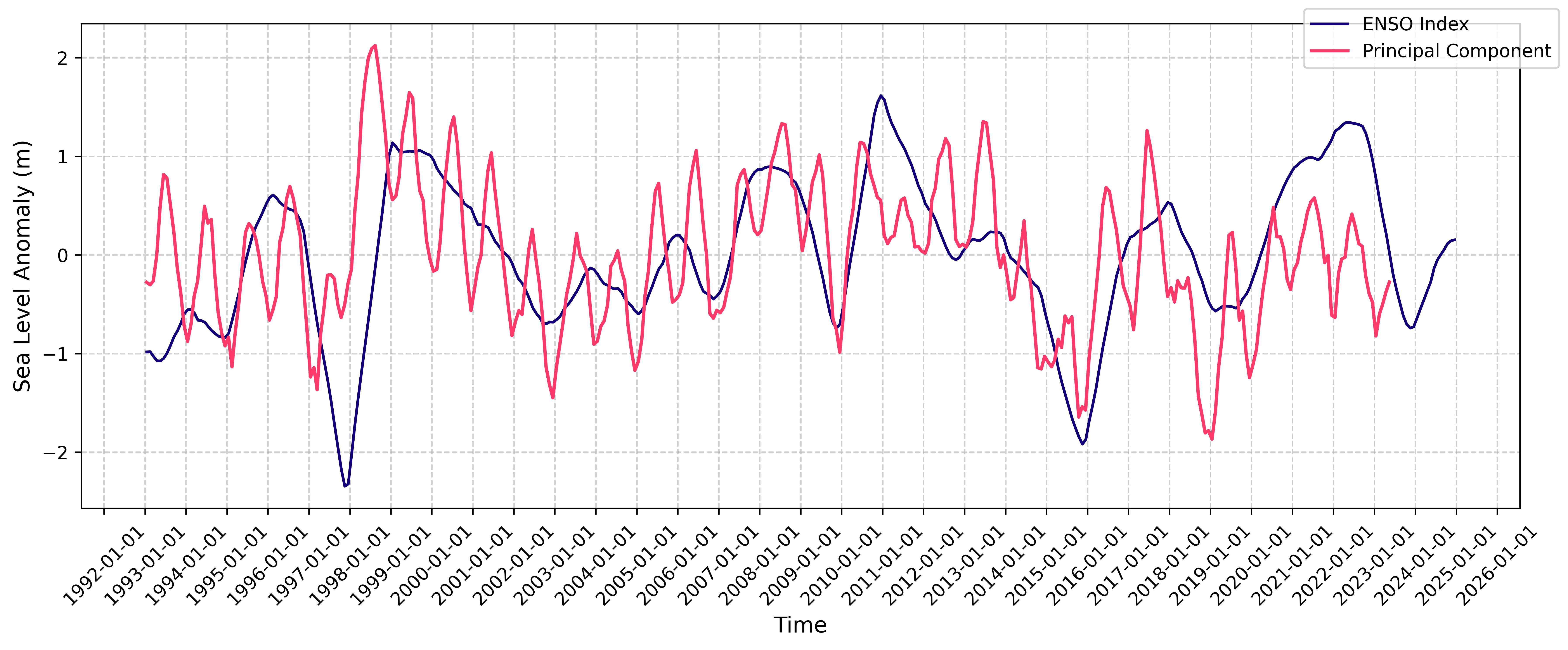}
    \caption{Second EOF for cluster 6.}
    \label{fig:cluster6_second_eof}
  \end{subfigure}
  \begin{subfigure}{0.45\textwidth}
    \includegraphics[width=\textwidth]{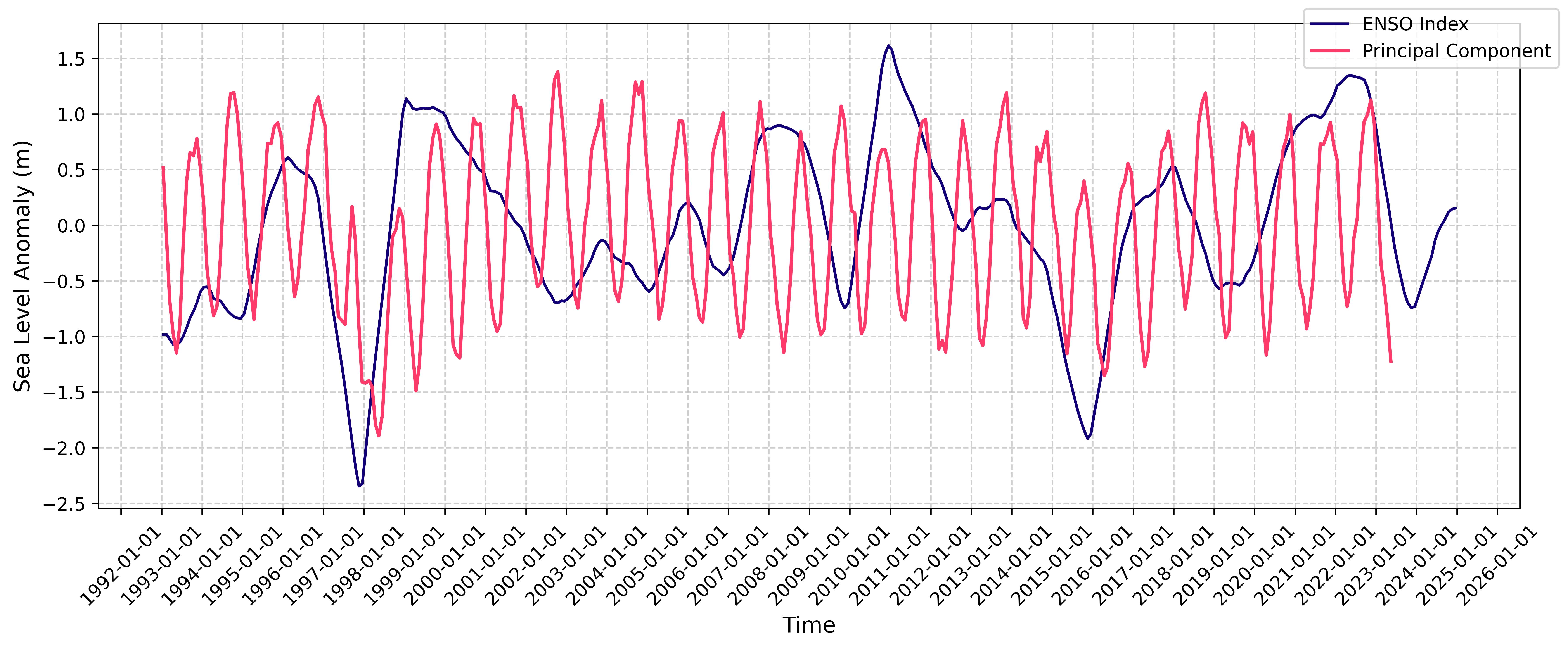}
    \caption{Second EOF for cluster 9.}
    \label{fig:cluster9_second_eof}
  \end{subfigure}
  \caption{Second EOFs for clusters 6 and 9. The second EOFs for both clusters show a strong annual cycle, which is the second most dominant signal in these regions.} 
  \label{fig:second_principal_components}
\end{figure}

Combining subspace clustering with \emph{Empirical Orthogonal Function} (EOF) decomposition offers a practical way to extract regionally coherent patterns in sea level variability that may not be clearly represented in a standard global EOF analysis. 

Conn-Subspace can identify interesting regions with a specific sea level rise behavior. As an example for this, we discuss the dominant EOFs for the global and for two specific interesting regions, to showcase how two clusters reflect a known pattern of sea level behavior. 

As expected, globally, the most dominant sea level signals are the trend (see Figure~\ref{fig:eofs}(a),(b)) and the seasonal cycle which is driven by solar forcing and ocean heat uptake, and land-ocean fluxes such as seasonally varying freshwater flow from ice sheets and glaciers, the world's rivers and ocean precipitation and evaporation (see Figure ~\ref{fig:eofs}(c),(d) in the appendix). The associated EOFs 0 and 1 shows a trend pattern and clear positive/negative separation between the northern and southern hemispheres, with the corresponding PCs showing a clear trend and annual cycle.
By applying subspace clustering, the analysis can operate on more localized feature spaces, potentially revealing regional modes that are less prominent in a global context.

For the subspace clustering approach, in most subregions, such as the Atlantic Ocean \Cref{fig:cluster_compared_with_enso_appendix}(a), the trend and annual cycle are also the dominating signals. However, in the tropical Pacific Ocean the dominant signal is driven by the so called El Niño–Southern Oscillation effect (ENSO), a climate phenomenon driven by varying winds and sea surface temperatures in the tropical Pacific that impacts climate and weather globally, e.g. leading to extensive rain in South America while causing droughts in Australia. In the global EOF decomposition, this effect is still one of the most dominant signals (see \Cref{fig:eofs}(e),(f)) indicating its impacts all over the globe.

The subspace clustering approach is able to derive clusters, which fit well to the eastern and western Pacific regions responsible for driving the ENSO phenomenon (see \Cref{fig:cluster_compared_with_enso_appendix}(b), (c), see Cluster 6 and 9 in \Cref{fig:clustering_geodesy}). In these clusters, the dominant signal aligns closely with the ENSO signal, while the annual signal is found in the second EOF associated with these regions (see \Cref{fig:second_principal_components}). Finding clear regional separations based on signal content using the subspace clustering approach enables a much more in-depth data analysis on regional scales, focusing on the dominant signals and drivers of sea level change relevant for these regions. 

In geodesy, EOFs are also used to reconstruct incomplete tide gauge data. To achieve this, satellite altimetry data are decomposed into EOFs and corresponding temporal components. The longer signals of tide gauge records are then fitted to the data using a least squares approach, see for example \cite{church2004}. This is usually done for the entire globe. A clustering of the sea enables employing this technique regionally, e.g. for regions of good tide gauge record density. To the best of our knowledge this has seldom been applied (e.g. by \cite{klos2019introducing}) for the eastern Pacific cluster and we plan to explore this in future work. However, the regional separation may lead to inconsistencies regarding signal content at the borders to neighboring regions, which warrants further investigations.

The separation of the ocean into contiguous regions additionally enables a regional reconstruction of historic sea levels prior to the altimetry era.

%% file: appendix/appendix_accompanying_pseudo_code.tex
\subsection{Accompanying pseudocode}\label{appendix:pseudocode}

The following code was skipped in the main part of the paper.

\begin{algorithm}
\caption{Agglomerative Connected Clustering\label{alg-ac-connected}}
\begin{algorithmic}[1]
\REQUIRE $P = \{x_1, \dots, x_n\}$, graph $N$, $k \in \mathbb{N}$
\ENSURE A clustering of $P$ into $k$ connected clusters

\STATE Create $H = \{ C_i=\{x_i\} \mid x_i \in P\}$
\STATE Compute $D(C_i,C_j)=D(x_i, x_j)$ for all $x_i, x_j \in P$

\FOR{each cluster $C_i \in H$}
    \STATE Set $\mathcal{N}_i \gets \{C_j \mid x_i \text{ and } x_j \text{ are adjacent in } N\}$
\ENDFOR

\WHILE{$|H| > k$}
    \STATE Identify pair $(C_i, C_j)$ with minimal distance
    \STATE Add $C^*=C_i \cup C_j$ to $H$, remove $C_i$ and $C_j$ 
    \STATE $\mathcal{N}^* \gets \mathcal{N}_i \cup \mathcal{N}_j$
    \FOR{each $C_l \in \mathcal{N}^*$}
        \STATE Remove $C_i$, $C_j$ from $\mathcal{N}_l$ and add $C^*$ to $\mathcal{N}_l$
    \ENDFOR

    \FOR{each $C_j \in \mathcal{N}^*$}
        \STATE  Compute distance $D(C_i, C_j)$ \label{distanzneuberechnung}
    \ENDFOR
\ENDWHILE
\end{algorithmic}
\end{algorithm}
Notice that in Step~\ref{distanzneuberechnung} in the average linkage version of Algorithm~\ref{alg-ac-connected}, a new computation of distances is only necessary for the clusters in the new neighborhood. For those, it has to be recomputed (only) if a cluster was neighbor to only one of $C_i, C_j$ since in this case the distances to the other cluster have not been computed yet. If a cluster $C'$ was neighbor to both, then we  use a weighted sum of $d(C',C_i)$ and $d(C',C_j)$ instead.

\begin{algorithm}[t]
\caption{Greedy $k$-means++ Clustering\label{alg:greedykmeans}}
\begin{algorithmic}[1]
\REQUIRE Point set $P$, $k\in \mathbb{N}$,candidate parameter$\ell\in \mathbb{N}$
\ENSURE centers $c_1, \ldots, c_k$ and clusters $C_1, \ldots, C_k$

\STATE Pick initial center $c_1$ uniformly at random from $P$

\FOR{$i = 2$ to $k$}
    \STATE Sample $\ell$ candidate points $x_1, \ldots, x_\ell \in P$, where each $x \in P$ is chosen with probability
    \[
    p(x) = \frac{\min_{1 \leq j < i} \|x - c_j\|^2}{\sum_{y \in P} \min_{1 \leq j < i} \|y - c_j\|^2}
    \]
    \STATE For each candidate $x_m$, compute
    \[
    \phi(x_m) = \sum_{y \in P} \min\left(\min_{1 \leq j < i} \|y - c_j\|^2, \|y - x_m\|^2\right)
    \]
    \STATE Set $c_i$ to be  $x_m$ that minimizes $\phi(x_m)$
\ENDFOR

\REPEAT
    \STATE Assign each $x \in P$ to $\arg\min_{1 \leq i \leq k} \|x - c_i\|^2$ and let $C_1, \ldots, C_k$ be the resulting clusters
    \FOR{$i = 1$ to $k$}
        \STATE Compute $\mu(C_i) = \frac{1}{|C_i|} \sum_{x \in C_i} x$
        \STATE Update center: $c_i \gets \mu(C_i)$
    \ENDFOR
\UNTIL{convergence criterion is met}

\end{algorithmic}
\end{algorithm}

%% file: appendix/appendix-merging.tex
\subsection{Details on connectivity methods}
In this section, we expand on the algorithms discussed in \Cref{sec:connectivity}.
\subsubsection{Establishing connectivity via merging}\label{appendix-merging}
The first approach converts a clustering with possibly disconnected clusters into a connected clustering. The idea of this algorithm is that we typically expect that, while the clusters may be disconnected, we still expect medium size regions that belong to the same cluster. For an example of such a situation, see Figure~\ref{fig:merging}(a). Our aim is to iteratively remove small regions and merge them into neighboring regions of other clusters. The procedure is given in Algorithm~\ref{algo:merging_for_connectivity}. Notice that this method inherently uses that the time series and the locations \emph{are} somewhat correlated because otherwise we would not start with small contiguous regions.

Step~\ref{step-merging-graph} constructs a graph where two points are connected if they are neighbors in $N$ and are assigned to the same cluster in the input clustering. 
Visually, when looking at a clustering like in Figure~\ref{fig:merging}(a), points are connected in $G_C$ if they have the same color and are adjacent. (Notice that the set of edges $E'$ is a subset of the edges $E$ in $N$). The connected components of $G_C$ are the contiguous single-color regions visible in Figure~\ref{fig:merging}(a).
The connected components are computed in Step~\ref{step-merging-cc} and stored in $Z$. Notice that every $A \in Z$ satisfies $A \subset C_i$ for some $i\in[k]$, and that $A$ would be a feasible connected cluster in $N$. However, there are more than $k$ components in $Z$ because the input clusters were not connected.

Step~\ref{step-merging-gcc} computes $G_{CC}$ which uses the connected components as nodes and connects two regions $A$ and $B$ if they contain points $a\in A$ and $b\in B$ that are adjacent in $N$, i.\,e., $\{a,b\}\in E$. Note that regions that belong to the same cluster are never connected in $G_{CC}$ since they would be one component then. 
We use $G_{CC}$ to reduce the number of disconnected regions by merging regions into larger ones, beginning with the one of smallest cardinality. For each iteration of the loop in Step~\ref{step-merge-loop-start} to Step~\ref{step-merge-loop-end} we compute the region with the least number of points and then merge it into a neighboring cluster. In order to choose the cluster to merge into, we compute the preference of each point $x \in A_{\min}$ in Step~\ref{step-merge-winner}: We compute the cluster $C_i$ where the associated subspace $S_i$ is closest to $x$, but only consider those $C_i$ which have a subregion $B$ neighboring $A_{\min}$. So from the clusters surrounding $A_{\min}$, we compute which cluster is preferred by $x$. Then we do a majority vote in Step~\ref{step-merge-clusterwinner} and choose the neighboring cluster, which is preferred by the largest number of points in $A_{\min}$. Then $A_{\min}$ is merged into that cluster. This decreases the number of regions by one.

The process stops when the number of regions reaches $k$. Each region now becomes a cluster. During Step~\ref{step-merge-output} we make sure that if an input cluster is split by this process, one of the clusters retains the original cluster number, and if two connected components belonging to different clusters are merged, the new cluster assumes the place of one of those it was created from (mostly for picture consistency and evaluation).

\subsubsection{Proof for \Cref{thm:merging_runtime}}\label{appendix:runtime-proof}
\Cref{thm:merging_runtime} states that the iterative merging algorithm presented in \Cref{algo:merging_for_connectivity} can be implemented in $O(|E|+nk+n\log n+n\alpha(n))$ time. Here we provide the accompanying proof.

\thmruntime*
\begin{proof}\label{proof-runtime}
Given the neighborhood graph $N = (V,E)$ with $|V|=n$ and $|E|=m$, a clustering $C_1,\dots, C_k$ and the subspaces associated with each cluster $S_1,\dots,S_k$. Let $z_0$ be the initial number of connected components and $z_i= |Z|$ in iteration $i$ of the while-loop. For a component $A\in Z$, let $n(A)$ be the clusters associated with the connected components adjacent to $A$ in the component graph $G_{CC}=(Z, E'')$. So, this describes clusters a connected component can possibly be merged into. For easy lookup these neighboring clusters are stored in a bitset. We assume that for each $A \in Z$, $n(A) > 0$, so each component has at least one neighbor it can be merged into. 

\paragraph{Preprocessing}
First, we need to build the graph $G_C$ by scanning $E$ and only keeping edges $u,v$, if $u\in C_i$ \emph{and} $v \in C_i$, for any $C_i$. Next, computing the connected components of $G_C$ is $O(n+m)$ by BFS/DFS. Constructing $G_{CC}$ by scanning $E$ and adding an edge between components that have at least one adjacent pair of nodes in $N$ takes $O(m)$. 

\paragraph{Winner computation and merging}
In order to increase efficiency, we can precompute a numerical scoring vector for each connected component $A \in Z$ that contains the aggregated preferences of all $x \in A$ over the subspaces $S_i$ for $i\in [k]$.
This is done as follows; we precompute for each point $x\in V$ a ranking of the clusters $\pi_x(1),\dots, \pi_x(k)$ by increasing distance to the subspaces $S_i$. Let $r_x[i]$ denote the position of cluster $i$ for point $x$. Fix any non-decreasing positional scoring function $s$ on ranks. The contribution of $x$ to its components valuation of cluster $[i]$ is $s(r_x[i])$, which is fixed throughout the algorithm. 

For each connected component $A \in Z$ aggregate these contributions in a score vector: $\textit{score}_A[i]= \sum_{x \in A} s(r_x[i])$ for all $i \in [k]$. 
Given the adjacent-cluster set $n(A) \subseteq [k]$, the chosen cluster for merging is then $i^w(A) = \arg\max_{i \in n(A)} \textit{score}_A[i]$. 
This scoring rule maps the majority decision well enough for this algorithm, as updating ordinal preferences would require more time and space with limited benefits. 
When two components $A$ and $B$ merge, the new component's score vector is updated by component-wise addition $\textit{score}_{A \cup B} = \textit{score}_A + \textit{score}_B$. This leads to winner lookup being in $O(|n(A)|) \subseteq O(k)$ and score updates are $O(k)$.

Computing all distances takes time and space $O(n\cdot k)$.

Additional to these metadata updates, in each merging step we need to join all connected components $B^{i^w}$ that belong to the winning cluster $i^w$ and are adjacent to the cluster that is being merged $A$.
This information is stored in each connected component. All unions of components over the complete execution cost $O((z_0-k)\alpha(z_0)) \subseteq O(n\alpha(n))$ using union-by-rank and path compression \cite{tarjan1984worst}.
Furthermore, we need to merge the set of neighbors $n(A\cup B) = n(A) \cup n(B) \setminus \{c_A, c_B\}$, where $c_A$ is the cluster that component $A$ belongs to and $c_B$ is the one component $B$ belongs to and adjacent components belonging to each cluster. 

\paragraph{Maintaining \boldmath$A_{min}$} 
In step~\ref{step-merge-min} $A_{min}$, the component of smallest cardinality, needs to be identified. This can either be done via naive sorting in $O(z_i \log(z_i))$ or improved to $O(\log(z_i))$ by using a min-heap. Note that $z_i$ decreases by at least one in each iteration, yielding $O((z_0 - k)\log(z_0))$ across the $O(z_0 - k)$ iterations, which is in $O(n \log(n))$, since $z_0 \leq n$. 
\paragraph{Iterations}
The number of times this loop is executed depends on $z_0$, the initial size of $Z$, since each iteration decreases the number of connected components by at least one.
Thus, the loop runs at most $z_0 - k$ times.

\paragraph{Conclusion}
The total time is $O(nk)$ for distance and score precomputation, $O(n\log n)$ for maintaining the heap, $O(nk)$ for winner scans and score additions, $O(n\alpha(n))$ for union--find, and $O(m)$ for scanning the edges of the neighborhood graph.

Hence the claimed bound $O(m+nk+n\log n+n\alpha(n))$ holds. For bounded-degree neighborhood graphs, including the grid graphs used in our application, $m\in O(n)$ and this becomes $O(n(k+\log n))$; whenever $k=\Omega(\log n)$, it simplifies further to $O(nk)$.
\end{proof}

\paragraph{Remark}
We implemented a simpler version of this algorithm without the use of the sophisticated data structures described above.
Our experiments show that this variant is empirically fast enough, however, if one needs a very fast version of the merging routine, the modifications described in \Cref{proof-runtime} can be used.

%% belongs to next section, just here for the looks

\begin{figure*}[t]
  \centering
  \begin{tikzpicture}
     \node [label=below:{(a) Clustering using EnSC.}] at (0,0) {\includegraphics[width=0.42\textwidth]{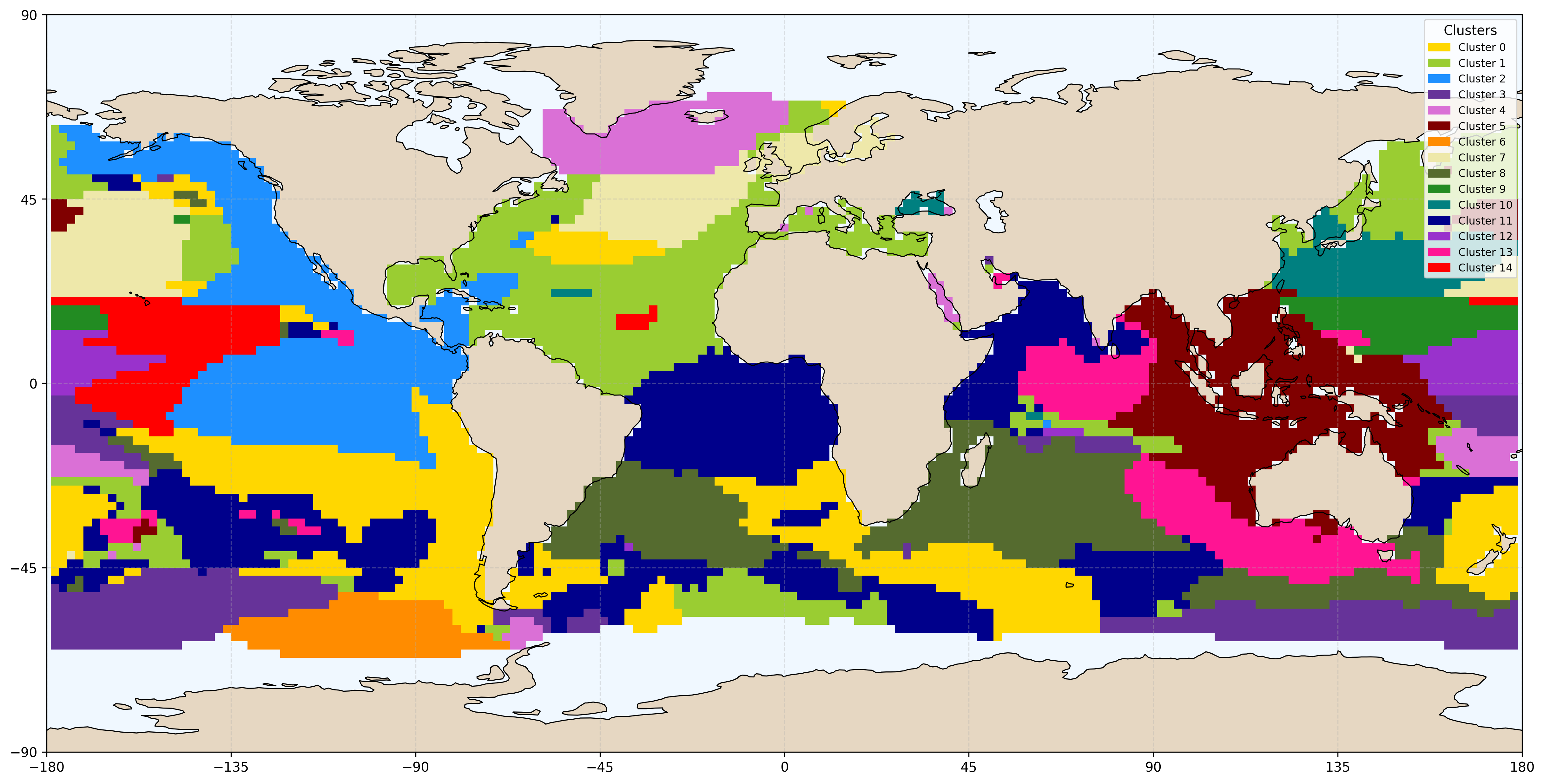}};
     \node [label=below:{(b) Clustering using SSC-OMP.}]at (7.8,0) {\includegraphics[width=0.42\textwidth]{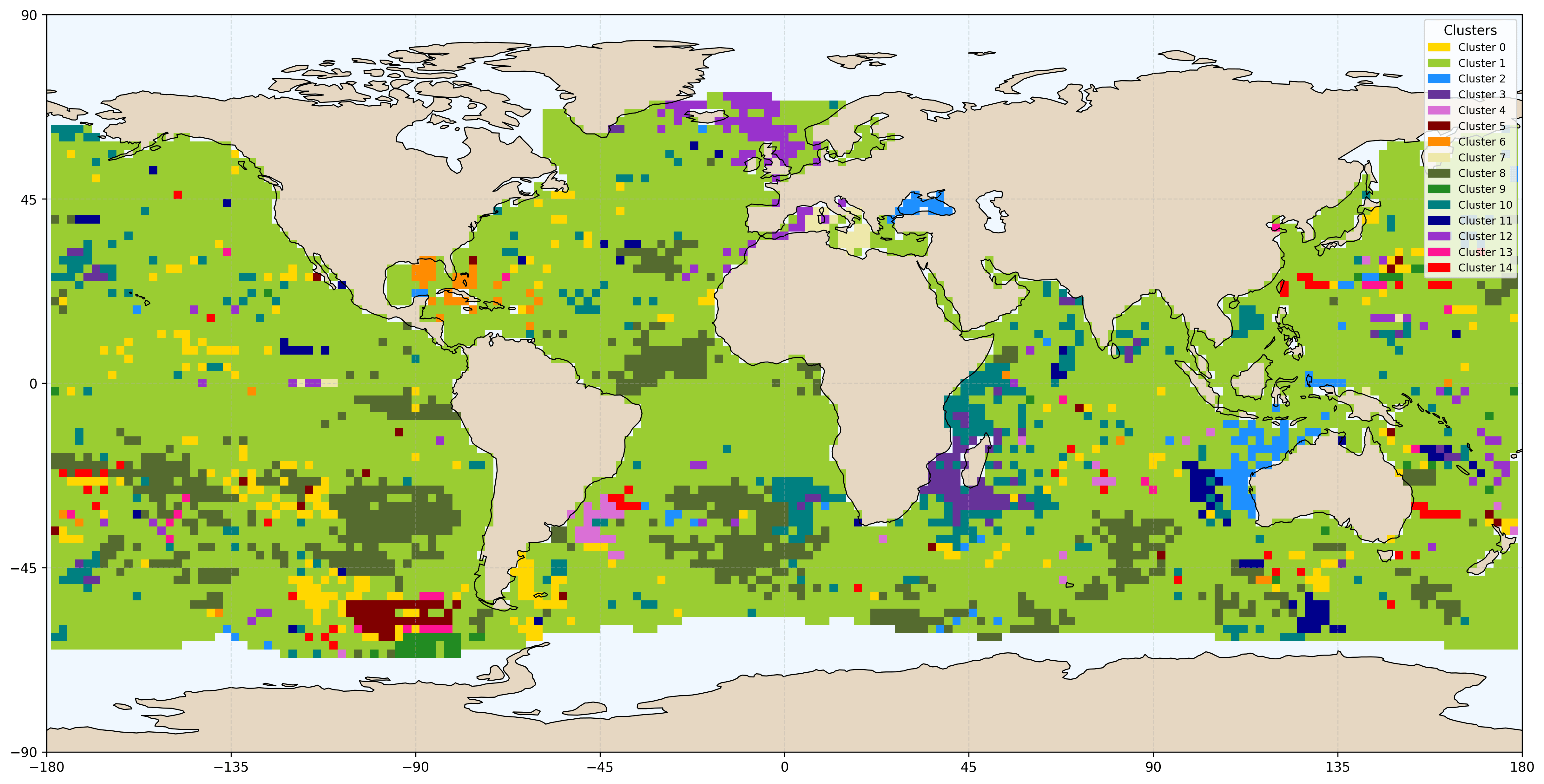}};\end{tikzpicture}
  \caption{Examples for the clusterings produced by SSC-OMP \cite{you2016oracle} and EnSC \cite{YouRV16}.}
  \label{fig:sc-comp}
\end{figure*}

\begin{table*}[b]
\centering
\caption{Number of connected components for SSC-OMP and EnSC on filtered and unfiltered input data, grouped by number of clusters. The second column indicates how many clusters should be found by the algorithm and columns 3 and 4 show how many connected components the resulting clustering has.}\label{tab:number_of_ssc-omp-ensc}
\begin{tabular}{|c|c|c|c|}
\hline
\makecell{method} & $\#$ clusters wanted & filtered & unfiltered \\ \hline
SSC-OMP & 8  & 225 & 52  \\ \hline
EnSC  & 8  & 128 & 31 \\ \hline
SSC-OMP & 15 & 796 & 90 \\ \hline
EnSC  & 15 & 169 & 54  \\ \hline
SSC-OMP & 20 & 1374 & 114 \\ \hline
EnSC  & 20 & 222 & 102  \\ \hline
SSC-OMP & 25 & 1966 & 137  \\ \hline
EnSC  & 25 & 219 & 132  \\ \hline
\end{tabular}
\end{table*}

\subsubsection{Filtering}\label{appendix-filtering-connectivity}

For this approach, we implement Step~\ref{step:establish_conn} of Algorithm~\ref{alg:sc} as follows: 
We treat the grid-like graph $N$ as an image, where each vertex corresponds to a pixel.
Pixels that represent land are masked using a binary mask and ignored during the filtering process. 

More specifically, set \( \sigma = \frac{h}{1.178} \) and for any point $x$, let $\mathcal{B}(x)$ denote the ball around $x$ containing all grid points that satisfy $d_H(x,y) \le 3 \sigma$ where $d_H(x,y)$ is the Haversine distance between $x$ and $y$. Then assign a weight to every point $y \in \mathcal{B}(x)$  using a Gaussian kernel:
\[
w(x, y) = \exp\left(-\frac{d_H(x, y)^2}{2\sigma^2}\right),
\]
Notice that the definition of $\sigma$ ensures that  the weight drops to half its maximum at the given half-width~$h$. These weights are then normalized: \mbox{$w_{x,y}^{norm} = \frac{w_{x,y}}{\sum_{y' \in \mathcal{B}_x \cup \{x\}} w_{x,y'}}.$}
The cluster that $x$ shall belong to is determined by weighting the clusters that the points in $\mathcal{B}(x)$ belong to. This is done by defining the following weighted support for a cluster \( C_i \in \{C_1, \dots, C_k\} \) at point \( x \):
\[
S(x,C_i) = \sum_{\substack{y \in \mathcal{B}(x) \\ y \in C_i}} w^{norm}(x, y).
\]
The new cluster label assigned to \( x \) is then the one that has the highest weighted support:
\[
C^*(x) = \arg\max_{C_i \in \mathcal{C}} S(x,C_i).
\]
As is typical for Gaussian filtering this reduces noise and creates more cohesive regions. However, this does not necessary result in exactly $k$ connected clusters, and thus Step~\ref{step:final_conn} is also executed.

%% file: appendix/appendix_experiments.tex
\begin{figure*}[t]
  \centering
  \begin{tikzpicture}
     \node [label=below:{(a) Clustering using EKGCSC.}] at (0,0) {\includegraphics[width=0.42\textwidth]{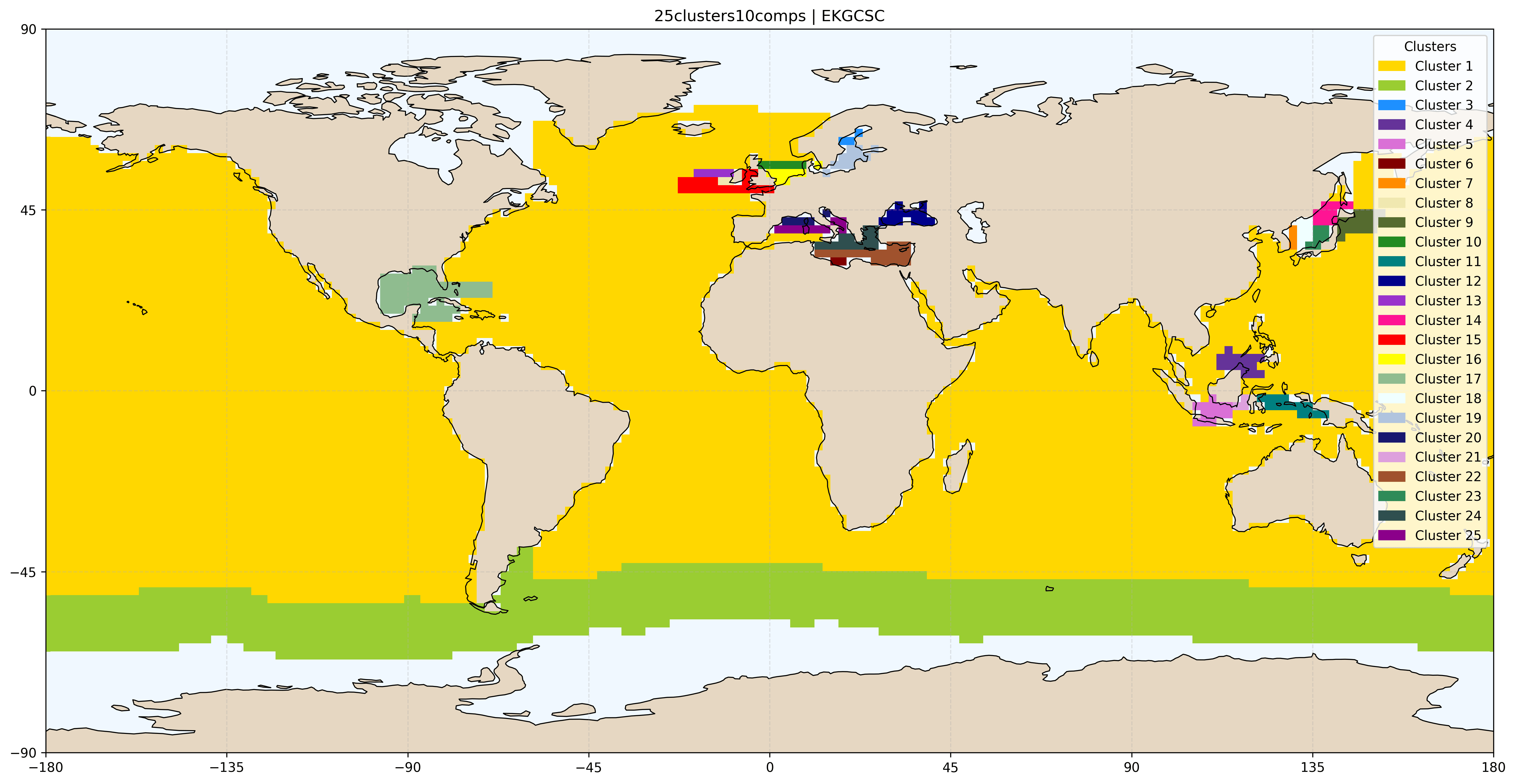}};
     \node [label=below:{(b) Clustering using EGCSC.}]at (7.8,0) {\includegraphics[width=0.42\textwidth]{fig/sla-EGCSC-clusters-globe-25.png}};\end{tikzpicture}
  \caption{Example of a clustering resulting from two HSI methods.}
  \label{fig:hsi-comp}
\end{figure*}

\begin{table*}[b]
\centering
\caption{Parameters configured for GraphConvSC}
\label{tab:graphcon-params}
\small
\begin{tabular}{|l|l|l|}
\hline
\textbf{Category}&\textbf{Parameter} & \textbf{Value} \\
Patch & \texttt{PATCH\_SIZE} & 15 \\
\hline
Clustering parameters & \texttt{N\_CLASSES\_LIST} & $\{8,15,20,25\}$ \\
\hline
Clustering parameters & \texttt{NB\_COMPS\_LIST} & $\{5,10,15,30\}$ \\
\hline
PCA & \texttt{n\_components} & \texttt{nb\_comps} from \texttt{NB\_COMPS\_LIST} \\
\hline
EGCSC & \texttt{regu\_coef} & $1\times10^{2}$ \\
\hline
EGCSC & \texttt{n\_neighbors} & 15 \\
\hline
EGCSC & \texttt{ro} & 0.8 \\
\hline
EGCSC & \texttt{n\_clusters} & \texttt{n\_classes} from \texttt{N\_CLASSES\_LIST} \\
\hline
EGCSC & \texttt{save\_affinity} & \texttt{False} \\
\hline
EKGCSC & \texttt{regu\_coef} & $1\times10^{3}$ \\
\hline
EKGCSC & \texttt{n\_neighbors} & 15 \\
\hline
EKGCSC & \texttt{ro} & 0.8 \\
\hline
EKGCSC & \texttt{gamma} & 0.5 \\
\hline
EKGCSC & \texttt{n\_clusters} & \texttt{n\_classes} from \texttt{N\_CLASSES\_LIST} \\
\hline
EKGCSC & \texttt{save\_affinity} & \texttt{False} \\
\hline
\end{tabular}
\end{table*}

\begin{table*}[t]
\centering
\caption{Number of connected components for GraphConvSC on filtered input data, grouped by number of clusters and subspace dimension $m'$. The second column indicates how many clusters should be found by the algorithm and columns 3-6 show how many connected components the resulting clustering has.}\label{tab:number_of_conn_comps_hsi}
\begin{subtable}{\textwidth}
\centering
\subcaption[]{Filtered input data}
\begin{tabular}{|c|c|c|c|c|c|}
\hline
\makecell{method} & $\#$ clusters wanted & $m'=5$ & $m'=10$ & $m'=15$ & $m'=30$ \\ \hline
EKGCSC & 8  & 28 & 24 & 24 & 25 \\ \hline
EGCSC  & 8  & 29 & 27 & 25 & 34 \\ \hline
EKGCSC & 15 & 34 & 37 & 39 & 38 \\ \hline
EGCSC  & 15 & 37 & 36 & 37 & 37 \\ \hline
EKGCSC & 20 & 48 & 45 & 45 & 44 \\ \hline
EGCSC  & 20 & 39 & 43 & 42 & 43 \\ \hline
EKGCSC & 25 & 59 & 53 & 49 & 50 \\ \hline
EGCSC  & 25 & 52 & 52 & 48 & 51 \\ \hline
\end{tabular}
\end{subtable}
\hfill
\begin{subtable}{\textwidth}
\centering
\subcaption[]{Unfiltered input data}
\begin{tabular}{|c|c|c|c|c|c|}
\hline
\makecell{method} & $\#$ clusters wanted & $m'=5$ & $m'=10$ & $m'=15$ & $m'=30$ \\ \hline
EKGCSC & 8  & 27 & 27 & 21 & 22 \\ \hline
EGCSC  & 8  & 24 & 21 & 24 & 18 \\ \hline
EKGCSC & 15 & 33 & 33 & 28 & 29 \\ \hline
EGCSC  & 15 & 30 & 36 & 34 & 31 \\ \hline
EKGCSC & 20 & 31 & 33 & 35 & 34 \\ \hline
EGCSC  & 20 & 43 & 48 & 33 & 33 \\ \hline
EKGCSC & 25 & 47 & 38 & 40 & 41 \\ \hline
EGCSC  & 25 & 61 & 54 & 53 & 39 \\ \hline
\end{tabular}
\end{subtable}
\end{table*}

\subsection{Data Preprocessing}\label{appendix-preprocessing}
Since the Earth is spherical and the grid is geodetic (latitude/longitude), one can not calculate the Gaussian kernel once and apply it to each data point. To our knowledge, existing tools do not provide an efficient, curvature-aware spatial smoothing pipeline; we therefore implement a parallelized spherical Gaussian filter that respects the Earth's curvature. 
Given a structured spatial grid $N$, as defined in \Cref{sec:grid_graph}, over the Earth's surface, where each $x \in N$ corresponds to a geodetic coordinate pair $(\phi_x, \lambda_x)$ in latitude and longitude, we define a smoothing operator based on a spherical Gaussian kernel. Let $S(t,x)$ represent the sea level anomaly data at time $t$ and spatial location $x$.
We construct a binary validity mask $V$ over $N$ by identifying all $x \in N$ for which $S(t,x)$ contains no invalid (missing) observations across all $t$. For each valid $x \in N, x \notin V$, we define its spherical neighborhood 
\[
\mathcal{B}_x = \{y \in N \setminus \{x\} \mid d_H(x,y)< r_c\}
\]
where $d_H$ is the great-circle distance computed using the Haversine formula and $r_c = 3\sigma$ is the cutoff radius. The Gaussian kernel is parameterized by a user-specified half-width~$h \in \mathbb{R}$, where $\sigma = \frac{h}{1.178}$. Each neighbor $y \in \mathcal{B}_x$ is assigned a weight $w_{x,y} = \exp (-\frac{d_H(x,y)}{2\sigma^2})$ and the central point x has a weight $w_{x,x} = 1 $. All weights are normalized $w_{x,y}^{norm} = \frac{w_{x,y}}{\sum_{y' \in \mathcal{B}_x \cup \{x\}} w_{x,y'}}$. The filtered time series at location $x$ is computed as 
\[
\widetilde{S}(t,x) = \sum_{y \in \mathcal{B} \cup \{x\}} w_{x,y}^{norm} \cdot S(t,y)
\]
where NaN-values are excluded to avoid an influence of the missing values from the weighted average. This operation can be performed independently for each time step $t$ and each point $x$. For scalability reasons the filter is executed in parallel using multiprocessing. The result is a spatially smoothed dataset, which has been calculated with respecting the Earth's curvature and with robustness to missing values.

\subsection{Subspace Clustering}\label{sec:additional-experiments}

In addition to the experiments presented in the main text, we conducted further analyses to compare our method to other subspace clustering methods and hyperspectral image segmentation algorithms. Please note that the methods discussed here were selected to be representatives of their respective fields, we performed additional experiments using other algorithms and chose these for their stability, feasibility and solution quality.

To compare our connected subspace clustering approach to state-of-the-art subspace clustering (without explicit connectivity constraints) we ran experiments using the code accompanying \cite{YouRV16}, who propose a method called \emph{sparse subspace clustering with orthogonal matching pursuit} (SSC-OMP), and \cite{you2016oracle}, who devised a method called \emph{elastic net subspace clustering} (EnSC).

These experiments were run for $k$ = 8, 15, 20 and 25 on unfiltered and filtered data. However, it is not possible to directly specify the subspace dimensions $m'$, so we are not able to include those in our results. We tried different configurations of the parameters and ended up with $\gamma = 50$ and otherwise used the default parameters as this produced the best results.
As both of these methods do not enforce connectivity, the resulting clustering is fragmented, especially when using SSC-OMP (see \Cref{fig:sc-comp}). On filtered data, the clusters formed by EnSC tend to be formed by multiple coherent connected components, while SSC-OMP consistently produces less coherent clusters. This is not surprising as SSC-OMP does not consider connectivity, while it is specifically integrated into EnSC. However, even EnSC does not enforce strict connectivity, but encourages the formation of more coherent clusters by combining sparse subspace clustering with least squares regression. The number of connected components of the subgraph induced by the clusters are reported in \Cref{tab:number_of_ssc-omp-ensc}. 
The subspace clustering cost of both methods are reported in \Cref{sec:results_appendix}.

\subsection{Hyperspectral Imaging}

In addition to the more traditional subspace clustering methods, we also investigated the field of hyperspectral image segmentation, as one explicit goal here is to produce spatially coherent clusters. 
We performed experiments using the code provided by \cite{cai2020graph} in their paper on \emph{Graph Convolutional Subspace Clustering} (GraphConvSC).

To ensure a fair comparison we modified the algorithm such that it handles spherical and incomplete input data. We ran these experiments for both \emph{efficient graph convolutional subspace clustering} (EGCSC) and \emph{efficient kernel graph convolutional subspace clustering} (EKGCSC), which are the two methods proposed by \cite{cai2020graph}.
The parameters were set as can be seen in \Cref{tab:graphcon-params}. We chose these  parameters as they produced the best results.
It is important to note that these methods are not designed to explicitly enforce connectivity. This can be seen in \Cref{tab:number_of_conn_comps_hsi}, where the number of connected components that are actually built by the algorithm proposed by \cite{cai2020graph} are listed. We additionally report the summed distances of the points in a cluster to the the subspace spanned by that cluster, i.e. the subspace clustering cost in the tables shown in \Cref{sec:results_appendix}. Figures showing examples of the resulting clusterings are given in \Cref{fig:hsi-comp}.
Interestingly, EKGCSC does produce less coherent clusters that EGCSC.

%% file: appendix/results_table.tex
% Every result table in this file is wider than one ALENEX column.  Keep the
% table bodies unchanged and route the existing table environments through
% full-width table* floats while this file is being read.
\let\alenexOriginalTable\table
\let\endalenexOriginalTable\endtable
\renewenvironment{table}[1][]{\begin{table*}[t]}{\end{table*}}

\subsection{Results}\label{sec:results_appendix}
In this section we present the results from our set of experiments. 
Each table contains the subspace clustering cost of the initial clustering methods described in \Cref{sec:initialclustering} in the corresponding column. The remaining columns report how the subspace clustering cost changes when using \Cref{alg:sc} with the corresponding connectivity methods as described in \Cref{sec:connectivity}.
Additionally, there are comparisons to the subspace clustering and HSI methods discussed in \Cref{sec:additional-experiments}. Those are the last four rows in each table. As they are meant as standalone comparison methods, we did not apply the merging procedures to them.
Please note that the methods SSC-OMP and EnSC do not allow for a specific $m'$ to be provided, which is why there is only one value in those rows.
\begin{table}[H]
\centering
\caption{Distances from points to subspaces using filtered input data, with 8 clusters. Blue indicates the best value in each row.}\label{tab:8_filtered}
\setlength{\tabcolsep}{2pt}  % default is 6pt
\resizebox{\textwidth}{!}{%
\begin{tabular}{|c|c|c|c|c|c|c|}
\hline
& $m'$ & \makecell{initial \\ clustering} & IterMerge & PostMerge & \makecell{Smooth\\Merge} & \makecell{Integrated\\Conn}\\ \hline
\makecell{Agglo-ST} & 
\begin{minipage}{0.05\textwidth}\centering\begin{tabular}{c}
5 \\
10 \\
15 \\
30
\end{tabular} \end{minipage}  
& 
\begin{minipage}{0.16\textwidth}\centering\begin{tabular}{c}
7692.49489 \\
6711.79823 \\
6058.78586 \\
4739.49195 \\\end{tabular} \end{minipage} 
& \begin{minipage}{0.16\textwidth}\centering\begin{tabular}{c} 
\textcolor{blue}{7449.85439 }   \\
\textcolor{blue}{6416.84714}    \\
5681.50449 \\
\textcolor{blue}{4275.62553 }   \\\end{tabular} \end{minipage} 
& \begin{minipage}{0.16\textwidth}\centering\begin{tabular}{c}
7465.55920   \\
6577.24879    \\
5885.93997    \\
4597.55834  \\\end{tabular} \end{minipage} 
& \begin{minipage}{0.16\textwidth}\centering\begin{tabular}{c}
7478.41837   \\
6567.92509    \\
5916.59557   \\
4602.77377    \\\end{tabular} \end{minipage} 
& \begin{minipage}{0.16\textwidth}\centering\begin{tabular}{c}
7606.71637   \\
6496.16303   \\
\textcolor{blue}{5678.59392}    \\
4300.00575    \\\end{tabular} \end{minipage}\\
\hline
%%%%%%%%%%%%%%%%%%%%%%%%%%%%%%%%%%%%%%% agl conn cl euclidean
\makecell{Conn-Agglo-Euc} &
\begin{minipage}{0.05\textwidth}\centering\begin{tabular}{c}
5 \\
10 \\
15 \\
30
\end{tabular} \end{minipage}  
& \begin{minipage}{0.16\textwidth}\centering\begin{tabular}{c} 
7577.79424    \\
6552.16401    \\
5892.12265    \\
4529.30297    \\
\end{tabular} \end{minipage} & \begin{minipage}{0.16\textwidth}\centering\begin{tabular}{c}  
\textcolor{blue}{7407.11306 }  \\
\textcolor{blue}{6434.20938}   \\
\textcolor{blue}{5656.52604}  \\
\textcolor{blue}{4310.62839}   \\ \end{tabular} \end{minipage} 
& \begin{minipage}{0.16\textwidth}\centering\begin{tabular}{c} 
7516.26784   \\
6533.40635   \\
5880.22969   \\
4524.68553   \\ \end{tabular} \end{minipage} & \begin{minipage}{0.16\textwidth}\centering\begin{tabular}{c} 
7521.30797   \\
6457.07786   \\
5867.63781   \\
4506.63351   \\ \end{tabular} \end{minipage} & \begin{minipage}{0.16\textwidth}\centering\begin{tabular}{c}
7472.77518  \\
6471.56399  \\
5793.80565  \\
4337.03429  \\ \end{tabular} \end{minipage}\\
\hline
%%%%%%%%%%%%%%%%%%%% agl conn cl thompson function
\makecell{Conn-Agglo-ST} &
\begin{minipage}{0.05\textwidth}\centering\begin{tabular}{c}
5 \\
10 \\
15 \\
30
\end{tabular} \end{minipage}   
& \begin{minipage}{0.16\textwidth}\centering\begin{tabular}{c} 
7665.13937   \\
6626.54510   \\
5925.35443   \\
4543.34794    \\
\end{tabular} \end{minipage} & \begin{minipage}{0.16\textwidth}\centering\begin{tabular}{c} 
\textcolor{blue}{7471.02903}   \\
\textcolor{blue}{6401.78462}    \\
5716.76295    \\
\textcolor{blue}{4328.07089}    \\ \end{tabular} \end{minipage} & \begin{minipage}{0.16\textwidth}\centering\begin{tabular}{c}
7480.37794   \\
6614.75710    \\
5891.04896    \\
4502.91717    \\ \end{tabular} \end{minipage} & \begin{minipage}{0.16\textwidth}\centering\begin{tabular}{c} 
7563.32227   \\
6517.96774  \\
\textcolor{blue}{5712.20077}    \\
4589.32804    \\ \end{tabular} \end{minipage} & \begin{minipage}{0.16\textwidth}\centering\begin{tabular}{c} 
7553.95301  \\
6522.68229   \\
5841.31301   \\
4335.89561   \\ \end{tabular} \end{minipage}\\
\hline
%%%%%%%%%%%%%%%%%%%%%%%%%%%%%%%%%%%% k-means
\makecell{Conn-KMeans++ }&
\begin{minipage}{0.05\textwidth}\centering\begin{tabular}{c}
5 \\
10 \\
15 \\
30
\end{tabular} \end{minipage}   
& \begin{minipage}{0.16\textwidth}\centering\begin{tabular}{c}  
7672.81813    \\
6684.77629   \\
6019.41239    \\
4657.46825    \\
\end{tabular} \end{minipage} & \begin{minipage}{0.16\textwidth}\centering\begin{tabular}{c} 
7456.54788   \\
\textcolor{blue}{6404.43239}    \\
5741.98456    \\
4367.77138    \\ \end{tabular} \end{minipage} & \begin{minipage}{0.16\textwidth}\centering\begin{tabular}{c}
7618.64891   \\
6586.35707    \\
5930.38415    \\
4572.23265    \\ \end{tabular} \end{minipage} & \begin{minipage}{0.16\textwidth}\centering\begin{tabular}{c} 
7536.54157  \\
6517.60083   \\
5903.71485    \\
4574.05150   \\ \end{tabular} \end{minipage} & \begin{minipage}{0.16\textwidth}\centering\begin{tabular}{c}
\textcolor{blue}{7412.93632}   \\
6530.95691    \\
\textcolor{blue}{5725.26091}    \\
\textcolor{blue}{4360.19843}   \\ \end{tabular} \end{minipage}\\
\hline
%%%%%%%%%%%%%%%%%%%%%%%%%%%%%%%%%%%%% wards method
\makecell{Conn-Ward} &
\begin{minipage}{0.05\textwidth}\centering\begin{tabular}{c}
5 \\
10 \\
15 \\
30
\end{tabular} \end{minipage}  
& \begin{minipage}{0.16\textwidth}\centering\begin{tabular}{c} 
7818.73543  \\
6884.08856   \\
6250.65279    \\
4952.76399    \\
\end{tabular} \end{minipage} & 
\begin{minipage}{0.16\textwidth}\centering\begin{tabular}{c} 
7559.41281   \\
\textcolor{blue}{6390.55542}   \\
\textcolor{blue}{5728.35510}    \\
\textcolor{blue}{4561.76944}    \\ \end{tabular} \end{minipage} & 
\begin{minipage}{0.16\textwidth}\centering\begin{tabular}{c}
7547.78283   \\
6625.74611    \\
5948.23946    \\
4773.37472    \\ \end{tabular} \end{minipage} & 
\begin{minipage}{0.16\textwidth}\centering\begin{tabular}{c}
\textcolor{blue}{7519.00091}   \\
6605.33257    \\
5941.08003    \\
4717.35659    \\ \end{tabular} \end{minipage} & 
\begin{minipage}{0.16\textwidth}\centering\begin{tabular}{c}
7662.99054   \\
6721.36554    \\
6059.41774    \\
4770.03843    \\ \end{tabular} \end{minipage}\\
\hline
%%%%%%%%%%%%%%%%%%%%%%%%%%%%%%%% graphconvsc ekcsc %%%%%%%%%
\makecell{GraphConvSC \\ EKGCSC} &
\begin{minipage}{0.05\textwidth}\centering\begin{tabular}{c}
5 \\
10 \\
15 \\
30
\end{tabular} \end{minipage}  
& \begin{minipage}{0.16\textwidth}\centering\begin{tabular}{c}
13124.82028 \\
13088.99551 \\
13084.56918 \\
13115.11044 \\
\end{tabular} \end{minipage} & - & - & - & - \\
\hline
%%%%%%%%%%%%%%%%%%%%%%%% graphconvsc egcsc %%%%%%%%%%%%%%
\makecell{GraphConvSC \\ EGCSC} &
\begin{minipage}{0.05\textwidth}\centering\begin{tabular}{c}
5 \\
10 \\
15 \\
30
\end{tabular} \end{minipage}  
& \begin{minipage}{0.16\textwidth}\centering\begin{tabular}{c}
13128.60392 \\
13091.91857 \\
13250.86692 \\
13136.15622 \\
\end{tabular} \end{minipage} & - & - & - & - \\
\hline
%%%%%%%%%%%%%%%%%%%%%%%% SSC OMP %%%%%%%%%%%%%%
\makecell{SSC-OMP} &
& 15276.42554 & - & - & - & - \\
\hline
%%%%%%%%%%%%%%%%%%%%%%%% EnSC %%%%%%%%%%%%%%
\makecell{EnSC} &
& 14088.99291 & - & - & - & - \\
\hline
\end{tabular}%
}
\end{table}

%%%%%%%%%%%%%%%%%%%%%%%%%%%%%%%%%%%%%%%%%%%%%%%%%%%% unfiltered, 8 cluster

\begin{table}[H]
\centering
\caption{Distances from points to subspaces using unfiltered input data, with 8 clusters. Blue indicates the best value in each row. Note that the algorithm Agglo-ST performs poorly on unfiltered data (see \Cref{fig:thompson_filtering}), resulting in clusters unsuitable for Conn-Subspace. This is because one large cluster dominates while the others are too small to compute subspaces for $m'$ dimensions. This is why the corresponding values are missing in the table.}\label{tab:8_unfiltered}
\setlength{\tabcolsep}{2pt}  % default is 6pt
\resizebox{\textwidth}{!}{%
\begin{tabular}{|c|c|c|c|c|c|c|}
\hline
%%%%%%%%%%%%%%%%%%%%%%%%%%%%%%%%%%%% Thompson clustering
& $m'$ & \makecell{initial \\ clustering} & IterMerge & PostMerge & \makecell{Smooth\\Merge} & \makecell{Integrated\\Conn}\\ \hline
\makecell{Agglo-ST}& 
\begin{minipage}{0.05\textwidth}\centering\begin{tabular}{c}
5 \\
10 \\
15 \\
30
\end{tabular} \end{minipage}  
&\begin{minipage}{0.16\textwidth}\centering\begin{tabular}{c}  
8357.61138  \\
7465.22683  \\
\textcolor{blue}{6934.41398}  \\
\textcolor{blue}{5827.25554}  \\
\end{tabular} \end{minipage} & 
\begin{minipage}{0.16\textwidth}\centering\begin{tabular}{c}
-                 \\
-                 \\
-                 \\
-                 \\ \end{tabular} \end{minipage} & 
\begin{minipage}{0.16\textwidth}\centering\begin{tabular}{c} 
\textcolor{blue}{8269.62100}  \\
\textcolor{blue}{7268.01158}  \\
-             \\
-              \\ \end{tabular} \end{minipage} & 
\begin{minipage}{0.16\textwidth}\centering\begin{tabular}{c}
-                 \\
-                 \\
-                 \\
-                 \\ \end{tabular} \end{minipage} & 
\begin{minipage}{0.16\textwidth}\centering\begin{tabular}{c} 
8363.89983   \\
7461.79679   \\
-   \\
-   \\ \end{tabular} \end{minipage}\\
\hline
%%%%%%%%%%%%%%%%%%%%%%%%%%%%%%%%%%%%%%%%% connected euclidean
\makecell{Conn-Agglo-Euc} &
\begin{minipage}{0.05\textwidth}\centering\begin{tabular}{c}
5 \\
10 \\
15 \\
30
\end{tabular} \end{minipage}  &
\begin{minipage}{0.16\textwidth}\centering\begin{tabular}{c}  
7731.16245  \\
6813.48844    \\
6189.81632    \\
4902.32309    \\
\end{tabular} \end{minipage} & \begin{minipage}{0.16\textwidth}\centering\begin{tabular}{c}  
\textcolor{blue}{7544.00497}   \\
6589.54375    \\
\textcolor{blue}{5877.08022}   \\
\textcolor{blue}{4494.57215}   \\ \end{tabular} \end{minipage} & \begin{minipage}{0.16\textwidth}\centering\begin{tabular}{c}
7613.92384       \\
6732.86394     \\
5974.47518      \\
4755.52725     \\\end{tabular} \end{minipage} & \begin{minipage}{0.16\textwidth}\centering\begin{tabular}{c}  
7581.09045     \\
\textcolor{blue}{6526.79913}    \\
5977.69428    \\
4637.71137    \\ \end{tabular} \end{minipage} & 
\begin{minipage}{0.16\textwidth}\centering\begin{tabular}{c}  
7607.04228     \\
6703.52087    \\
6001.12751     \\
4719.07835     \\ \end{tabular} \end{minipage}\\
\hline
%%%%%%%%%%%%%%%%%%%%%%%%%%%%%%%%%%% connected thompson function
\makecell{Conn-Agglo-ST}&
\begin{minipage}{0.05\textwidth}\centering\begin{tabular}{c}
5 \\
10 \\
15 \\
30
\end{tabular} \end{minipage}   
& \begin{minipage}{0.16\textwidth}\centering\begin{tabular}{c} 
7587.75423 \\ 
6517.61474 \\ 
5821.57802 \\ 
4433.68417 \\ 
\end{tabular} \end{minipage} & 
\begin{minipage}{0.16\textwidth}\centering\begin{tabular}{c}   
\textcolor{blue}{7459.38257} \\ 
\textcolor{blue}{6413.93545} \\ 
\textcolor{blue}{5680.3945} \\ 
4389.54962 \\  \end{tabular} \end{minipage} & 
\begin{minipage}{0.16\textwidth}\centering\begin{tabular}{c} 
7616.73677 \\ 
6562.13264 \\ 
5802.8574 \\ 
4449.47911 \\  \end{tabular} \end{minipage} & 
\begin{minipage}{0.16\textwidth}\centering\begin{tabular}{c} 
7554.43019 \\ 
6421.3932 \\ 
5873.59309 \\ 
4454.67732 \\  \end{tabular} \end{minipage} & 
\begin{minipage}{0.16\textwidth}\centering\begin{tabular}{c}
7477.76603 \\ 
6438.73443 \\ 
5739.71282 \\ 
\textcolor{blue}{4361.23423} \\  \end{tabular} \end{minipage}\\
\hline
%%%%%%%%%%%%%%%%%%%%%%%%%%%%%%%%%%%% k-means
\makecell{Conn-KMeans++} &
\begin{minipage}{0.05\textwidth}\centering\begin{tabular}{c}
5 \\
10 \\
15 \\
30
\end{tabular} \end{minipage}  
& \begin{minipage}{0.16\textwidth}\centering\begin{tabular}{c} 
7787.54983    \\
6853.38225    \\
6208.53329    \\
4908.12907    \\
\end{tabular} \end{minipage} & 
\begin{minipage}{0.16\textwidth}\centering\begin{tabular}{c}  
\textcolor{blue}{7402.50782 }  \\
6443.82804   \\
5756.01577   \\
\textcolor{blue}{4314.02620}   \\ \end{tabular} \end{minipage} & 
\begin{minipage}{0.16\textwidth}\centering\begin{tabular}{c} 
7576.29373   \\
6519.65702   \\
5861.48456   \\
4664.09250   \\ \end{tabular} \end{minipage} & 
\begin{minipage}{0.16\textwidth}\centering\begin{tabular}{c} 
7580.04322   \\
6710.61413   \\
5941.10418    \\
4646.05051   \\ \end{tabular} \end{minipage} & 
\begin{minipage}{0.16\textwidth}\centering\begin{tabular}{c}  
7511.55950    \\
\textcolor{blue}{6434.85064 }   \\
\textcolor{blue}{5744.50432 }    \\
4669.39230    \\ \end{tabular} \end{minipage}\\
\hline
%%%%%%%%%%%%%%%%%%%%%%%%%%%%%%%%%%%%%%%%% wards method
\makecell{Conn-Ward} &
\begin{minipage}{0.05\textwidth}\centering\begin{tabular}{c}
5 \\
10 \\
15 \\
30
\end{tabular} \end{minipage}  
& \begin{minipage}{0.16\textwidth}\centering\begin{tabular}{c}  
7731.16195      \\
6813.49009      \\
6189.85300      \\
4902.18010      \\
\end{tabular} \end{minipage} & \begin{minipage}{0.16\textwidth}\centering\begin{tabular}{c}  
7544.00447      \\
6609.47078      \\
\textcolor{blue}{5866.06992 }     \\
\textcolor{blue}{4502.06469}      \\ \end{tabular} \end{minipage} & 
\begin{minipage}{0.16\textwidth}\centering\begin{tabular}{c} 
\textcolor{blue}{7534.85592}        \\
6577.85847        \\
5997.38423         \\
4722.56464        \\ \end{tabular} \end{minipage} & \begin{minipage}{0.16\textwidth}\centering\begin{tabular}{c} 
7569.99355     \\
\textcolor{blue}{6532.33852 }    \\
5991.65580     \\
4641.04607      \\ \end{tabular} \end{minipage} & \begin{minipage}{0.16\textwidth}\centering\begin{tabular}{c} 
7607.04291    \\
6703.51083    \\
6000.58657    \\
4719.93802    \\ \end{tabular} \end{minipage}\\
\hline
%%%%%%%%%%%%%%%%%%%%%%%%%%%%%%%% graphconvsc ekcsc %%%%%%%%%
\makecell{GraphConvSC \\ EKGCSC} &
\begin{minipage}{0.05\textwidth}\centering\begin{tabular}{c}
5 \\
10 \\
15 \\
30
\end{tabular} \end{minipage}  
& \begin{minipage}{0.16\textwidth}\centering\begin{tabular}{c}
13529.17649 \\
13531.45765 \\
13720.32976 \\
13721.31299 \\
\end{tabular} \end{minipage} & - & - & - & - \\
\hline
%%%%%%%%%%%%%%%%%%%%%%%% graphconvsc egcsc %%%%%%%%%%%%%%
\makecell{GraphConvSC \\ EGCSC} &
\begin{minipage}{0.05\textwidth}\centering\begin{tabular}{c}
5 \\
10 \\
15 \\
30
\end{tabular} \end{minipage}  
& \begin{minipage}{0.16\textwidth}\centering\begin{tabular}{c}
13534.20219 \\
13730.57769 \\
13717.10833 \\
13890.14598 \\
\end{tabular} \end{minipage} & - & - & - & - \\
\hline
%%%%%%%%%%%%%%%%%%%%%%%% SSC OMP %%%%%%%%%%%%%%
\makecell{SSC-OMP} &
& 15122.25802 & - & - & - & - \\
\hline
%%%%%%%%%%%%%%%%%%%%%%%% EnSC %%%%%%%%%%%%%%
\makecell{EnSC} &
& 14900.37739 & - & - & - & - \\
\hline
\end{tabular}%
}
\end{table}

%%%%%%%%%%%%%%%%%%%%%%%%%%%%%%%%%%%%%% filtered 15 cluster 

\begin{table}[H]
\centering
\caption{Distances from points to subspaces using filtered input data, with 15 clusters. Blue indicates the best value in each row.}\label{tab:15_filtered}
\setlength{\tabcolsep}{2pt}  % default is 6pt
\resizebox{\textwidth}{!}{%
\begin{tabular}{|c|c|c|c|c|c|c|}
\hline
%%%%%%%%%%%%%%%%%%%%%%%%%%%%%%%%%%%% Thompson clustering
& $m'$ & \makecell{initial \\ clustering} & IterMerge & PostMerge & \makecell{Smooth\\Merge} & \makecell{Integrated\\Conn}\\ \hline
\makecell{Agglo-ST} &
\begin{minipage}{0.05\textwidth}\centering\begin{tabular}{c}
5 \\
10 \\
15 \\
30
\end{tabular} \end{minipage} & \begin{minipage}{0.16\textwidth}\centering\begin{tabular}{c}
7401.75317 \\
6373.87940 \\
5692.49488 \\
4324.59394 \\
\end{tabular} \end{minipage} & \begin{minipage}{0.16\textwidth}\centering\begin{tabular}{c}
\textcolor{blue}{6928.48755 } \\
\textcolor{blue}{5866.01001} \\
\textcolor{blue}{5187.14312 } \\
\textcolor{blue}{3737.01157} \\
\end{tabular} \end{minipage} & \begin{minipage}{0.16\textwidth}\centering\begin{tabular}{c}
7017.48275 \\
5936.91435 \\
5206.89874 \\
3940.42473 \\
\end{tabular} \end{minipage} & \begin{minipage}{0.16\textwidth}\centering\begin{tabular}{c}
7000.30368 \\
6085.79398 \\
5318.16172 \\
3949.27144 \\
\end{tabular} \end{minipage} & \begin{minipage}{0.16\textwidth}\centering\begin{tabular}{c}
7153.61319 \\
6261.35012 \\
5538.33864 \\
4027.25431 \\
\end{tabular} \end{minipage}\\
\hline
%%%%%%%%%%%%%%%%%%%%%%%%%%%%%%%%%%%%%%%%% aggl connected cl euclidean dist
\makecell{Conn-Agglo-Euc} &
\begin{minipage}{0.05\textwidth}\centering\begin{tabular}{c}
5 \\
10 \\
15 \\
30
\end{tabular} \end{minipage} & \begin{minipage}{0.16\textwidth}\centering\begin{tabular}{c}
7114.02254 \\
6072.58344 \\
5370.74695 \\
3956.67356 \\
\end{tabular} \end{minipage} & \begin{minipage}{0.16\textwidth}\centering\begin{tabular}{c}
\textcolor{blue}{6850.53328} \\
{5847.53931} \\
\textcolor{blue}{5052.96718} \\
\textcolor{blue}{3655.53732 } \\
\end{tabular} \end{minipage} & \begin{minipage}{0.16\textwidth}\centering\begin{tabular}{c}
7123.69161 \\
6099.05409 \\
5249.62310 \\
3831.89306 \\
\end{tabular} \end{minipage} & \begin{minipage}{0.16\textwidth}\centering\begin{tabular}{c}
7015.46846 \\
6014.41582 \\
5168.60667 \\
3816.64460 \\
\end{tabular} \end{minipage} & \begin{minipage}{0.16\textwidth}\centering\begin{tabular}{c}
6928.5687 \\
\textcolor{blue}{5828.97291} \\
5141.67013 \\
3724.09671 \\
\end{tabular} \end{minipage}\\
\hline
%%%%%%%%%%%%%%%%%%%%%%%%%%%%%%%%%%% aggl connected cl with thompson distance
\makecell{Conn-Agglo-ST} &
\begin{minipage}{0.05\textwidth}\centering\begin{tabular}{c}
5 \\
10 \\
15 \\
30
\end{tabular} \end{minipage} & \begin{minipage}{0.16\textwidth}\centering\begin{tabular}{c}
7165.77020 \\
6043.90989 \\
5286.28692 \\
3830.37398 \\
\end{tabular} \end{minipage} & \begin{minipage}{0.16\textwidth}\centering\begin{tabular}{c}
\textcolor{blue}{6900.80387 } \\
\textcolor{blue}{5738.33263 } \\
\textcolor{blue}{4995.24785} \\
\textcolor{blue}{3633.49710} \\
\end{tabular} \end{minipage} & \begin{minipage}{0.16\textwidth}\centering\begin{tabular}{c}
7168.83511 \\
5971.14997 \\
5007.97436 \\
3829.96746 \\
\end{tabular} \end{minipage} & \begin{minipage}{0.16\textwidth}\centering\begin{tabular}{c}
7124.80622 \\
5908.16445 \\
5061.30361 \\
3678.17079 \\
\end{tabular} \end{minipage} & \begin{minipage}{0.16\textwidth}\centering\begin{tabular}{c}
7004.66985 \\
5825.62817 \\
5053.81309 \\
3708.89951 \\
\end{tabular} \end{minipage}\\
\hline
%%%%%%%%%%%%%%%%%%%%%%%%%%%%%%%%%%%% k-means
\makecell{Conn-KMeans++} &
\begin{minipage}{0.05\textwidth}\centering\begin{tabular}{c}
5 \\
10 \\
15 \\
30
\end{tabular} \end{minipage} & \begin{minipage}{0.16\textwidth}\centering\begin{tabular}{c}
7252.59653 \\
6185.78104 \\
5460.38901 \\
4035.94281 \\
\end{tabular} \end{minipage} & \begin{minipage}{0.16\textwidth}\centering\begin{tabular}{c}
\textcolor{blue}{6935.95820} \\
\textcolor{blue}{5794.88755} \\
\textcolor{blue}{4994.50226} \\
\textcolor{blue}{3659.64386} \\
\end{tabular} \end{minipage} & \begin{minipage}{0.16\textwidth}\centering\begin{tabular}{c}
7102.93392 \\
5955.52421 \\
5210.23140 \\
3765.80204 \\
\end{tabular} \end{minipage} & \begin{minipage}{0.16\textwidth}\centering\begin{tabular}{c}
7120.01640 \\
5835.30837 \\
4974.27926 \\
3617.59620 \\
\end{tabular} \end{minipage} & \begin{minipage}{0.16\textwidth}\centering\begin{tabular}{c}
6987.27644 \\
5815.82064 \\
5128.15819 \\
3668.58070 \\
\end{tabular} \end{minipage}\\
\hline
%%%%%%%%%%%%%%%%%%%%%%%%%%%%%%%%%%%%%%%%% wards method
\makecell{Conn-Ward}  &
\begin{minipage}{0.05\textwidth}\centering\begin{tabular}{c}
5 \\
10 \\
15 \\
30
\end{tabular} \end{minipage} & \begin{minipage}{0.16\textwidth}\centering\begin{tabular}{c}
7220.75107 \\
6267.09090 \\
5616.07627 \\
4308.16897 \\
\end{tabular} \end{minipage} & \begin{minipage}{0.16\textwidth}\centering\begin{tabular}{c}
\textcolor{blue}{6989.61962} \\
\textcolor{blue}{5934.10865} \\
\textcolor{blue}{5080.41592} \\
\textcolor{blue}{3905.82830} \\
\end{tabular} \end{minipage} & \begin{minipage}{0.16\textwidth}\centering\begin{tabular}{c}
7094.40494 \\
6066.03266 \\
5458.88736 \\
4210.32269 \\
\end{tabular} \end{minipage} & \begin{minipage}{0.16\textwidth}\centering\begin{tabular}{c}
7017.41398 \\
6039.21859 \\
5142.28195 \\
3879.76737 \\
\end{tabular} \end{minipage} & \begin{minipage}{0.16\textwidth}\centering\begin{tabular}{c}
7018.86197 \\
6036.39050 \\
5372.64187 \\
4000.63856 \\
\end{tabular} \end{minipage}\\
\hline
%%%%%%%%%%%%%%%%%%%%%%%%%%%%%%%% graphconvsc ekcsc %%%%%%%%%
\makecell{GraphConvSC \\ EKGCSC} &
\begin{minipage}{0.05\textwidth}\centering\begin{tabular}{c}
5 \\
10 \\
15 \\
30
\end{tabular} \end{minipage}  
& \begin{minipage}{0.16\textwidth}\centering\begin{tabular}{c}
12816.98138 \\
12696.11403 \\
12629.34658 \\
12607.53272 \\
\end{tabular} \end{minipage} & - & - & - & - \\
\hline
%%%%%%%%%%%%%%%%%%%%%%%% graphconvsc egcsc %%%%%%%%%%%%%%
\makecell{GraphConvSC \\ EGCSC} &
\begin{minipage}{0.05\textwidth}\centering\begin{tabular}{c}
5 \\
10 \\
15 \\
30
\end{tabular} \end{minipage}  
& \begin{minipage}{0.16\textwidth}\centering\begin{tabular}{c}
12595.44702 \\
12671.02373 \\
12715.13632 \\
12833.03298 \\
\end{tabular} \end{minipage} & - & - & - & - \\
\hline
%%%%%%%%%%%%%%%%%%%%%%%% SSC OMP %%%%%%%%%%%%%%
\makecell{SSC-OMP} &
& 15056.41022 & - & - & - & - \\
\hline
%%%%%%%%%%%%%%%%%%%%%%%% EnSC %%%%%%%%%%%%%%
\makecell{EnSC} &
& 13462.20711 & - & - & - & - \\
\hline
\end{tabular}%
}
\end{table}

%%%%%%%%%%%%%%%%%%%%%%%%%%%%%%%%%%%%%% unfiltered 15 cluster 

\begin{table}[H]
\centering
\caption{Distances from points to subspaces using unfiltered input data, with 15 clusters. Blue indicates the best value in each row.}\label{tab:15_unfiltered}
\setlength{\tabcolsep}{2pt}  % default is 6pt
\resizebox{\textwidth}{!}{%
\begin{tabular}{|c|c|c|c|c|c|c|}
\hline
%%%%%%%%%%%%%%%%%%%%%%%%%%%%%%%%%%%% Thompson clustering
& $m'$ & \makecell{initial \\ clustering} & IterMerge & PostMerge & \makecell{Smooth\\Merge} & \makecell{Integrated\\Conn}\\ \hline
\makecell{Agglo-ST} &
\begin{minipage}{0.05\textwidth}\centering\begin{tabular}{c}
5 \\
10 \\
15 \\
30
\end{tabular} \end{minipage} & \begin{minipage}{0.16\textwidth}\centering\begin{tabular}{c}
8138.18456 \\
7204.38252 \\
6658.85384 \\
5568.49755 \\
\end{tabular} \end{minipage} & \begin{minipage}{0.16\textwidth}\centering\begin{tabular}{c}
\textcolor{blue}{7053.37876 } \\
\textcolor{blue}{6334.25977} \\
\textcolor{blue}{5778.67113} \\
\textcolor{blue}{4440.49430 } \\
\end{tabular} \end{minipage} & \begin{minipage}{0.16\textwidth}\centering\begin{tabular}{c}
7496.84981 \\
6598.76775 \\
6296.71185 \\
4973.61062 \\
\end{tabular} \end{minipage} & \begin{minipage}{0.16\textwidth}\centering\begin{tabular}{c}
7818.48633 \\
6779.11463 \\
6081.50442 \\
5276.09128 \\
\end{tabular} \end{minipage} & \begin{minipage}{0.16\textwidth}\centering\begin{tabular}{c}
7964.90935 \\
6955.51336 \\
6509.29316 \\
5469.03702 \\
\end{tabular} \end{minipage}\\
\hline
%%%%%%%%%%%%%%%%%%%%%%%%%%%%%%%%%%%%%%%%% aggl connected cl euclidean dist
\makecell{Conn-Agglo-Euc}  &
\begin{minipage}{0.05\textwidth}\centering\begin{tabular}{c}
5 \\
10 \\
15 \\
30
\end{tabular} \end{minipage} & \begin{minipage}{0.16\textwidth}\centering\begin{tabular}{c}
7300.91007 \\
6357.32784 \\
5737.88249 \\
4460.20328 \\
\end{tabular} \end{minipage} & \begin{minipage}{0.16\textwidth}\centering\begin{tabular}{c}
7116.94334 \\
\textcolor{blue}{5987.91657} \\
\textcolor{blue}{5274.70075} \\
\textcolor{blue}{3862.28418} \\
\end{tabular} \end{minipage} & \begin{minipage}{0.16\textwidth}\centering\begin{tabular}{c}
7180.58798 \\
6187.01996 \\
5301.06031 \\
4005.64651 \\
\end{tabular} \end{minipage} & \begin{minipage}{0.16\textwidth}\centering\begin{tabular}{c}
\textcolor{blue}{7094.45986 } \\
6074.20510 \\
5448.38671 \\
4225.21069 \\
\end{tabular} \end{minipage} & \begin{minipage}{0.16\textwidth}\centering\begin{tabular}{c}
7152.81688 \\
6191.78486 \\
5560.49203 \\
4274.54517 \\
\end{tabular} \end{minipage}\\
\hline
%%%%%%%%%%%%%%%%%%%%%%%%%%%%%%%%%%% aggl connected cl with thompson distance
\makecell{Conn-Agglo-ST} &
\begin{minipage}{0.05\textwidth}\centering\begin{tabular}{c}
5 \\
10 \\
15 \\
30
\end{tabular} \end{minipage} & \begin{minipage}{0.16\textwidth}\centering\begin{tabular}{c}
7119.17181 \\
6005.0503 \\
5268.05087 \\
3823.02996 \\
\end{tabular} \end{minipage} & \begin{minipage}{0.16\textwidth}\centering\begin{tabular}{c}
\textcolor{blue}{6918.34345} \\
\textcolor{blue}{5778.90979 } \\
5104.49715 \\
3643.09308 \\
\end{tabular} \end{minipage} & \begin{minipage}{0.16\textwidth}\centering\begin{tabular}{c}
7029.95831 \\
5883.24081 \\
\textcolor{blue}{5099.97031} \\
3792.69081 \\
\end{tabular} \end{minipage} & \begin{minipage}{0.16\textwidth}\centering\begin{tabular}{c}
7124.18824 \\
5977.09426 \\
5085.76466 \\
\textcolor{blue}{3616.81066} \\
\end{tabular} \end{minipage} & \begin{minipage}{0.16\textwidth}\centering\begin{tabular}{c}
6939.76197 \\
5838.78528 \\
5116.52196 \\
3736.21723 \\
\end{tabular} \end{minipage}\\
\hline
%%%%%%%%%%%%%%%%%%%%%%%%%%%%%%%%%%%% k-means
\makecell{Conn-KMeans++} &
\begin{minipage}{0.05\textwidth}\centering\begin{tabular}{c}
5 \\
10 \\
15 \\
30
\end{tabular} \end{minipage} & \begin{minipage}{0.16\textwidth}\centering\begin{tabular}{c}
7237.21547 \\
6140.85496 \\
5410.36757 \\
3960.26859 \\
\end{tabular} \end{minipage} & \begin{minipage}{0.16\textwidth}\centering\begin{tabular}{c}
\textcolor{blue}{6961.75097} \\
\textcolor{blue}{5821.98319} \\
\textcolor{blue}{5061.52510} \\
\textcolor{blue}{3739.47498} \\
\end{tabular} \end{minipage} & \begin{minipage}{0.16\textwidth}\centering\begin{tabular}{c}
7091.07433 \\
5993.05102 \\
5298.03961 \\
3944.39825 \\
\end{tabular} \end{minipage} & \begin{minipage}{0.16\textwidth}\centering\begin{tabular}{c}
6974.68506 \\
6030.58822 \\
5288.48289 \\
3956.10770 \\
\end{tabular} \end{minipage} & \begin{minipage}{0.16\textwidth}\centering\begin{tabular}{c}
7037.43763 \\
6007.85102 \\
5291.30412 \\
3817.96416 \\
\end{tabular} \end{minipage}\\
\hline
%%%%%%%%%%%%%%%%%%%%%%%%%%%%%%%%%%%%%%%%% wards method
\makecell{Conn-Ward} &
\begin{minipage}{0.05\textwidth}\centering\begin{tabular}{c}
5 \\
10 \\
15 \\
30
\end{tabular} \end{minipage} & \begin{minipage}{0.16\textwidth}\centering\begin{tabular}{c}
7300.91029 \\
6357.33007 \\
5737.91865 \\
4460.12404 \\
\end{tabular} \end{minipage} & \begin{minipage}{0.16\textwidth}\centering\begin{tabular}{c}
7113.16163 \\
6045.03966 \\
\textcolor{blue}{5321.37167} \\
\textcolor{blue}{3793.78203} \\
\end{tabular} \end{minipage} & \begin{minipage}{0.16\textwidth}\centering\begin{tabular}{c}
7194.19093 \\
6177.00189 \\
5353.26835 \\
4001.61043 \\
\end{tabular} \end{minipage} & \begin{minipage}{0.16\textwidth}\centering\begin{tabular}{c}
\textcolor{blue}{7074.05179} \\
\textcolor{blue}{5944.17437} \\
5542.68503 \\
4223.34628 \\
\end{tabular} \end{minipage} & \begin{minipage}{0.16\textwidth}\centering\begin{tabular}{c}
7152.80277 \\
6188.96634 \\
5567.75397 \\
4059.74273 \\
\end{tabular} \end{minipage}\\
\hline
%%%%%%%%%%%%%%%%%%%%%%%%%%%%%%%% graphconvsc ekcsc %%%%%%%%%
\makecell{GraphConvSC \\ EKGCSC} &
\begin{minipage}{0.05\textwidth}\centering\begin{tabular}{c}
5 \\
10 \\
15 \\
30
\end{tabular} \end{minipage}  
& \begin{minipage}{0.16\textwidth}\centering\begin{tabular}{c}
13357.03370 \\
13437.32476 \\
13679.37267 \\
13684.34093 \\
\end{tabular} \end{minipage} & - & - & - & - \\
\hline
%%%%%%%%%%%%%%%%%%%%%%%% graphconvsc egcsc %%%%%%%%%%%%%%
\makecell{GraphConvSC \\ EGCSC} &
\begin{minipage}{0.05\textwidth}\centering\begin{tabular}{c}
5 \\
10 \\
15 \\
30
\end{tabular} \end{minipage}  
& \begin{minipage}{0.16\textwidth}\centering\begin{tabular}{c}
13480.88323 \\
13266.20388 \\
13455.86709 \\
13666.36885 \\
\end{tabular} \end{minipage} & - & - & - & - \\
\hline
%%%%%%%%%%%%%%%%%%%%%%%% SSC OMP %%%%%%%%%%%%%%
\makecell{SSC-OMP} &
& 14875.43995 & - & - & - & - \\
\hline
%%%%%%%%%%%%%%%%%%%%%%%% EnSC %%%%%%%%%%%%%%
\makecell{EnSC} &
& 14671.32921 & - & - & - & - \\
\hline
\end{tabular}%
}
\end{table}

%%%%%%%%%%%%%%%%%%%%%%%%%%%%%%%%%%%%%% filtered 20 cluster 

\begin{table}[H]
\centering
\caption{Distances from points to subspaces using filtered input data, with 20 clusters. Blue indicates the best value in each row.}\label{tab:20_filtered}
\setlength{\tabcolsep}{2pt}  % default is 6pt
\resizebox{\textwidth}{!}{%
\begin{tabular}{|c|c|c|c|c|c|c|}
\hline
%%%%%%%%%%%%%%%%%%%%%%%%%%%%%%%%%%%% Thompson clustering
& $m'$ & \makecell{initial \\ clustering} & IterMerge & PostMerge & \makecell{Smooth\\Merge} & \makecell{Integrated\\Conn}\\ \hline
\makecell{Agglo-ST} &
\begin{minipage}{0.05\textwidth}\centering\begin{tabular}{c}
5 \\
10 \\
15 \\
30
\end{tabular} \end{minipage} & \begin{minipage}{0.16\textwidth}\centering\begin{tabular}{c}
7127.56897 \\
6101.62757 \\
5406.82642 \\
4012.58077 \\
\end{tabular} \end{minipage} & \begin{minipage}{0.16\textwidth}\centering\begin{tabular}{c}
\textcolor{blue}{6718.72885} \\
5713.96147 \\
4911.83706 \\
3565.78342 \\
\end{tabular} \end{minipage} & \begin{minipage}{0.16\textwidth}\centering\begin{tabular}{c}
6810.31878 \\
5769.48728 \\
4912.01064 \\
\textcolor{blue}{3370.35309} \\
\end{tabular} \end{minipage} & \begin{minipage}{0.16\textwidth}\centering\begin{tabular}{c}
6819.93585 \\
\textcolor{blue}{5670.97155} \\
\textcolor{blue}{4889.30909} \\
3591.39863 \\
\end{tabular} \end{minipage} & \begin{minipage}{0.16\textwidth}\centering\begin{tabular}{c}
6971.35316 \\
6001.03185 \\
5076.7984 \\
3693.48877 \\
\end{tabular} \end{minipage}\\
\hline
%%%%%%%%%%%%%%%%%%%%%%%%%%%%%%%%%%%%%%%%% aggl connected cl euclidean dist
\makecell{Conn-Agglo-Euc} &
\begin{minipage}{0.05\textwidth}\centering\begin{tabular}{c}
5 \\
10 \\
15 \\
30
\end{tabular} \end{minipage} & \begin{minipage}{0.16\textwidth}\centering\begin{tabular}{c}
6865.53475 \\
5832.31292 \\
5126.73700 \\
3713.09606 \\
\end{tabular} \end{minipage} & \begin{minipage}{0.16\textwidth}\centering\begin{tabular}{c}
\textcolor{blue}{6584.89575} \\
\textcolor{blue}{5551.75766} \\
\textcolor{blue}{4801.07043} \\
\textcolor{blue}{3446.30952} \\
\end{tabular} \end{minipage} & \begin{minipage}{0.16\textwidth}\centering\begin{tabular}{c}
6750.88909 \\
5865.71361 \\
4919.41237 \\
3579.53496 \\
\end{tabular} \end{minipage} & \begin{minipage}{0.16\textwidth}\centering\begin{tabular}{c}
6831.68193 \\
5729.15024 \\
4946.12142 \\
3490.69984 \\
\end{tabular} \end{minipage} & \begin{minipage}{0.16\textwidth}\centering\begin{tabular}{c}
6651.79576 \\
5552.82496 \\
4862.35879 \\
3556.92483 \\
\end{tabular} \end{minipage}\\
\hline
%%%%%%%%%%%%%%%%%%%%%%%%%%%%%%%%%%% aggl connected cl with thompson distance
\makecell{Conn-Agglo-ST} &
\begin{minipage}{0.05\textwidth}\centering\begin{tabular}{c}
5 \\
10 \\
15 \\
30
\end{tabular} \end{minipage} & 
\begin{minipage}{0.16\textwidth}\centering\begin{tabular}{c}
6900.30819 \\
5794.35416 \\
5049.92481 \\
3595.09951 \\
\end{tabular} \end{minipage} & \begin{minipage}{0.16\textwidth}\centering\begin{tabular}{c}
\textcolor{blue}{6654.56156} \\
\textcolor{blue}{5520.3482} \\
4802.62151 \\
\textcolor{blue}{3373.72568} \\
\end{tabular} \end{minipage} & \begin{minipage}{0.16\textwidth}\centering\begin{tabular}{c}
6927.05043 \\
5687.2591 \\
4768.72953 \\
3613.7709 \\
\end{tabular} \end{minipage} & \begin{minipage}{0.16\textwidth}\centering\begin{tabular}{c}
6786.10306 \\
5691.85431 \\
\textcolor{blue}{4764.92476} \\
3537.25328 \\
\end{tabular} \end{minipage} & \begin{minipage}{0.16\textwidth}\centering\begin{tabular}{c}
6714.21266 \\
5600.75871 \\
4840.57379 \\
3490.14 \\
\end{tabular} \end{minipage}\\
\hline
%%%%%%%%%%%%%%%%%%%%%%%%%%%%%%%%%%%% k-means
\makecell{Conn-KMeans++} &
\begin{minipage}{0.05\textwidth}\centering\begin{tabular}{c}
5 \\
10 \\
15 \\
30
\end{tabular} \end{minipage} & 
\begin{minipage}{0.16\textwidth}\centering\begin{tabular}{c}
7064.48992 \\
6027.69854 \\
5329.54776 \\
3932.44561 \\
\end{tabular} \end{minipage} & \begin{minipage}{0.16\textwidth}\centering\begin{tabular}{c}
\textcolor{blue}{6633.74908 } \\
\textcolor{blue}{5529.78331} \\
\textcolor{blue}{4789.39302} \\
\textcolor{blue}{3542.17183} \\
\end{tabular} \end{minipage} & \begin{minipage}{0.16\textwidth}\centering\begin{tabular}{c}
6935.84391 \\
5908.07834 \\
5119.8224 \\
3857.21766 \\
\end{tabular} \end{minipage} & \begin{minipage}{0.16\textwidth}\centering\begin{tabular}{c}
6826.91025 \\
5676.37115 \\
5046.53568 \\
3579.53919 \\
\end{tabular} \end{minipage} & \begin{minipage}{0.16\textwidth}\centering\begin{tabular}{c}
6816.79815 \\
5618.88795 \\
4945.9793 \\
3595.48081 \\
\end{tabular} \end{minipage}\\
\hline
%%%%%%%%%%%%%%%%%%%%%%%%%%%%%%%%%%%%%%%%% wards method
\makecell{Conn-Ward} &
\begin{minipage}{0.05\textwidth}\centering\begin{tabular}{c}
5 \\
10 \\
15 \\
30
\end{tabular} \end{minipage} & \begin{minipage}{0.16\textwidth}\centering\begin{tabular}{c}
6911.97254 \\
5870.80942 \\
5154.80347 \\
3753.07503 \\
\end{tabular} \end{minipage} & \begin{minipage}{0.16\textwidth}\centering\begin{tabular}{c}
\textcolor{blue}{6685.90989} \\
\textcolor{blue}{5607.57718} \\
\textcolor{blue}{4880.22641} \\
\textcolor{blue}{3507.30482} \\
\end{tabular} \end{minipage} & \begin{minipage}{0.16\textwidth}\centering\begin{tabular}{c}
6851.87697 \\
5705.9256 \\
4893.75679 \\
3624.77683 \\
\end{tabular} \end{minipage} & \begin{minipage}{0.16\textwidth}\centering\begin{tabular}{c}
6816.23469 \\
5634.00779 \\
5002.42116 \\
3590.02407 \\
\end{tabular} \end{minipage} & \begin{minipage}{0.16\textwidth}\centering\begin{tabular}{c}
6705.06569 \\
5685.12404 \\
4958.20896 \\
3633.92843 \\
\end{tabular} \end{minipage}\\
\hline
%%%%%%%%%%%%%%%%%%%%%%%%%%%%%%%% graphconvsc ekcsc %%%%%%%%%
\makecell{GraphConvSC \\ EKGCSC} &
\begin{minipage}{0.05\textwidth}\centering\begin{tabular}{c}
5 \\
10 \\
15 \\
30
\end{tabular} \end{minipage}  
& \begin{minipage}{0.16\textwidth}\centering\begin{tabular}{c}
12585.39110 \\
12506.02452 \\
12733.70084 \\
12267.75023 \\
\end{tabular} \end{minipage} & - & - & - & - \\
\hline
%%%%%%%%%%%%%%%%%%%%%%%% graphconvsc egcsc %%%%%%%%%%%%%%
\makecell{GraphConvSC \\ EGCSC} &
\begin{minipage}{0.05\textwidth}\centering\begin{tabular}{c}
5 \\
10 \\
15 \\
30
\end{tabular} \end{minipage}  
& \begin{minipage}{0.16\textwidth}\centering\begin{tabular}{c}
12215.19311 \\
12168.04789 \\
12060.37141 \\
12216.14648 \\
\end{tabular} \end{minipage} & - & - & - & - \\
\hline
%%%%%%%%%%%%%%%%%%%%%%%% SSC OMP %%%%%%%%%%%%%%
\makecell{SSC-OMP} &
& 14910.37771 & - & - & - & - \\
\hline
%%%%%%%%%%%%%%%%%%%%%%%% EnSC %%%%%%%%%%%%%%
\makecell{EnSC} &
& 13226.72225 & - & - & - & - \\
\hline
\end{tabular}%
}
\end{table}

%%%%%%%%%%%%%%%%%%%%%%%%%%%%%%%%%%%%%% unfiltered 20 cluster 

\begin{table}[H]
\centering
\caption{Distances from points to subspaces using unfiltered input data, with 20 clusters. Blue indicates the best value in each row. Note that the algorithm Agglo-ST performs poorly on unfiltered data (see \Cref{fig:thompson_filtering}), resulting in clusters unsuitable for Conn-Subspace. This is because one large cluster dominates while the others are too small to compute subspaces for $m'$ dimensions.}\label{tab:20_unfiltered}
\setlength{\tabcolsep}{2pt}  % default is 6pt

\resizebox{\textwidth}{!}{%
\begin{tabular}{|c|c|c|c|c|c|c|}
\hline
%%%%%%%%%%%%%%%%%%%%%%%%%%%%%%%%%%%% Thompson clustering
& $m'$ & \makecell{initial \\ clustering} & IterMerge & PostMerge & \makecell{Smooth\\Merge} & \makecell{Integrated\\Conn}\\ \hline
\makecell{Agglo-ST} &
\begin{minipage}{0.05\textwidth}\centering\begin{tabular}{c}
5 \\
10 \\
15 \\
30
\end{tabular} \end{minipage} & \begin{minipage}{0.16\textwidth}\centering\begin{tabular}{c}
7783.69078 \\
6825.71836 \\
6245.5336 \\
5098.51104 \\
\end{tabular} \end{minipage} & \begin{minipage}{0.16\textwidth}\centering\begin{tabular}{c}
\textcolor{blue}{6650.41124} \\
\textcolor{blue}{5924.41911} \\
\textcolor{blue}{4936.45871} \\
4195.20874 \\
\end{tabular} \end{minipage} & \begin{minipage}{0.16\textwidth}\centering\begin{tabular}{c}
7242.28414 \\
6224.32097 \\
5301.69381 \\
4093.73849 \\
\end{tabular} \end{minipage} & \begin{minipage}{0.16\textwidth}\centering\begin{tabular}{c}
7179.5592 \\
6005.97634 \\
- \\
\textcolor{blue}{3934.63734} \\
\end{tabular} \end{minipage} & \begin{minipage}{0.16\textwidth}\centering\begin{tabular}{c}
7528.35562 \\
6318.75352 \\
5771.76803 \\
4865.15531 \\
\end{tabular} \end{minipage}\\
\hline
%%%%%%%%%%%%%%%%%%%%%%%%%%%%%%%%%%%%%%%%% aggl connected cl euclidean dist
\makecell{Conn-Agglo-Euc} &
\begin{minipage}{0.05\textwidth}\centering\begin{tabular}{c}
5 \\
10 \\
15 \\
30
\end{tabular} \end{minipage} & \begin{minipage}{0.16\textwidth}\centering\begin{tabular}{c}
6802.40018 \\
5661.22009 \\
4892.76619 \\
3413.25815 \\
\end{tabular} \end{minipage} & \begin{minipage}{0.16\textwidth}\centering\begin{tabular}{c}
\textcolor{blue}{6560.96462} \\
\textcolor{blue}{5430.55569} \\
\textcolor{blue}{4647.6343} \\
\textcolor{blue}{3237.95838} \\
\end{tabular} \end{minipage} & \begin{minipage}{0.16\textwidth}\centering\begin{tabular}{c}
6711.08028 \\
5681.3181 \\
4901.48325 \\
3405.64826 \\
\end{tabular} \end{minipage} & \begin{minipage}{0.16\textwidth}\centering\begin{tabular}{c}
6917.901 \\
5495.87935 \\
4963.01987 \\
3589.80958 \\
\end{tabular} \end{minipage} & \begin{minipage}{0.16\textwidth}\centering\begin{tabular}{c}
6703.29191 \\
5482.47043 \\
4821.95063 \\
3386.6587 \\
\end{tabular} \end{minipage}\\
\hline
%%%%%%%%%%%%%%%%%%%%%%%%%%%%%%%%%%% aggl connected cl with thompson distance
\makecell{Conn-Agglo-ST} &
\begin{minipage}{0.05\textwidth}\centering\begin{tabular}{c}
5 \\
10 \\
15 \\
30
\end{tabular} \end{minipage} & 
\begin{minipage}{0.16\textwidth}\centering\begin{tabular}{c}
6798.59783 \\
5620.91742 \\
4848.81668 \\
3369.86502 \\
\end{tabular} \end{minipage} & \begin{minipage}{0.16\textwidth}\centering\begin{tabular}{c}
\textcolor{blue}{6605.36159} \\
\textcolor{blue}{5448.75877} \\
\textcolor{blue}{4648.76742} \\
\textcolor{blue}{3255.90307} \\
\end{tabular} \end{minipage} & \begin{minipage}{0.16\textwidth}\centering\begin{tabular}{c}
6784.49403 \\
5920.02513 \\
4691.61664 \\
3458.22741 \\
\end{tabular} \end{minipage} & \begin{minipage}{0.16\textwidth}\centering\begin{tabular}{c}
6790.3771 \\
5797.54924 \\
4916.4418 \\
3332.83429 \\
\end{tabular} \end{minipage} & \begin{minipage}{0.16\textwidth}\centering\begin{tabular}{c}
6689.41092 \\
5544.80481 \\
4768.92015 \\
3285.58812 \\
\end{tabular} \end{minipage}\\
\hline
%%%%%%%%%%%%%%%%%%%%%%%%%%%%%%%%%%%% k-means
\makecell{Conn-KMeans++} &
\begin{minipage}{0.05\textwidth}\centering\begin{tabular}{c}
5 \\
10 \\
15 \\
30
\end{tabular} \end{minipage} & 
\begin{minipage}{0.16\textwidth}\centering\begin{tabular}{c}
6891.16075 \\
5762.35931 \\
5007.12359 \\
3553.72154 \\
\end{tabular} \end{minipage} & \begin{minipage}{0.16\textwidth}\centering\begin{tabular}{c}
\textcolor{blue}{6596.3755} \\
5508.66299 \\
\textcolor{blue}{4702.8914} \\
\textcolor{blue}{3346.93706} \\
\end{tabular} \end{minipage} & \begin{minipage}{0.16\textwidth}\centering\begin{tabular}{c}
6977.26662 \\
5638.61196 \\
4901.24059 \\
3460.08787 \\
\end{tabular} \end{minipage} & \begin{minipage}{0.16\textwidth}\centering\begin{tabular}{c}
6800.17259 \\
5569.52283 \\
4874.53135 \\
3564.00204 \\
\end{tabular} \end{minipage} & \begin{minipage}{0.16\textwidth}\centering\begin{tabular}{c}
6713.21372 \\
\textcolor{blue}{5503.7572} \\
4761.15466 \\
3452.07467 \\
\end{tabular} \end{minipage}\\
\hline
%%%%%%%%%%%%%%%%%%%%%%%%%%%%%%%%%%%%%%%%% wards method
\makecell{Conn-Ward} &
\begin{minipage}{0.05\textwidth}\centering\begin{tabular}{c}
5 \\
10 \\
15 \\
30
\end{tabular} \end{minipage} & \begin{minipage}{0.16\textwidth}\centering\begin{tabular}{c}
7142.41807 \\
6185.16338 \\
5540.21058 \\
4224.92307 \\
\end{tabular} \end{minipage} & \begin{minipage}{0.16\textwidth}\centering\begin{tabular}{c}
\textcolor{blue}{6702.55058} \\
\textcolor{blue}{5776.47192} \\
\textcolor{blue}{4936.7126} \\
\textcolor{blue}{3621.75654} \\
\end{tabular} \end{minipage} & \begin{minipage}{0.16\textwidth}\centering\begin{tabular}{c}
6920.79741 \\
5931.42334 \\
5293.17786 \\
3988.19091 \\
\end{tabular} \end{minipage} & \begin{minipage}{0.16\textwidth}\centering\begin{tabular}{c}
6852.91056 \\
5984.3858 \\
5328.62205 \\
3755.66247 \\
\end{tabular} \end{minipage} & \begin{minipage}{0.16\textwidth}\centering\begin{tabular}{c}
6997.92878 \\
5944.67975 \\
5142.70403 \\
3655.8976 \\
\end{tabular} \end{minipage}\\
\hline
%%%%%%%%%%%%%%%%%%%%%%%%%%%%%%%% graphconvsc ekcsc %%%%%%%%%
\makecell{GraphConvSC \\ EKGCSC} &
\begin{minipage}{0.05\textwidth}\centering\begin{tabular}{c}
5 \\
10 \\
15 \\
30
\end{tabular} \end{minipage}  
& \begin{minipage}{0.16\textwidth}\centering\begin{tabular}{c}
13597.25370 \\
13509.21213 \\
13604.91292 \\
13604.26299 \\
\end{tabular} \end{minipage} & - & - & - & - \\
\hline
%%%%%%%%%%%%%%%%%%%%%%%% graphconvsc egcsc %%%%%%%%%%%%%%
\makecell{GraphConvSC \\ EGCSC} &
\begin{minipage}{0.05\textwidth}\centering\begin{tabular}{c}
5 \\
10 \\
15 \\
30
\end{tabular} \end{minipage}  
& \begin{minipage}{0.16\textwidth}\centering\begin{tabular}{c}
13165.23867 \\
12989.97725 \\
13504.44223 \\
13620.71771 \\
\end{tabular} \end{minipage} & - & - & - & - \\
\hline
%%%%%%%%%%%%%%%%%%%%%%%% SSC OMP %%%%%%%%%%%%%%
\makecell{SSC-OMP} &
& 14689.48839 & - & - & - & - \\
\hline
%%%%%%%%%%%%%%%%%%%%%%%% EnSC %%%%%%%%%%%%%%
\makecell{EnSC} &
& 14458.67851 & - & - & - & - \\
\hline
\end{tabular}%
}
\end{table}

%%%%%%%%%%%%%%%%%%%%%%%%%%%%%%%%%%%%%% filtered 25 cluster 

\begin{table}[H]
\centering
\caption{Distances from points to subspaces using filtered input data, with 25 clusters. Blue indicates the best value in each row.} \label{tab:25_filtered}
\setlength{\tabcolsep}{2pt}  % default is 6pt
\resizebox{\textwidth}{!}{%
\begin{tabular}{|c|c|c|c|c|c|c|}
\hline
%%%%%%%%%%%%%%%%%%%%%%%%%%%%%%%%%%%% Thompson clustering
& $m'$ & \makecell{initial \\ clustering} & IterMerge & PostMerge & \makecell{Smooth\\Merge} & \makecell{Integrated\\Conn}\\ \hline
\makecell{Agglo-ST} &
\begin{minipage}{0.05\textwidth}\centering\begin{tabular}{c}
5 \\
10 \\
15 \\
30
\end{tabular} \end{minipage} & \begin{minipage}{0.16\textwidth}\centering\begin{tabular}{c}
6935.05471 \\
5912.08858 \\
5211.14154 \\
3802.41450 \\
\end{tabular} \end{minipage} & \begin{minipage}{0.16\textwidth}\centering\begin{tabular}{c}
\textcolor{blue}{6477.34752} \\
5433.27462 \\
4679.71563 \\
\textcolor{blue}{3325.62566 } \\
\end{tabular} \end{minipage} & \begin{minipage}{0.16\textwidth}\centering\begin{tabular}{c}
6604.73022 \\
5439.0021 \\
4618.39123 \\
3352.27087 \\
\end{tabular} \end{minipage} & \begin{minipage}{0.16\textwidth}\centering\begin{tabular}{c}
6664.46905 \\
\textcolor{blue}{5413.75331} \\
\textcolor{blue}{4528.27139} \\
3522.85521 \\
\end{tabular} \end{minipage} & \begin{minipage}{0.16\textwidth}\centering\begin{tabular}{c}
6684.20784 \\
5599.36789 \\
4714.31267 \\
3447.12802 \\
\end{tabular} \end{minipage}\\
\hline
%%%%%%%%%%%%%%%%%%%%%%%%%%%%%%%%%%%%%%%%% aggl connected cl euclidean dist
\makecell{Conn-Agglo-Euc} &
\begin{minipage}{0.05\textwidth}\centering\begin{tabular}{c}
5 \\
10 \\
15 \\
30
\end{tabular} \end{minipage} & \begin{minipage}{0.16\textwidth}\centering\begin{tabular}{c}
6622.15663 \\
5538.29201 \\
4799.06170 \\
3340.89806 \\
\end{tabular} \end{minipage} & \begin{minipage}{0.16\textwidth}\centering\begin{tabular}{c}
\textcolor{blue}{6350.81598} \\
\textcolor{blue}{5229.74159} \\
4521.96939 \\
3126.14016 \\
\end{tabular} \end{minipage} & \begin{minipage}{0.16\textwidth}\centering\begin{tabular}{c}
6667.53307 \\
5462.91872 \\
4590.01205 \\
\textcolor{blue}{3107.19511} \\
\end{tabular} \end{minipage} & \begin{minipage}{0.16\textwidth}\centering\begin{tabular}{c}
6530.66165 \\
5352.08113 \\
\textcolor{blue}{4454.23534} \\
3230.85019 \\
\end{tabular} \end{minipage} & \begin{minipage}{0.16\textwidth}\centering\begin{tabular}{c}
6385.11851 \\
5276.7694 \\
4549.961 \\
3195.57073 \\
\end{tabular} \end{minipage}\\
\hline
%%%%%%%%%%%%%%%%%%%%%%%%%%%%%%%%%%% aggl connected cl with thompson distance
\makecell{Conn-Agglo-ST} &
\begin{minipage}{0.05\textwidth}\centering\begin{tabular}{c}
5 \\
10 \\
15 \\
30
\end{tabular} \end{minipage} & 
\begin{minipage}{0.16\textwidth}\centering\begin{tabular}{c}
6765.17821 \\
5649.72727 \\
4884.40787 \\
3395.02750 \\
\end{tabular} \end{minipage} & \begin{minipage}{0.16\textwidth}\centering\begin{tabular}{c}
\textcolor{blue}{6399.39178} \\
\textcolor{blue}{5303.26748} \\
\textcolor{blue}{4527.33796} \\
\textcolor{blue}{3200.7043} \\
\end{tabular} \end{minipage} & \begin{minipage}{0.16\textwidth}\centering\begin{tabular}{c}
6548.14257 \\
5325.49801 \\
4608.88795 \\
3285.17893 \\
\end{tabular} \end{minipage} & \begin{minipage}{0.16\textwidth}\centering\begin{tabular}{c}
6532.95274 \\
5358.70727 \\
4606.41312 \\
3224.51674 \\
\end{tabular} \end{minipage} & \begin{minipage}{0.16\textwidth}\centering\begin{tabular}{c}
6498.09127 \\
5354.7229 \\
4574.74659 \\
3256.04637 \\
\end{tabular} \end{minipage}\\
\hline
%%%%%%%%%%%%%%%%%%%%%%%%%%%%%%%%%%%% k-means
\makecell{Conn-KMeans++} &
\begin{minipage}{0.05\textwidth}\centering\begin{tabular}{c}
5 \\
10 \\
15 \\
30
\end{tabular} \end{minipage} & 
\begin{minipage}{0.16\textwidth}\centering\begin{tabular}{c}
6846.60590 \\
5777.17176 \\
5045.16003 \\
3609.66688 \\
\end{tabular} \end{minipage} & \begin{minipage}{0.16\textwidth}\centering\begin{tabular}{c}
6412.88878 \\
\textcolor{blue}{5327.48569} \\
\textcolor{blue}{4581.94485} \\
3239.08517 \\
\end{tabular} \end{minipage} & \begin{minipage}{0.16\textwidth}\centering\begin{tabular}{c}
6546.40152 \\
5859.8821 \\
4944.36625 \\
3385.55874 \\
\end{tabular} \end{minipage} & \begin{minipage}{0.16\textwidth}\centering\begin{tabular}{c}
\textcolor{blue}{6404.29306} \\
5395.78375 \\
4616.90696 \\
3247.91839 \\
\end{tabular} \end{minipage} & \begin{minipage}{0.16\textwidth}\centering\begin{tabular}{c}
6645.10238 \\
5476.8037 \\
4645.33832 \\
\textcolor{blue}{3230.89165} \\
\end{tabular} \end{minipage}\\
\hline
%%%%%%%%%%%%%%%%%%%%%%%%%%%%%%%%%%%%%%%%% wards method
\makecell{Conn-Ward} &
\begin{minipage}{0.05\textwidth}\centering\begin{tabular}{c}
5 \\
10 \\
15 \\
30
\end{tabular} \end{minipage} & \begin{minipage}{0.16\textwidth}\centering\begin{tabular}{c}
6804.97968 \\
5748.56173 \\
5018.93476 \\
3602.40923 \\
\end{tabular} \end{minipage} & \begin{minipage}{0.16\textwidth}\centering\begin{tabular}{c}
\textcolor{blue}{6544.75466} \\
\textcolor{blue}{5387.06249} \\
\textcolor{blue}{4653.33822} \\
3373.04519 \\
\end{tabular} \end{minipage} & \begin{minipage}{0.16\textwidth}\centering\begin{tabular}{c}
6714.11371 \\
5517.56823 \\
4739.56481 \\
\textcolor{blue}{3312.15312} \\
\end{tabular} \end{minipage} & \begin{minipage}{0.16\textwidth}\centering\begin{tabular}{c}
6704.62957 \\
5495.12792 \\
4805.99741 \\
3343.36821 \\
\end{tabular} \end{minipage} & \begin{minipage}{0.16\textwidth}\centering\begin{tabular}{c}
6562.28748 \\
5499.27583 \\
4809.59483 \\
3397.42704 \\
\end{tabular} \end{minipage}\\
\hline
%%%%%%%%%%%%%%%%%%%%%%%%%%%%%%%% graphconvsc ekcsc %%%%%%%%%
\makecell{GraphConvSC \\ EKGCSC} &
\begin{minipage}{0.05\textwidth}\centering\begin{tabular}{c}
5 \\
10 \\
15 \\
30
\end{tabular} \end{minipage}  
& \begin{minipage}{0.16\textwidth}\centering\begin{tabular}{c}
12231.26304 \\
12240.14617 \\
11942.24078 \\
11934.06822 \\
\end{tabular} \end{minipage} & - & - & - & - \\
\hline
%%%%%%%%%%%%%%%%%%%%%%%% graphconvsc egcsc %%%%%%%%%%%%%%
\makecell{GraphConvSC \\ EGCSC} &
\begin{minipage}{0.05\textwidth}\centering\begin{tabular}{c}
5 \\
10 \\
15 \\
30
\end{tabular} \end{minipage}  
& \begin{minipage}{0.16\textwidth}\centering\begin{tabular}{c}
11972.89186 \\
11747.07094 \\
11735.43384 \\
11825.74653 \\
\end{tabular} \end{minipage} & - & - & - & - \\
\hline
%%%%%%%%%%%%%%%%%%%%%%%% SSC OMP %%%%%%%%%%%%%%
\makecell{SSC-OMP} &
& 14677.86061 & - & - & - & - \\
\hline
%%%%%%%%%%%%%%%%%%%%%%%% EnSC %%%%%%%%%%%%%%
\makecell{EnSC} &
& 12982.57873 & - & - & - & - \\
\hline
\end{tabular}%
}
\end{table}

%%%%%%%%%%%%%%%%%%%%%%%%%%%%%%%%%%%%%% unfiltered 25 cluster 

\begin{table}[H]
\centering
\caption{Distances from points to subspaces using unfiltered input data, with 25 clusters. Blue indicates the best value in each row. Note that the algorithm Agglo-ST performs poorly on unfiltered data (see \Cref{fig:thompson_filtering}), resulting in clusters unsuitable for Conn-Subspace. This is because one large cluster dominates while the others are too small to compute subspaces for $m'$ dimensions.}\label{tab:25_unfiltered}
\setlength{\tabcolsep}{2pt}  % default is 6pt

\resizebox{\textwidth}{!}{%
\begin{tabular}{|c|c|c|c|c|c|c|}
\hline
%%%%%%%%%%%%%%%%%%%%%%%%%%%%%%%%%%%% Thompson clustering
& $m'$ & \makecell{initial \\ clustering} & IterMerge & PostMerge & \makecell{Smooth\\Merge} & \makecell{Integrated\\Conn}\\ \hline
\makecell{Agglo-ST} &
\begin{minipage}{0.05\textwidth}\centering\begin{tabular}{c}
5 \\
10 \\
15 \\
30
\end{tabular} \end{minipage} & \begin{minipage}{0.16\textwidth}\centering\begin{tabular}{c}
7611.33551 \\
6658.96184 \\
6109.23328 \\
4992.56736 \\
\end{tabular} \end{minipage} & \begin{minipage}{0.16\textwidth}\centering\begin{tabular}{c}
\textcolor{blue}{6431.27588} \\
\textcolor{blue}{5484.5064} \\
\textcolor{blue}{4963.57033} \\
\textcolor{blue}{3122.85764} \\
\end{tabular} \end{minipage} & \begin{minipage}{0.16\textwidth}\centering\begin{tabular}{c}
6985.54103 \\
5882.39635 \\
- \\
3934.40832 \\
\end{tabular} \end{minipage} & \begin{minipage}{0.16\textwidth}\centering\begin{tabular}{c}
6780.14297 \\
- \\
- \\
- \\
\end{tabular} \end{minipage} & \begin{minipage}{0.16\textwidth}\centering\begin{tabular}{c}
7378.82363 \\
6242.66174 \\
5520.47426 \\
4761.62842 \\
\end{tabular} \end{minipage}\\
\hline
%%%%%%%%%%%%%%%%%%%%%%%%%%%%%%%%%%%%%%%%% aggl connected cl euclidean dist
\makecell{Conn-Agglo-Euc} &
\begin{minipage}{0.05\textwidth}\centering\begin{tabular}{c}
5 \\
10 \\
15 \\
30
\end{tabular} \end{minipage} & \begin{minipage}{0.16\textwidth}\centering\begin{tabular}{c}
6648.23889 \\
5488.59572 \\
4708.85936 \\
3218.62384 \\
\end{tabular} \end{minipage} & \begin{minipage}{0.16\textwidth}\centering\begin{tabular}{c}
\textcolor{blue}{6401.5226} \\
\textcolor{blue}{5239.18674} \\
\textcolor{blue}{4414.99156} \\
\textcolor{blue}{2947.91908} \\
\end{tabular} \end{minipage} & \begin{minipage}{0.16\textwidth}\centering\begin{tabular}{c}
6603.82734 \\
5564.22328 \\
4760.16073 \\
3301.88318 \\
\end{tabular} \end{minipage} & \begin{minipage}{0.16\textwidth}\centering\begin{tabular}{c}
6533.52799 \\
5428.61674 \\
4606.55632 \\
3157.41492 \\
\end{tabular} \end{minipage} & \begin{minipage}{0.16\textwidth}\centering\begin{tabular}{c}
6444.52568 \\
5306.91418 \\
4680.51634 \\
3200.42369 \\
\end{tabular} \end{minipage}\\
\hline
%%%%%%%%%%%%%%%%%%%%%%%%%%%%%%%%%%% aggl connected cl with thompson distance
\makecell{Conn-Agglo-ST} &
\begin{minipage}{0.05\textwidth}\centering\begin{tabular}{c}
5 \\
10 \\
15 \\
30
\end{tabular} \end{minipage} & 
\begin{minipage}{0.16\textwidth}\centering\begin{tabular}{c}
6585.85218 \\
5375.51503 \\
4585.05939 \\
3080.66145 \\
\end{tabular} \end{minipage} & \begin{minipage}{0.16\textwidth}\centering\begin{tabular}{c}
6375.37258 \\
\textcolor{blue}{5267.75215} \\
4467.76037 \\
3016.05834 \\
\end{tabular} \end{minipage} & \begin{minipage}{0.16\textwidth}\centering\begin{tabular}{c}
6500.81257 \\
5347.90321 \\
4523.11896 \\
\textcolor{blue}{2992.33895} \\
\end{tabular} \end{minipage} & \begin{minipage}{0.16\textwidth}\centering\begin{tabular}{c}
6552.52463 \\
5268.33666 \\
4528.61575 \\
2938.29087 \\
\end{tabular} \end{minipage} & \begin{minipage}{0.16\textwidth}\centering\begin{tabular}{c}
\textcolor{blue}{6328.12887} \\
5283.01416 \\
\textcolor{blue}{4452.95515} \\
3028.98765 \\
\end{tabular} \end{minipage}\\
\hline
%%%%%%%%%%%%%%%%%%%%%%%%%%%%%%%%%%%% k-means
\makecell{Conn-KMeans++} &
\begin{minipage}{0.05\textwidth}\centering\begin{tabular}{c}
5 \\
10 \\
15 \\
30
\end{tabular} \end{minipage} & 
\begin{minipage}{0.16\textwidth}\centering\begin{tabular}{c}
6791.59278 \\
5689.86451 \\
4942.47421 \\
3501.83054 \\
\end{tabular} \end{minipage} & \begin{minipage}{0.16\textwidth}\centering\begin{tabular}{c}
6459.77151 \\
5285.76723 \\
\textcolor{blue}{4501.4064} \\
\textcolor{blue}{3122.85764} \\
\end{tabular} \end{minipage} & \begin{minipage}{0.16\textwidth}\centering\begin{tabular}{c}
6651.56208 \\
5688.4807 \\
4604.54066 \\
3264.31383 \\
\end{tabular} \end{minipage} & \begin{minipage}{0.16\textwidth}\centering\begin{tabular}{c}
6653.9628 \\
5548.82075 \\
4541.47083 \\
3192.40442 \\
\end{tabular} \end{minipage} & \begin{minipage}{0.16\textwidth}\centering\begin{tabular}{c}
\textcolor{blue}{6434.88511} \\
\textcolor{blue}{5281.69382} \\
4715.21944 \\
3302.94252 \\
\end{tabular} \end{minipage}\\
\hline
%%%%%%%%%%%%%%%%%%%%%%%%%%%%%%%%%%%%%%%%% wards method
\makecell{Conn-Ward} &
\begin{minipage}{0.05\textwidth}\centering\begin{tabular}{c}
5 \\
10 \\
15 \\
30
\end{tabular} \end{minipage} & \begin{minipage}{0.16\textwidth}\centering\begin{tabular}{c}
6964.9354 \\
6010.5925 \\
5355.69749 \\
4044.59545 \\
\end{tabular} \end{minipage} & \begin{minipage}{0.16\textwidth}\centering\begin{tabular}{c}
\textcolor{blue}{6586.38741} \\
\textcolor{blue}{5504.71253} \\
4757.14492 \\
3361.74465 \\
\end{tabular} \end{minipage} & \begin{minipage}{0.16\textwidth}\centering\begin{tabular}{c}
6725.90101 \\
5508.17897 \\
4895.42358 \\
3385.23974 \\
\end{tabular} \end{minipage} & \begin{minipage}{0.16\textwidth}\centering\begin{tabular}{c}
6791.40374 \\
5585.96961 \\
4868.43329 \\
3423.68021 \\
\end{tabular} \end{minipage} & \begin{minipage}{0.16\textwidth}\centering\begin{tabular}{c}
6687.39939 \\
5530.5792 \\
\textcolor{blue}{4645.91934} \\
\textcolor{blue}{3335.34227} \\
\end{tabular} \end{minipage}\\
\hline
%%%%%%%%%%%%%%%%%%%%%%%%%%%%%%%% graphconvsc ekcsc %%%%%%%%%
\makecell{GraphConvSC \\ EKGCSC} &
\begin{minipage}{0.05\textwidth}\centering\begin{tabular}{c}
5 \\
10 \\
15 \\
30
\end{tabular} \end{minipage}  
& \begin{minipage}{0.16\textwidth}\centering\begin{tabular}{c}
13250.46585 \\
13528.28216 \\
13596.14924 \\
13576.80709 \\
\end{tabular} \end{minipage} & - & - & - & - \\
\hline
%%%%%%%%%%%%%%%%%%%%%%%% graphconvsc egcsc %%%%%%%%%%%%%%
\makecell{GraphConvSC \\ EGCSC} &
\begin{minipage}{0.05\textwidth}\centering\begin{tabular}{c}
5 \\
10 \\
15 \\
30
\end{tabular} \end{minipage}  
& \begin{minipage}{0.16\textwidth}\centering\begin{tabular}{c}
12476.49544 \\
12998.51352 \\
13064.42761 \\
13583.19723 \\
\end{tabular} \end{minipage} & - & - & - & - \\
\hline
%%%%%%%%%%%%%%%%%%%%%%%% SSC OMP %%%%%%%%%%%%%%
\makecell{SSC-OMP} &
& 14473.09811 & - & - & - & - \\
\hline
%%%%%%%%%%%%%%%%%%%%%%%% EnSC %%%%%%%%%%%%%%
\makecell{EnSC} &
& 14351.16981 & - & - & - & - \\
\hline
\end{tabular}%
}
\end{table}

\let\table\alenexOriginalTable
\let\endtable\endalenexOriginalTable

%% file: biblio.bib
@inbook{ipcc2021,
   author = {Fox-Kemper, B. and Hewitt, H.T. and Xiao, C. and Aðalgeirsdóttir, G. and Drijfhout, S.S. and Edwards, T.L. and Golledge, N.R. and Hemer, M. and Kopp, R.E. and Krinner, G. and Mix, A. and Notz, D. and Nowicki, S. and Nurhati, I.S. and Ruiz, L. and Sallée, J.-B. and Slangen, A.B.A. and Yu, Y.},
   title = {Ocean, Cryosphere and Sea Level Change},
   booktitle = {Climate Change 2021: The Physical Science Basis. Contribution of Working Group I to the Sixth Assessment Report of the Intergovernmental Panel on Climate Change},
   publisher = {Cambridge University Press},
   pages = {1211–1362},
   year = {2021},
   type = {Book Section}
}

@article{church2004,
    author = {Church, John A. and White, Neil J. and Coleman, Richard and Lambeck, Kurt and Mitrovica, Jerry X.} ,
    title = {Est. of the Reg. Dist. of Sea Level Rise over the 1950–2000 Period} ,
    journal = {J. Clim.},
    volume = {17},
    pages = {2609-2625},
    year = {2004} 
}

@article{ValidiBL22,
  author       = {Hamidreza Validi and
                  Austin Buchanan and
                  Eugene Lykhovyd},
  title        = {Imposing Contiguity Constraints in Political Districting Models},
  journal      = {OR},
  volume       = {70},
  number       = {2},
  pages        = {867--892},
  year         = {2022},
}

@article{scikitlearn,
  author       = {Fabian Pedregosa and
                  Ga{\"{e}}l Varoquaux and
                  Alexandre Gramfort and
                  Vincent Michel and
                  Bertrand Thirion and
                  Olivier Grisel and
                  Mathieu Blondel and
                  Peter Prettenhofer and
                  Ron Weiss and
                  Vincent Dubourg and
                  Jake VanderPlas and
                  Alexandre Passos and
                  David Cournapeau and
                  Matthieu Brucher and
                  Matthieu Perrot and
                  Edouard Duchesnay},
  title        = {Scikit-learn: Machine Learning in Python},
  journal      = {JMLR},
  volume       = {12},
  pages        = {2825--2830},
  year         = {2011},
}

@article{lynch1975equivalence,
  title={The equivalence of theorem proving and the interconnection problem},
  author={Lynch, James F},
  journal={ACM Sigda Newsl.},
  volume={5},
  number={3},
  pages={31--36},
  year={1975},
  publisher={ACM New York, NY, USA}
}

@article{kramer1984complexity,
  title={The complexity of wire routing and finding minimum area layouts for arbitrary {VLSI} circuits},
  author={Kramer, Mark R. and van Leeuwen, Jan},
  journal={Adv. Comput. Res},
  volume={2},
  pages={129--146},
  year={1984}
}

@inproceedings{formann1990vlsi,
  title={The {VLSI} layout problem in various embedding models},
  author={Formann, Michael and Wagner, Frank},
  booktitle={WG},
  pages={130--139},
  year={1990},
  organization={Springer}
}

@unpublished{Llo57,
  author       = {Stuart P. Lloyd},
  title        = {Least squares quantization in PCM},
  note         = {Bell Laboratories Technical Memorandum, later published in IEEE Trans. Inf. Theory, 1982},
  year         = {1957}
}

@article{Llo82,
  author       = {Stuart P. Lloyd},
  title        = {Least squares quantization in PCM},
  journal      = {IEEE Trans. Inf. Theory},
  volume       = {28},
  number       = {2},
  pages        = {129--137},
  year         = {1982}
}

@article{ADHP09,
  author       = {Daniel Aloise and
                  Amit Deshpande and
                  Pierre Hansen and
                  Preyas Popat},
  title        = {NP-hardness of Euclidean sum-of-squares clustering},
  journal      = {Machine Learning},
  volume       = {75},
  number       = {2},
  pages        = {245--248},
  year         = {2009},
}

@article{MahajanNV12,
  author       = {Meena Mahajan and
                  Prajakta Nimbhorkar and
                  Kasturi R. Varadarajan},
  title        = {The planar k-means problem is NP-hard},
  journal      = {TCS},
  volume       = {442},
  pages        = {13--21},
  year         = {2012},
}

@inproceedings{IKI94,
  author       = {Mary Inaba and
                  Naoki Katoh and
                  Hiroshi Imai},
  title        = {Applications of Weighted Voronoi Diagrams and Randomization to Variance-Based
                  \emph{k}-Clustering},
  booktitle    = {Proc. SoCG},
  pages        = {332--339},
  publisher    = {{ACM}},
  year         = {1994},
}

@inproceedings{CAEMN22,
  author       = {Vincent Cohen{-}Addad and
                  Hossein Esfandiari and
                  Vahab S. Mirrokni and
                  Shyam Narayanan},
  title        = {Improved approximations for Euclidean \emph{k}-means and \emph{k}-median,
                  via nested quasi-independent sets},
  booktitle    = {Proc. STOC},
  pages        = {1621--1628},
  year         = {2022},
}

@inproceedings{LS19,
  author       = {Silvio Lattanzi and
                  Christian Sohler},
  title        = {A Better k-means++ Algorithm via Local Search},
  booktitle    = {Proc. ICML},
  volume       = {97},
  pages        = {3662--3671},
  year         = {2019},
}

@article{GEGHBB08,
  author       = {Rong Ge and
                  Martin Ester and
                  Byron J. Gao and
                  Zengjian Hu and
                  Binay K. Bhattacharya and
                  Boaz Ben{-}Moshe},
  title        = {Joint cluster analysis of attribute data and relationship data: The
                  connected \emph{k}-center problem, algorithms and applications},
  journal      = {TKDD},
  volume       = {2},
  number       = {2},
  pages        = {7:1--7:35},
  year         = {2008},
}

@article{DELRRSW24,
  author       = {Lukas Drexler and
                  Jan Eube and
                  Kelin Luo and
                  Dorian Reineccius and
                  Heiko R{\"{o}}glin and
                  Melanie Schmidt and
                  Julian Wargalla},
  title        = {Connected k-Center and k-Diameter Clustering},
  journal      = {Alg.},
  volume       = {86},
  number       = {11},
  pages        = {3425--3464},
  year         = {2024},
}

@article{murtagh85,
  title     = {A Survey of Algorithms for Contiguity-Constrained Clustering and Related Problems},
  author    = {Murtagh, Fionn},
  journal   = {Comput. J.},
  volume    = {28},
  number    = {1},
  pages     = {82--88},
  year      = {1985},
  publisher = {Oxford University Press},
}

@inproceedings{AV07,
  author    = {Arthur, David and Vassilvitskii, Sergei},
  title     = {k-means++: The Advantages of Careful Seeding},
  booktitle = {Proc. SODA},
  year      = {2007},
  pages     = {1027--1035},
}

@inproceedings{ELRRS25,
  author    = {Jan Eube and Kelin Luo and Dorian Reineccius and Heiko R{\"o}glin and Melanie Schmidt},
  title     = {Connected \(k\)-Median with Disjoint and Non-disjoint Clusters},
  booktitle = {Proc. ESA},
  year      = {2025},
}

@article{TM14,
  author  = {Philip R. Thompson and Mark A. Merrifield},
  title   = {A unique asymmetry in the pattern of recent sea level change},
  journal = {Geophysical Research Letters},
  volume  = {41},
  number  = {21},
  pages   = {7675--7683},
  year    = {2014},
}

@misc{altimetry_data,
    author={{E.U. Copernicus Marine Service Information (CMEMS). Marine Data Store (MDS)}},
    title = {{Global Ocean Gridded L 4 Sea Surface Heights And Derived Variables Reprocessed 1993 Ongoing}},
    year = {2024},
    note={acc. on 2024-09-11}
}

@misc{copernicus_climate_change_data,
    author={{Copernicus Climate Change Service, Climate Data Store}},
    title={{Sea level gridded data from satellite observations for the global ocean from 1993 to present.}},
    year ={2018},
    note={acc. on 2025-07-01}
}

@article{le1998improved,
  title={An improved mapping method of multisatellite altimeter data},
  author={Le Traon, PY and Nadal, F and Ducet, N},
  journal={JTECH},
  volume={15},
  number={2},
  pages={522--534},
  year={1998}
}

@mastersthesis{EMastersthesis,
    author = {Ewerdwalbesloh, Yorck},
    title = {Finding coherent regions in the ocean in CMIP6
model simulations},
    school = {Rheinische Friedrich-Wilhelms-Universit{\"a}t Bonn},
    year = {2022}
}

@article{li11,
  author       = {Jia Li},
  title        = {Agglomerative connectivity constrained clustering for image segmentation},
  journal      = {Stat. Anal. Data Min.},
  volume       = {4},
  number       = {1},
  pages        = {84--99},
  year         = {2011},
}

@article{CKV13,
  author       = {M. Emre Celebi and
                  Hassan A. Kingravi and
                  Patricio A. Vela},
  title        = {A comparative study of efficient initialization methods for the k-means
                  clustering algorithm},
  journal      = {ESWA},
  volume       = {40},
  number       = {1},
  pages        = {200--210},
  year         = {2013},
}

@article{li2024analysis,
  title={Analysis of sea level variability and its contributions in the Bohai, Yellow Sea, and East China Sea},
  author={Li, Yanxiao and Feng, Jianlong and Yang, Xinming and Zhang, Shuwei and Chao, Guofang and Zhao, Liang and Fu, Hongli},
  journal={Frontiers in Marine Science},
  volume={11},
  pages={1381187},
  year={2024},
  publisher={Frontiers Media SA},
}

@article{rietbroek2016revisiting,
  title={Revisiting the contemporary sea-level budget on global and regional scales},
  author={Rietbroek, Roelof and Brunnabend, Sandra-Esther and Kusche, J{\"u}rgen and Schr{\"o}ter, Jens and Dahle, Christoph},
  journal={Proc. PNAS},
  volume={113},
  number={6},
  pages={1504--1509},
  year={2016},
  publisher={National Academy of Sciences}
}

@article{klos2019introducing,
  title={Introducing a vertical land motion model for improving estimates of sea level rates derived from tide gauge records affected by earthquakes},
  author={Klos, Anna and Kusche, J{\"u}rgen and Fenoglio-Marc, Luciana and Bos, Machiel S and Bogusz, Janusz},
  journal={GPS Solutions},
  volume={23},
  number={4},
  pages={102},
  year={2019},
  publisher={Springer}
}

@inproceedings{gupta_clustering_2011,
    address = {Berlin, Heidelberg},
    title = {Clustering with {Internal} {Connectedness}},
    booktitle = {{WALCOM}: {Algorithms} and {Computation}},
    publisher = {Springer},
    author = {Gupta, Neelima and Pancholi, Aditya and Sabharwal, Yogish},
    editor = {Katoh, Naoki and Kumar, Amit},
    year = {2011},
    pages = {158--169},
}

@article{sim2013survey,
  title={A survey on enhanced subspace clustering},
  author={Sim, Kelvin and Gopalkrishnan, Vivekanand and Zimek, Arthur and Cong, Gao},
  journal={Data mining and knowledge discovery},
  volume={26},
  number={2},
  pages={332--397},
  year={2013},
  publisher={Springer}
}

@article{kriegel2009clustering,
  title={Clustering high-dimensional data: A survey on subspace clustering, pattern-based clustering, and correlation clustering},
  author={Kriegel, Hans-Peter and Kr{\"o}ger, Peer and Zimek, Arthur},
  journal={TKDD},
  volume={3},
  number={1},
  pages={1--58},
  year={2009},
}

@inproceedings{agrawal1998automatic,
  title={Automatic subspace clust. of high dim. data for data mining appls.},
  author={Agrawal, Rakesh and Gehrke, Johannes and Gunopulos, Dimitrios and Raghavan, Prabhakar},
  booktitle={Proc. SIGMoD },
  pages={94--105},
  year={1998}
}

@article{aggarwal1999fast,
  title={Fast algorithms for projected clustering},
  author={Aggarwal, Charu C and Wolf, Joel L and Yu, Philip S and Procopiuc, Cecilia and Park, Jong Soo},
  journal={ACM SIGMoD record},
  volume={28},
  number={2},
  pages={61--72},
  year={1999},
  publisher={ACM New York, NY, USA}
}

@inproceedings{kailing2004density,
  title={Density-connected subspace clustering for high-dimensional data},
  author={Kailing, Karin and Kriegel, Hans-Peter and Kr{\"o}ger, Peer},
  booktitle={Proc. SDM},
  pages={246--256},
  year={2004},
  organization={SIAM}
}

@inproceedings{ArthurV06,
  author       = {David Arthur and
                  Sergei Vassilvitskii},
  title        = {How slow is the \emph{k}-means method?},
  booktitle    = {Proc. SoCG},
  pages        = {144--153},
  year         = {2006},
}

@inproceedings{wang2022convergence,
  title={Convergence and recovery guarantees of the k-subspaces method for subspace clustering},
  author={Wang, Peng and Liu, Huikang and So, Anthony Man-Cho and Balzano, Laura},
  booktitle={ICML},
  pages={22884--22918},
  year={2022},
}

@article{tarjan1984worst,
  title={Worst-case analysis of set union algorithms},
  author={Tarjan, Robert E and Van Leeuwen, Jan},
  journal={JACM},
  volume={31},
  number={2},
  pages={245--281},
  year={1984},
}

@article{elhamifar2013sparse,
  title={Sparse subspace clustering: Algorithm, theory, and applications},
  author={Elhamifar, Ehsan and Vidal, Ren{\'e}},
  journal={TPAMI},
  volume={35},
  number={11},
  pages={2765--2781},
  year={2013},
  publisher={IEEE}
}

@inproceedings{you2016oracle,
  title={Oracle based active set algorithm for scalable elastic net subspace clustering},
  author={You, Chong and Li, Chun-Guang and Robinson, Daniel P and Vidal, Ren{\'e}},
  booktitle={Proc. CVPR},
  pages={3928--3937},
  year={2016}
}

@inproceedings{YouRV16,
  author       = {Chong You and
                  Daniel P. Robinson and
                  Ren{\'{e}} Vidal},
  title        = {Scalable Sparse Subspace Clustering by Orthogonal Matching Pursuit},
  booktitle    = {CVPR},
  pages        = {3918--3927},
  year         = {2016},
}

@article{cai2020graph,
  title={Graph convolutional subspace clustering: A robust subspace clustering framework for hyperspectral image},
  author={Cai, Yaoming and Zhang, Zijia and Cai, Zhihua and Liu, Xiaobo and Jiang, Xinwei and Yan, Qin},
  journal={TGRS},
  volume={59},
  number={5},
  pages={4191--4202},
  year={2020},
}

@article{wang2022graph,
  title={Graph regularized spatial--spectral subspace clustering for hyperspectral band selection},
  author={Wang, Jun and Tang, Chang and Zheng, Xiao and Liu, Xinwang and Zhang, Wei and Zhu, En},
  journal={Neural Netw.},
  volume={153},
  pages={292--302},
  year={2022},
  publisher={Elsevier}
}

@article{guan2024contrastive,
  title={Contrastive multiview subspace clustering of hyperspectral images based on graph convolutional networks},
  author={Guan, Renxiang and Li, Zihao and Tu, Wenxuan and Wang, Jun and Liu, Yue and Li, Xianju and Tang, Chang and Feng, Ruyi},
  journal={TGRS},
  volume={62},
  pages={1--14},
  year={2024},
  publisher={IEEE}
}

@article{janheiko-connected-25,
  author       = {Jan Eube and
                  Heiko R{\"{o}}glin},
  title        = {New Algorithms and Hardness Results for Connected Clustering},
  journal      = {CoRR},
  volume       = {abs/2511.19085},
  year         = {2025},
}
